\newcommand\numberthis{\addtocounter{equation}{1}\tag{\theequation}}
\crefname{section}{\S}{\S}
\Crefname{section}{\S}{\S}
\crefname{appendix}{App.}{Apps.}
\Crefname{appendix}{App.}{Apps.}
\crefname{theorem}{Thm.}{Thms.}
\Crefname{theorem}{Thm.}{Thms.}
\crefname{proposition}{Prop.}{Props.}
\Crefname{proposition}{Prop.}{Props.}
\newcommand{\myparatightest}[1]{\noindent\textbf{{#1.}}~}
\newtheorem{theorem}{Theorem}
\newtheorem{corollary}{Corollary}
\newtheorem{proposition}{Proposition}
\newtheorem{lemma}{Lemma}
\newcommand{\Pb}{\mathbb{P}}
\newcommand{\Eb}{\mathbb{E}}
\newcommand{\Rb}{\mathbb{R}}
\newcommand{\Zb}{\mathbb{Z}}
\newcommand{\bra}[1]{\left( #1 \right)}
\newcommand{\brb}[1]{\left[ #1 \right]}
\newcommand{\brc}[1]{\left\{ #1 \right\}}
\newcommand{\brn}[1]{\left\lVert #1 \right\rVert}
\newcommand{\brinfinity}[1]{\left\lVert #1 \right\rVert_\infty}
\newcommand{\brf}[1]{\brn{ #1 }_{\text{F}}}
\newcommand{\brsp}[1]{\brn{ #1 }_{\text{sp}}}
\newcommand{\brspw}[1]{\brn{ \brmiyato{#1} }_{\text{sp}}}
\newcommand{\cin}{c_{in}}
\newcommand{\cout}{c_{out}}
\newcommand{\brspwshape}[1]{\brsp{ {#1}^{\cout \times \bra{ \cin k_w  k_h }}}}
\newcommand{\spwshape}[1]{ {#1}^{\cout \times \bra{ \cin k_w  k_h }}}
\newcommand{\brspwshapet}[1]{\brsp{ {#1}^{\cin \times \bra{ \cout k_w  k_h }}}}
\newcommand{\wwshape}[1]{{ {#1}^{\cout \times \bra{ \cin k_w  k_h }}}}
\newcommand{\wwshapet}[1]{{ {#1}^{\cin \times \bra{ \cout k_w  k_h }}}}
\newcommand{\brconvsp}[1]{\brn{ \tilde{#1} }_{\text{sp}}}
\newcommand{\brconv}[1]{ \tilde{#1} }
\newcommand{\brmiyatosp}[1]{\brn{ \hat{#1} }_{\text{sp}}}
\newcommand{\brmiyato}[1]{ \hat{#1} }
\newcommand{\brlip}[1]{\brn{ #1 }_{\text{Lip}}}
\newcommand{\Nc}{\mathcal{N}}
\newcommand{\var}{\text{Var}}
\newcommand{\inner}[2]{\langle #1, #2 \rangle}
\newcommand{\T}{\text{T}}
\newcommand{\diag}[1]{\text{diag}\bra{ #1 }}
\newcommand{\deq}{\overset{d}{=}}
\newcommand{\hyg}[1]{{}_{2}F_{1}\bra{  #1 }}
\newcommand{\poc}[2]{ \bra{ #1 }_{ #2 } }
\newcommand{\red}[1]{\textcolor{red}{#1}}
\newcommand{\nameshort}{BSN}
\newcommand{\namebssn}{Bidirectional Scaled Spectral Normalization}
\newcommand{\namebssnshort}{BSSN}
\newcommand{\namebsn}{Bidirectional Spectral Normalization}
\newcommand{\namebsnshort}{BSN}
\newcommand{\namessn}{Scaled Spectral Normalization}
\newcommand{\namessnshort}{SSN}
\newcommand{\cifar}{CIFAR10}
\newcommand{\stl}{STL10}
\newcommand{\celeba}{CelebA}
\newcommand{\ilsvrc}{ILSVRC2012}
\newcommand{\imagenet}{ImageNet}
\newcommand{\mnist}{MNIST}
\newcommand{\snw}{SN\textsubscript{w}}
\newcommand{\snconv}{SN\textsubscript{Conv}}
\newcommand{\uniform}{\text{Uniform}}
\newcommand{\trainingcurve}[6]{
    \begin{figure}[t]
        \centering
        \begin{minipage}{.45\textwidth}
            \centering
            \includegraphics[width=\linewidth]{{figure/sample_quality/#2/0.0,sn-double_curve/trainingcurve_d_lr-#4,g_lr-#3,n_dis-#5,inception_score_mean_NeurIPS2020}.pdf}
            \caption{Inception score in #1. The results are averaged over 5 random seeds. The hyper-parameters are: $\alpha_g=#3$, $\alpha_d=#4$, $n_{dis}=#5$.}
            \label{fig:#6-g#3-d#4-ndis#5-inception}
        \end{minipage}%
        ~~
        \begin{minipage}{.45\textwidth}
            \centering
            \includegraphics[width=\linewidth]{{figure/sample_quality/#2/0.0,sn-double_curve/trainingcurve_d_lr-#4,g_lr-#3,n_dis-#5,fid_NeurIPS2020}.pdf}
            \caption{FID in #1. The results are averaged over 5 random seeds. The hyper-parameters are: $\alpha_g=#3$, $\alpha_d=#4$, $n_{dis}=#5$.}
            \label{fig:#6-g#3-d#4-ndis#5-fid}
        \end{minipage}%
    \end{figure}
}
\newcommand{\trainingcurveinception}[6]{
	\begin{figure}[ht]
		\centering
		\includegraphics[width=0.7\linewidth]{{figure/sample_quality/#2/0.0,sn-double_curve/trainingcurve_d_lr-#4,g_lr-#3,n_dis-#5,inception_score_mean_NeurIPS2020}.pdf}
		\vspace{-0.3cm}
		\caption{Inception score in #1. The results are averaged over 5 random seeds, with  $\alpha_g=#3$, $\alpha_d=#4$, $n_{dis}=#5$.}
		\label{fig:#6-g#3-d#4-ndis#5-inception}
		\vspace{-0.3cm}
	\end{figure}
}
\newcommand{\trainingcurvefid}[6]{
    \begin{figure}[t]
            \centering
            \includegraphics[width=0.45\linewidth]{{figure/sample_quality/#2/0.0,sn-double_curve/trainingcurve_d_lr-#4,g_lr-#3,n_dis-#5,fid_NeurIPS2020}.pdf}
            \caption{FID in #1. The results are averaged over 5 random seeds. The hyper-parameters are: $\alpha_g=#3$, $\alpha_d=#4$, $n_{dis}=#5$.}
            \label{fig:#6-g#3-d#4-ndis#5-fid}
    \end{figure}
}
\icmltitlerunning{Why Spectral Normalization Stabilizes GANs: Analysis and Improvements}
\begin{document}

\twocolumn[
\icmltitle{Why Spectral Normalization Stabilizes GANs: Analysis and Improvements}

\icmlsetsymbol{equal}{*}

\begin{icmlauthorlist}
\icmlauthor{Zinan Lin}{cmu}
\icmlauthor{Vyas Sekar}{cmu}
\icmlauthor{Giulia Fanti}{cmu}
\end{icmlauthorlist}

\icmlaffiliation{cmu}{Department of Electrical and Computer Engineering, Carnegie Mellon University, Pittsburgh, PA 15213}

\icmlcorrespondingauthor{Zinan Lin}{zinanl@andrew.cmu.edu}
\icmlcorrespondingauthor{Vyas Sekar}{vsekar@andrew.cmu.edu}
\icmlcorrespondingauthor{Giulia Fanti}{gfanti@andrew.cmu.edu}

\icmlkeywords{}

\vskip 0.3in
]

\printAffiliationsAndNotice{}  %

\begin{abstract}
Spectral normalization (SN) \cite{miyato2018spectral} is a widely-used technique for improving the stability and sample quality  of Generative Adversarial Networks (GANs).
However, there is currently limited understanding of why SN is effective. 
In this work, we show that SN controls two important failure modes of GAN training: exploding and vanishing gradients.
Our proofs illustrate a (perhaps unintentional) connection with the successful LeCun initialization \cite{LeCun1998}.
This connection helps to explain why the most popular implementation of  SN for GANs \cite{miyato2018spectral} requires no hyper-parameter tuning, whereas stricter implementations of SN \cite{gouk2018regularisation,farnia2018generalizable} have poor empirical performance out-of-the-box.
Unlike LeCun initialization which only controls gradient vanishing at the beginning of training, SN preserves this property throughout training.
Building on this theoretical understanding, we propose a new spectral normalization technique: \namebssn{} (\namebssnshort{}), which incorporates insights from later improvements to LeCun initialization: Xavier initialization \cite{glorot2010understanding} and Kaiming initialization \cite{he2015delving}. 
Theoretically, we show that \namebssnshort{} gives better gradient control than SN. 
Empirically, we demonstrate  that it outperforms SN in sample quality and training stability on several benchmark datasets.%
\end{abstract}

\section{Introduction}
Generative adversarial networks (GANs) are state-of-the-art deep generative models, perhaps best known for their ability to produce high-resolution, photorealistic images \cite{goodfellow2014generative}.
The objective of GANs is to produce random samples from a target data distribution, given only access to an initial set of training samples.
This is achieved by learning two functions: a generator $G$, which maps random input noise to a generated sample, and a discriminator $D$, which tries to classify input samples as either real (i.e., from the training dataset) or fake (i.e., produced by the generator).
In practice, these functions are implemented by deep neural networks (DNNs), 
and the competing generator and discriminator are trained in an alternating process known as \emph{adversarial training}. 
Theoretically, given enough data and model capacity, GANs  converge to the true underlying data distribution \cite{goodfellow2014generative}. 

Although GANs have been very successful in improving the sample quality of data-driven generative models \cite{karras2017progressive,brock2018large}, their adversarial training also contributes to instability. %
That is, small hyper-parameter changes and even randomness in the optimization can cause training to fail.
Many approaches have been proposed for improving the stability of GANs, including different architectures \cite{radford2015unsupervised,karras2017progressive,brock2018large}, loss functions \cite{arjovsky2017principled,arjovsky2017wasserstein,gulrajani2017improved,wei2018improving}, and various types of regularizations/normalizations \cite{miyato2018spectral,brock2016neural,salimans2016weight}.
One of the most successful proposals to date is  called \emph{spectral normalization} (SN) \cite{miyato2018spectral,gouk2018regularisation,farnia2018generalizable}.
SN forces each layer of the generator to have unit spectral norm during training. 
This has the effect of controlling the Lipschitz constant of the discriminator, which is empirically observed to improve the stability of GAN training \cite{miyato2018spectral}. 

Despite the successful applications of SN \cite{brock2018large,lin2019infogan,zhang2018self,jolicoeur2018relativistic,yu2019free,miyato2018cgans,lee2018stochastic}, 
to date, it remains unclear precisely why this specific normalization is so effective. 

In this paper, we show that SN controls two important failure modes of GAN training: exploding gradients and vanishing gradients. 
These problems are well-known to cause instability in GANs  \cite{arjovsky2017wasserstein,brock2018large},  
leading either to bad local minima or stalled training prior to convergence. 
We make three primary contributions:

\noindent \textit{(1) Analysis of why SN avoids exploding gradients (\cref{sec:explosion}). } 
Poorly-chosen architectures and hyper-parameters, as well as randomness during training, can amplify the effects of large gradients on training instability, ultimately leading to generalization error in the  learned discriminator. 
We theoretically prove that SN imposes an upper bound on gradients during GAN training, mitigating these effects. 

\noindent \textit{(2) Analysis of why SN avoids vanishing gradients (\cref{sec:vanishing}). } 
Small gradients during training are known to cause GANs (and other DNNs) to converge to bad models \cite{LeCun1998,arjovsky2017wasserstein}. 
The well-known LeCun initialization, first proposed over two decades ago, mitigates this effect by carefully choosing the variance of the initial weights \cite{LeCun1998}.
We prove theoretically that SN controls the variance of weights in a way that closely parallels  LeCun initialization.
Whereas LeCun initialization only controls the gradient vanishing problem at the beginning of training, we show empirically that SN preserves this property throughout training. 
Our analysis also explains why a strict implementation of SN \cite{farnia2018generalizable} has poor out-of-the-box performance on GANs and requires additional %
tuning to avoid the vanishing gradient problem, whereas the implementation of  SN in  \cite{miyato2018spectral} requires no %
tuning. 

\noindent \textit{(3) Improving SN with the above theoretical insights (\cref{sec:approach}).} Given this new understanding of the connections between SN and LeCun initialization, we propose  \namebssn{} (\namebssnshort{}), a new normalization technique that combines two key insights (\cref{fig:summary}): 
(a) It introduces a novel bidirectional spectral normalization inspired by \emph{Xavier initialization}, which improved on LeCun initialization by controlling not only the variances of internal outputs, but also the variance of backpropagated gradients  \cite{glorot2010understanding}. 
We theoretically prove that \namebssnshort{} mimics Xavier initialization to give better gradient control than SN. 
(b) \namebssnshort{} introduces a new scaling of weights inspired by \emph{Kaiming initialization}, a newer initialization technique that has better performance in practice \cite{he2015delving}. 
We show that \namebssnshort{} achieve better sample quality and training stability than SN on several benchmark datasets, including \cifar{}, \stl{}, \celeba{}, and \imagenet{}.%

\begin{figure}[t]
	\centering
	\includegraphics[width=1\linewidth]{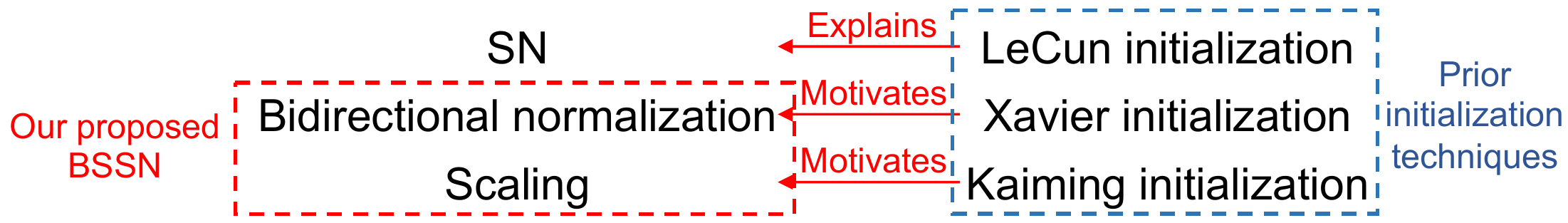}
	\vspace{-0.2cm}
	\caption{The interesting connections we find between spectral normalizations and prior initialization techniques: (1) The insights from LeCun initialization \cite{LeCun1998} can help explain why SN avoids exploding gradients; (2) Motivated from newer initialization techniques \cite{glorot2010understanding,he2015delving}, we proposed \namebssnshort{} to further improve SN.}
	\label{fig:summary}
	\vspace{-0.5cm}
\end{figure}

Note that better gradient control should \emph{not} be the only reason behind the success of SN (more discussions in \cref{sec:discussion}). However, our theoretical results do show a connection between gradient control, initialization techniques, and spectral normalization. Empirical results of the two improvements we propose demonstrate the practical value of this new theoretical understanding.

\section{Background and Preliminaries}
\label{sec:model}

The instability of GANs is believed to be predominantly caused by poor discriminator learning \cite{arjovsky2017principled,salimans2016improved}. 
We therefore focus in this work on the discriminator, and the effects of SN on discriminator learning.
We adopt the same model as \cite{miyato2018spectral}. Consider a discriminator with $L$ internal layers:
\begin{gather}
\scalebox{0.95}{
$ D_\theta (x) = a_L \circ l_{w_L} \circ a_{L-1} \circ l_{w_{L-1}} \circ \ldots \circ a_1 \circ l_{w_1} (x) \label{eq:d} $}
\end{gather}
where $x$ denotes the input to the discriminator and $\theta=\brc{w_1,w_2,...,w_L }$ the weights; $a_i \bra{i=1,...,L-1}$ is the activation function in the $i$-th layer, which is usually element-wise ReLU or leaky ReLU in GANs \cite{goodfellow2014generative}.
 $a_L$ is the activation function for the last layer, which is sigmoid for the vanilla GAN \cite{goodfellow2014generative} and identity for WGAN-GP \cite{gulrajani2017improved}; $l_{w_i}$ is the linear transformation in $i$-th layer, which is usually fully-connected or a convolutional neural network \cite{goodfellow2014generative,radford2015unsupervised}.
Like prior work on the theoretical analysis of (spectral) normalization \cite{miyato2018spectral,farnia2018generalizable,santurkar2018does}, we do not model bias terms.

\myparatightest{Lipschitz regularization and spectral normalization}
Prior work has shown that regularizing the Lipschitz constant of the discriminator $\brlip{D_\theta}$ improves the stability of GANs \cite{arjovsky2017wasserstein, gulrajani2017improved,wei2018improving}. For example, WGAN-GP \cite{gulrajani2017improved} adds a gradient penalty $\bra{\brn{\nabla D_\theta(\tilde{x})} -1}^2$ to the loss function, where $\tilde{x}=\alpha x + (1-\alpha) G(z)$ and $\alpha\sim \uniform\bra{0,1}$ to ensure that the Lipschitz constant of the discriminator is bounded by 1.  

Spectral normalization (SN) takes a different approach. 
For fully connected layers (i.e., $l_{w_i}(x)=w_ix$), it regularizes the weights $w_i$ to ensure that spectral norm $\brsp{w_i}=1$ for all $i\in[1, L]$, where the spectral norm $\brsp{w_i}$ is defined as the largest singular value of $w_i$.
This bounds the Lipschitz constant of the discriminator since $\brlip{D_\theta}\leq \prod_{i=1}^L \brlip{l_{w_i}} \cdot  \prod_{i=1}^{L} \brlip{a_i} \leq \prod_{i=1}^{L} \brsp{w_i} \cdot  \prod_{i=1}^{L} \brlip{a_i} \leq 1$, as $\brlip{l_{w_i}} \leq \brsp{w_i}$ and $\brlip{a_i} \leq 1$ for networks with (leaky) ReLU as activation functions for the internal layers and identity/sigmoid as the activation function for the last layer \cite{miyato2018spectral}. 
Prior work has theoretically  connected the generalization gap of neural networks to the product of the spectral norms of the layers \cite{bartlett2017spectrally,neyshabur2017pac}. 
These  insights led to multiple implementations of spectral normalization %
\cite{farnia2018generalizable,gouk2018regularisation,yoshida2017spectral,miyato2018spectral}, with the implementation of \cite{miyato2018spectral} achieving particular success on GANs.
SN can be viewed as a special case of more general techniques for enhancing stability of neural network training by controlling the spectrum of the network's input-output Jacobian \cite{pennington2017resurrecting}, e.g., through techniques like Jacobian clamping \cite{odena2018generator}, which constrains the values of the maximum and minimum singular values in the generator during training.

In practice, spectral normalization \cite{farnia2018generalizable, miyato2018spectral} is implemented by dividing the weight matrix $w_i$ by its spectral norm: $\frac{w_i}{u_i^Tw_iv_i}$, where $u_i$ and $v_i$ are the left/right singular vectors of $w_i$ corresponding to its largest singular value.
As observed by Gouk et al. \cite{gouk2018regularisation}, there are two approaches in the SN literature for instantiating the matrix $w_i$ for convolutional neural networks (CNNs).
In a CNN, since convolution is a linear operation, convolutional layers can equivalently be written as a multiplication by an expanded weight matrix $\tilde w_i$ that is derived from the raw weights $w_i$. %
Hence in principle, spectral normalization should normalize each convolutional layer by  $\brsp{\tilde w_i}$ \cite{gouk2018regularisation,farnia2018generalizable}. 
We call this canonical normalization  \snconv{} as it controls the spectral norm of the convolution layer.

However, the spectral normalization 
that is known to outperform other regularization techniques and improves training stability for GANs \cite{miyato2018spectral}, 
which we call \snw{}, does not implement SN in a strict sense.
Instead, it
uses $\brspwshape{w_i}$; that is, it first reshapes the convolution kernel $w_i\in \Rb^{c_{out}  c_{in} k_w  k_h}$ into a matrix $\brmiyato{w_i}$ of shape $c_{out}\times \bra{c_{in} k_w k_h}$, and then normalizes with the spectral norm $\brmiyatosp{w_i}$, where $c_{in}$ is the number of input channels, $c_{out}$ is the number of output channels, $k_w$ is the kernel width, and $k_h$ is the kernel height. 
Miyato et al. showed that their implementation implicitly penalizes $w_i$ from being too sensitive in one specific direction \cite{miyato2018spectral}. 
However, this does not explain why \snw{} is more stable than other Lipschitz regularization techniques,
and as observed in \cite{gouk2018regularisation}, it is unclear how \snw{} relates to \snconv{}.
Despite this, \snw{} has empirically been immensely successful in stabilizing the training of GANs \cite{brock2018large,lin2019infogan,zhang2018self,jolicoeur2018relativistic,yu2019free,miyato2018cgans,lee2018stochastic}. 
Even more puzzling, we show in \cref{sec:vanishing} that the canonical approach \snconv{}  has comparatively poor out-of-the-box performance when training GANs. 

Hence, two questions arise: (1) Why is SN so successful at stabilizing the training of GANs? (2) Why is \snw{} proposed by \cite{miyato2018spectral} so much more effective than the canonical \snconv{}?

In this work, we show that both questions are related to two well-known phenomena: vanishing and exploding  gradients.
These terms describe a problem in which gradients either grow or shrink rapidly during training \cite{bengio1994learning,pascanu2012understanding,pascanu2013difficulty,bernstein2020distance}, and they are known to be closely related to the instability of GANs \cite{arjovsky2017principled,brock2018large}. 
We provide an example to illustrate how vanishing or exploding gradients cause training instability in GANs in \cref{app:exp-vani}.

\section{Exploding Gradients}
\label{sec:explosion}

In this section, we show that spectral normalization prevents gradient explosion by bounding the gradients of the discriminator.
Moreover, we show that the common choice to normalize all layers equally achieves the tightest upper bound for a restricted class of discriminators.
We use $\theta\in \Rb^d$ to denote a vector containing all elements in $\brc{w_1,...,w_L}$.
In the following analysis, we assume linear transformations are fully-connected layers $l_{w_i}(x) = w_ix$ as in \cite{miyato2018spectral}, though the same analysis can be applied  to convolutional layers.
Following prior work on the theoretical analysis of (spectral) normalization \cite{miyato2018spectral,farnia2018generalizable,santurkar2018does}, we assume no bias in the network (i.e., \cref{eq:d}) for simplicity.

To highlight the effects of the spectral norm of each layer on the gradient and simplify the exposition, 
we will compute gradients with respect to $w_i'=\frac{w_i}{u_i^Tw_iv_i}$ in the following discussion. 
In reality, gradients are computed with respect to $w_i$; we defer this discussion to \cref{app:gradient}, where we show the relevant extension.

\myparatightest{How SN controls exploding gradients}
The following proposition shows that under this simplifying assumption, spectral normalization controls the magnitudes of the gradients of the discriminator with respect to $\theta$.
Notice that simply controlling the Lipschitz constant of the discriminator (e.g., as in WGAN \cite{arjovsky2017principled}) does not imply this property; it instead ensures small (sub)gradients with respect to the input, $x$.

\begin{proposition}[Upper bound of gradient's Frobenius norm for spectral normalization]\label{thm:gradient-upperbound-sn}
	If $\brsp{w_i} \leq 1$ for all $i\in [1, L]$, then we have
	$\brf{\nabla_{w_t} D_{\theta}(x)}  \leq \brn{x} \prod_{i=1}^{L}\brlip{a_i},$
	and the norm of the overall gradient can be bounded by
	$ \brf{\nabla_{\theta} D_{\theta}(x)} \leq \sqrt{L}\brn{x} \prod_{i=1}^{L}\brlip{a_i} \;.$
\end{proposition}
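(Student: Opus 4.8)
The plan is to run backpropagation explicitly and reduce each per-layer gradient to a rank-one matrix whose Frobenius norm factors cleanly. Write the forward pass as $x_0 = x$, $z_i = w_i x_{i-1}$, and $x_i = a_i(z_i)$, so that the scalar output is $D_\theta(x) = x_L$. Since the weight $w_t$ enters the network only through the single pre-activation $z_t = w_t x_{t-1}$, the chain rule gives $\nabla_{w_t} D_\theta(x) = \delta_t\, x_{t-1}^{\T}$, where $\delta_t := \nabla_{z_t} D_\theta(x)$ is the backpropagated pre-activation gradient. As this is an outer product, its Frobenius norm factors exactly: $\brf{\nabla_{w_t} D_\theta(x)} = \brn{\delta_t}\,\brn{x_{t-1}}$. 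It therefore suffices to bound the forward activation $\brn{x_{t-1}}$ and the backward signal $\brn{\delta_t}$ separately.

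For the forward factor I would induct on $i$. The internal activations are (leaky) ReLU with $a_i(0) = 0$, so each is $\brlip{a_i}$-Lipschitz with $\brn{a_i(z)} \leq \brlip{a_i}\brn{z}$; combined with $\brn{w_i x_{i-1}} \leq \brsp{w_i}\brn{x_{i-1}} \leq \brn{x_{i-1}}$ (using $\brsp{w_i} \leq 1$), this yields $\brn{x_i} \leq \brlip{a_i}\brn{x_{i-1}}$ and hence $\brn{x_{t-1}} \leq \brn{x}\prod_{i=1}^{t-1}\brlip{a_i}$. For the backward factor I would induct downward from $L$ using $\delta_i = \diag{a_i'(z_i)}\, w_{i+1}^{\T}\delta_{i+1}$. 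The Jacobian of an element-wise activation is diagonal, so $\brsp{\diag{a_i'(z_i)}} = \max_j \brd{a_i'(z_{i,j})} \leq \brlip{a_i}$, and with $\brsp{w_{i+1}} \leq 1$ this gives $\brn{\delta_i} \leq \brlip{a_i}\brn{\delta_{i+1}}$. Starting from $\brn{\delta_L} = \brd{a_L'(z_L)} \leq \brlip{a_L}$, the recursion telescopes to $\brn{\delta_t} \leq \prod_{i=t}^{L}\brlip{a_i}$.

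Multiplying the two estimates merges the partial products into $\prod_{i=1}^{L}\brlip{a_i}$, giving the per-layer bound $\brf{\nabla_{w_t} D_\theta(x)} \leq \brn{x}\prod_{i=1}^{L}\brlip{a_i}$ uniformly in $t$, which is the first claim. For the overall gradient, $\nabla_\theta D_\theta(x)$ is the concatenation of the $L$ per-layer gradients, so $\brf{\nabla_\theta D_\theta(x)}^2 = \sum_{t=1}^{L}\brf{\nabla_{w_t} D_\theta(x)}^2$; bounding each of the $L$ summands by the same quantity and taking the square root produces the $\sqrt{L}$ factor.

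The steps are elementary, so the work lies in the bookkeeping rather than in any single hard inequality. The points to get right are that the per-layer gradient is genuinely rank one (which relies on $w_t$ appearing in exactly one layer), and that the two factorizations used — Frobenius norm of an outer product and spectral norm of a diagonal Jacobian — are exact, so that no slack accumulates beyond the deliberate $\brsp{w_i}\leq 1$ and $\brlip{a_i}\leq 1$ relaxations. I would also verify that the additive-constant-free bound $\brn{a_i(z)}\leq\brlip{a_i}\brn{z}$ is invoked only for the internal layers $i \in \{1,\dots,L-1\}$, where $a_i(0)=0$ holds; this is automatic since $\brn{x_{t-1}}$ never involves $a_L$.
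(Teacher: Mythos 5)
Your proof is correct and follows essentially the same route as the paper's: both factor the per-layer gradient as a rank-one outer product of the backpropagated signal and the forward activation (your $\delta_t\, x_{t-1}^{\T}$ is exactly the paper's $\nabla_{o_l^t(x)} D_\theta(x)\cdot \bra{o_a^{t-1}(x)}^{\T}$), bound each factor by forward and backward induction, and then sum the $L$ identical per-layer bounds to get the $\sqrt{L}$ factor. The only cosmetic difference is that the paper carries general spectral norms $\brsp{w_i}$ through the inductions and specializes to $\brsp{w_i}\leq 1$ at the end, whereas you impose the normalization from the start.
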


\emph{(Proof in \cref{app:proof-gradient-upperbound}).}
	Note that under the assumption that internal activation functions are ReLU or leaky ReLU, if the activation function for the last layer is identity (e.g., for WGAN-GP \cite{gulrajani2017improved}), the above bounds can be simplified to
	$\brf{\nabla_{w_t} D_{\theta}(x)}  \leq \brn{x} \text{    and    }  \brn{\nabla_{\theta} D_{\theta}(x)} \leq \sqrt{L}\brn{x}$,
	and if the activation for the last layer is sigmoid (e.g., for vanilla GAN 
\begin{figure}[t]
	\centering
	\includegraphics[width=0.7\linewidth]{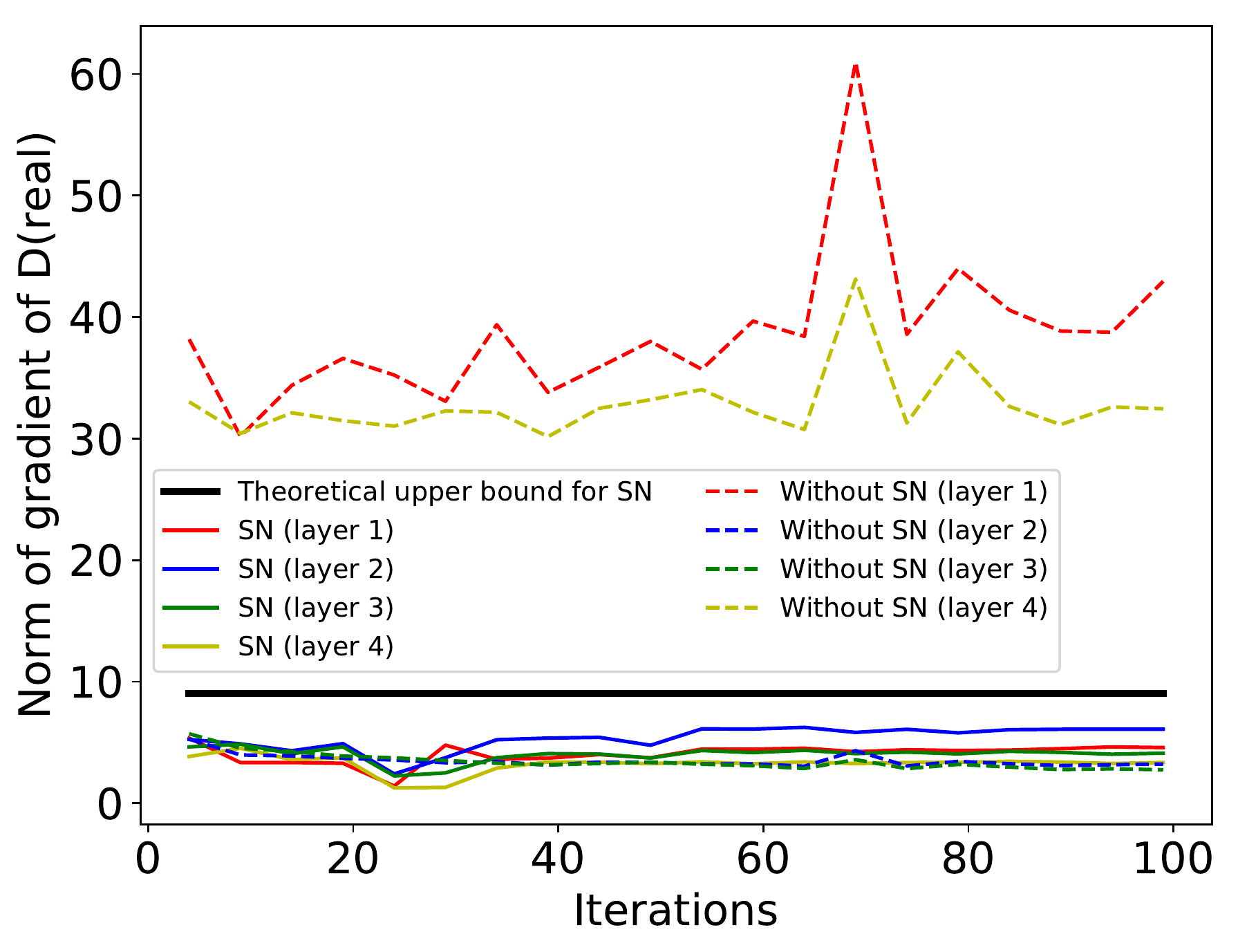}
	\vspace{-0.5cm}
	\caption{Gradient norms of each discriminator layer in \mnist{}. }
	\label{fig:gnorm-mnist}
	\vspace{-0.3cm}
\end{figure}
	\cite{goodfellow2014generative}), the above bounds become
	$\brf{\nabla_{w_t} D_{\theta}(x)}  \leq 0.25 \brn{x}$ and $\brn{\nabla_{\theta} D_{\theta}(x)} \leq 0.25 \sqrt{L}\brn{x}$.
	A comparable bound can also be found to limit the norm of the  Hessian, which we defer to \cref{app:hessian}.%

The bound in \cref{thm:gradient-upperbound-sn} has a significant effect in practice. 
\cref{fig:gnorm-mnist} shows the norm of the gradient for each layer of a GAN trained on MNIST with and without spectral normalization. 
Without spectral normalization, some layers have extremely large gradients throughout training, which makes the overall gradient large.
With spectral normalization, the gradients of all layers are upper bounded as shown in \cref{thm:gradient-upperbound-sn}. 
We see similar results in other datasets and network architectures (\cref{app:gradient-norm}).

\myparatightest{Optimal spectral norm allocation}
Common implementations of SN advocate setting the spectral norm of \emph{each layer} to the same value \cite{miyato2018spectral,farnia2018generalizable}. 
However, the following proposition states that we can set the spectral norms of different layers to different constants, without changing the network's behavior on the input samples, as long as the \emph{product} of the spectral norm bounds is the same. 

\begin{proposition}
	\label{thm:scaling}
	For any discriminator 
	$D_\theta=a_L \circ l_{w_L} \circ a_{L-1} \circ l_{w_{L-1}} \circ \ldots \circ a_1 \circ l_{w_1}$ 
	and 
	$D_\theta' = a_L \circ l_{c_L\cdot w_L} \circ a_{L-1} \circ l_{c_{L-1}\cdot w_{L-1}} \circ \ldots \circ a_1 \circ l_{c_1\cdot w_1}$
	where the internal activation functions $\brc{a_i}_{i=1}^{L-1}$ are ReLU or leaky ReLU, and positive constant scalars $c_1,...,c_L$ satisfy that $\prod_{i=1}^{L}c_i=1$, we have
	\begin{eqnarray}
	 &D_\theta(x)=D_\theta'(x) \quad \forall x \text{ and } \nonumber\\
	 &\frac{\partial^n D_\theta(x)}{\partial x^n} = \frac{\partial^n D_\theta'(x)}{\partial x^n }  
	  \forall x, \forall n\in \Zb^+\;.\nonumber
	 \end{eqnarray}
\end{proposition}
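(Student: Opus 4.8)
The plan is to exploit the \emph{positive homogeneity} of ReLU and leaky ReLU: for any activation $a$ of this form and any scalar $c > 0$, we have $a(cz) = c\, a(z)$ elementwise. This holds because $\mathrm{ReLU}(cz) = \max(0, cz) = c\max(0,z)$, and for leaky ReLU with negative slope $\alpha \in (0,1)$, $a(cz) = \max(cz, \alpha cz) = c\max(z, \alpha z) = c\,a(z)$, using $c > 0$ in both cases. The positivity of the scalars $c_i$ is exactly what makes this step valid.

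With this in hand, I would track the intermediate representations of the two networks and show by induction that scaling layer $i$ by $c_i$ simply scales the $i$-th post-activation by the cumulative product $\prod_{j=1}^i c_j$. Writing $h_0 = x$ and $h_i = a_i(w_i h_{i-1})$ for the original network, and $h_i'$ for the scaled network, the base case is $h_1' = a_1(c_1 w_1 x) = c_1\, a_1(w_1 x) = c_1 h_1$. For the inductive step, assuming $h_{i-1}' = \left(\prod_{j=1}^{i-1} c_j\right) h_{i-1}$, linearity of $l_{c_i w_i}$ pushes the accumulated factor through the weight matrix, and positive homogeneity pushes it through $a_i$, giving $h_i' = \left(\prod_{j=1}^{i} c_j\right) h_i$.

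At the final layer, the pre-activation of the scaled network is $\left(\prod_{j=1}^{L} c_j\right) w_L h_{L-1} = w_L h_{L-1}$, since by hypothesis $\prod_{j=1}^{L} c_j = 1$; applying $a_L$ then yields $D_\theta'(x) = a_L(w_L h_{L-1}) = D_\theta(x)$. This establishes the first claim for every $x$. The claim on derivatives then follows immediately: since $D_\theta$ and $D_\theta'$ agree as functions on the whole input space, they are the \emph{same} function, so $\frac{\partial^n D_\theta(x)}{\partial x^n} = \frac{\partial^n D_\theta'(x)}{\partial x^n}$ wherever the derivatives exist, for every $n \in \Zb^+$.

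The argument is essentially bookkeeping once positive homogeneity is isolated, so there is no single deep obstacle. The point requiring care is the reliance on $c_i > 0$: negative scalars would break homogeneity for (leaky) ReLU, and the cancellation at the last layer crucially needs the product constraint $\prod_{i=1}^{L} c_i = 1$ rather than any weaker condition. A secondary subtlety is that ReLU is not differentiable at the origin, but this does not affect the derivative claim, which is a statement about two identical functions and therefore holds (in the classical sense where defined, and for subgradients otherwise) without any additional smoothness argument.
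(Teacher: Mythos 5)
Your proof is correct and follows essentially the same route as the paper's: exploit linearity of $l_{w_i}$ and positive homogeneity of the internal (leaky) ReLU activations to push the scalars $c_i$ forward through the network, then use $\prod_{i=1}^{L} c_i = 1$ to cancel the accumulated factor, with the derivative claim following because the two functions coincide identically. If anything, your version is slightly more careful than the paper's, since you cancel the product at the pre-activation of the last layer rather than pulling it outside $a_L$ (which need not be homogeneous, e.g.\ sigmoid), though this makes no difference once the product equals $1$.
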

\textit{(Proof in \cref{app:proof-scaling}).} 
Given this observation, it is natural to ask if there is any benefit to setting the spectral norms of each layer equal.
It turns out that the answer is yes, under some assumptions that appear to approximately hold in practice. 
Let 
\begin{gather}
\scalebox{0.77}{$
\begin{aligned}
\mathcal D \triangleq \bigg \{&D_\theta = a_L \circ l_{w_L} \circ  \ldots \circ a_1 \circ l_{w_1} ~:~\frac{\brf{\nabla_{w_i}D_\theta(x)}}{\brf{\nabla_{w_j}D_\theta(x)}}  =  \frac{\brsp{w_j}}{\brsp{w_i}}, \nonumber\\
&a_i\in \{\text{ReLU, leaky ReLU}\} \;\forall i,j\in [1,L]\bigg \}.
\end{aligned}\numberthis$}\label{eq:setD}
\end{gather}
This intuitively describes the set of all discriminators for which scaling up the weight of one layer proportionally increases the gradient norm of all other layers; the definition of this set is motivated by our upper bound on the gradient norm (\cref{app:proof-gradient-upperbound}).
The following theorem shows that when optimizing over set $\mathcal D$, choosing every layer to have the same spectral norm gives the smallest possible gradient norm, for a given set of parameters.

\begin{theorem}
	\label{thm:scaling-upperbound-practical}
	Consider a given set of discriminator parameters $\theta=\brc{w_1,...,w_{L}}$. 
	For a vector $c=\brc{c_1,\ldots,c_L}$, we denote
	 $\theta_c \triangleq \brc{c_tw_t}_{t=1}^L$.
	 Let $\lambda_\theta=\prod_{i=1}^L \brsp{w_i}^{1/L}$ denote the geometric mean of the spectral norms of the weights. 
	 Then we have 
	 \begin{equation}
	 \begin{aligned}
	&\quad\left \{\frac{\lambda_\theta }{ \brsp{w_1}},\ldots,\frac{\lambda_\theta }{ \brsp{w_L}}\right \} \\
	&= 
	\arg \min_{c:\; D_{\theta_c} \in \mathcal D,\; \prod_{i=1}^L c_i=1,\; c_i\in \Rb^+}  \quad  \brf{\nabla_{\theta_c} D_{\theta_c}(x)} \nonumber
	 \end{aligned}
	 \end{equation}
\end{theorem}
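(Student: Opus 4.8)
The plan is to turn this into a per-layer convex optimization in $c$ by exploiting the positive homogeneity of the (leaky) ReLU activations, the same property that drives \cref{thm:scaling}. Write $g_t \triangleq \brf{\nabla_{w_t} D_\theta(x)}$ for the base per-layer gradient norms. Since the full gradient $\nabla_{\theta_c} D_{\theta_c}(x)$ is just the concatenation of the per-layer matrix gradients, $\brf{\nabla_{\theta_c} D_{\theta_c}(x)}^2 = \sum_{t=1}^L \brf{\nabla_{c_t w_t} D_{\theta_c}(x)}^2$, so minimizing the Frobenius norm is equivalent to minimizing this sum. Everything then reduces to understanding how each summand depends on $c$.

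The crux is the scaling identity $\brf{\nabla_{c_t w_t} D_{\theta_c}(x)} = g_t / c_t$. To prove it, fix $x$ and write the per-layer gradient as the outer product $\delta_t h_{t-1}^\T$, where $h_{t-1}$ is the forward activation entering layer $t$ and $\delta_t$ is the backpropagated error, so $g_t = \brn{\delta_t}\,\brn{h_{t-1}}$. Using $a_i(c z) = c\, a_i(z)$ for $c>0$ (and the fact that the ReLU activation pattern, hence each derivative $a_i'$, is unchanged under positive scaling), one shows by induction that at $\theta_c$ the forward activations scale as $h_{t-1}^{(c)} = (\prod_{i=1}^{t-1} c_i)\, h_{t-1}$ and the errors as $\delta_t^{(c)} = (\prod_{i=t+1}^{L} c_i)\, \delta_t$. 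Multiplying the two factors and using $\prod_{i=1}^L c_i = 1$ gives $\brf{\nabla_{c_t w_t} D_{\theta_c}(x)} = (\prod_{i=1}^L c_i / c_t)\, g_t = g_t / c_t$. The main work, and the main obstacle, lives here: carefully propagating the scalars through both the forward and backward passes and handling the non-differentiability of ReLU via its locally constant activation pattern (interpreting gradients as subgradients where needed).

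Next I would simplify the membership constraint. Substituting the scaling identity into the defining ratio of $\mathcal D$ evaluated at $\theta_c$, and using $\brsp{c_i w_i} = c_i \brsp{w_i}$, the quantity $\brf{\nabla_{c_i w_i} D_{\theta_c}(x)}\cdot \brsp{c_i w_i} = g_i \brsp{w_i}$ turns out to be independent of $c$. Hence $D_{\theta_c}\in\mathcal D$ holds for every feasible $c$ exactly when $D_\theta \in \mathcal D$ — the case in which the feasible set is nonempty, which I assume (the ratio in the definition of $\mathcal D$ is ill-defined only in the degenerate situation of a vanishing layer gradient, which I would exclude). Membership therefore forces $g_t \brsp{w_t} \equiv K$ to be constant in $t$, i.e. $g_t = K/\brsp{w_t}$.

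Finally I would solve the reduced problem. The objective becomes $K^2 \sum_{t=1}^L \bra{c_t \brsp{w_t}}^{-2}$, to be minimized over $\prod_{t=1}^L c_t = 1$ with $c_t>0$. Substituting $d_t \triangleq c_t \brsp{w_t}$ converts the constraint into $\prod_{t=1}^L d_t = \prod_{t=1}^L \brsp{w_t} = \lambda_\theta^L$, so the task is to minimize $\sum_t d_t^{-2}$ subject to a fixed product. By AM--GM (or a one-line Lagrange-multiplier computation, using convexity of $d\mapsto d^{-2}$ on $\Rb^+$), the unique minimizer has all $d_t$ equal, namely $d_t = \lambda_\theta$, giving $c_t = \lambda_\theta/\brsp{w_t}$. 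A quick check confirms feasibility, $\prod_t c_t = \lambda_\theta^L / \prod_t \brsp{w_t} = 1$, which matches the claimed optimizer and shows that the optimal scaling makes every layer share the common spectral norm $\lambda_\theta$.
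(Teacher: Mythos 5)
Your proposal is correct, and its final step is the same as the paper's: write each per-layer gradient norm as a constant divided by $c_t\brsp{w_t}$, minimize $\sum_t \bra{c_t\brsp{w_t}}^{-2}$ under $\prod_t c_t = 1$ by AM--GM, and read off $c_t = \lambda_\theta/\brsp{w_t}$ from the equality condition. The difference is what happens before that step. The paper simply defines $Q=\brf{\nabla_{w_i'}D_{\theta'}(x)}\brsp{w_i'}$ at the rescaled parameters $\theta'$ (constant over $i$ by membership in $\mathcal D$) and applies AM--GM; as written, this leaves open the possibility that $Q$ varies with $c$, in which case the AM--GM lower bound itself would move with $c$ and attaining equality at one particular $c$ would not by itself identify the argmin. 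Your scaling lemma --- that positive homogeneity of the internal activations, the unchanged activation pattern, and $\prod_i c_i = 1$ force $\brf{\nabla_{c_t w_t}D_{\theta_c}(x)} = g_t/c_t$ --- closes exactly this gap: it shows $Q = g_t\brsp{w_t}$ is invariant in $c$, and as a bonus shows the feasibility constraint $D_{\theta_c}\in\mathcal D$ is itself $c$-invariant (equivalent to $D_\theta\in\mathcal D$), so the feasible set is either empty or all of $\brc{c : \prod_i c_i = 1,\, c_i\in\Rb^+}$. Two small points to tighten: when the last activation $a_L$ is sigmoid rather than identity, the backward-error scaling $\delta_t^{(c)} = \bra{\prod_{i=t+1}^{L}c_i}\delta_t$ relies on the constraint $\prod_i c_i=1$ to keep the input to $a_L'$ unchanged (not only the final multiplication step where you invoke it), and your exclusion of vanishing layer gradients is harmless since such $\theta$ cannot satisfy the ratio condition defining $\mathcal D$ anyway. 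Overall your write-up is a more complete version of the paper's argument rather than a different one.
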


\begin{figure}[t]
	\centering
	\includegraphics[width=0.7\linewidth]{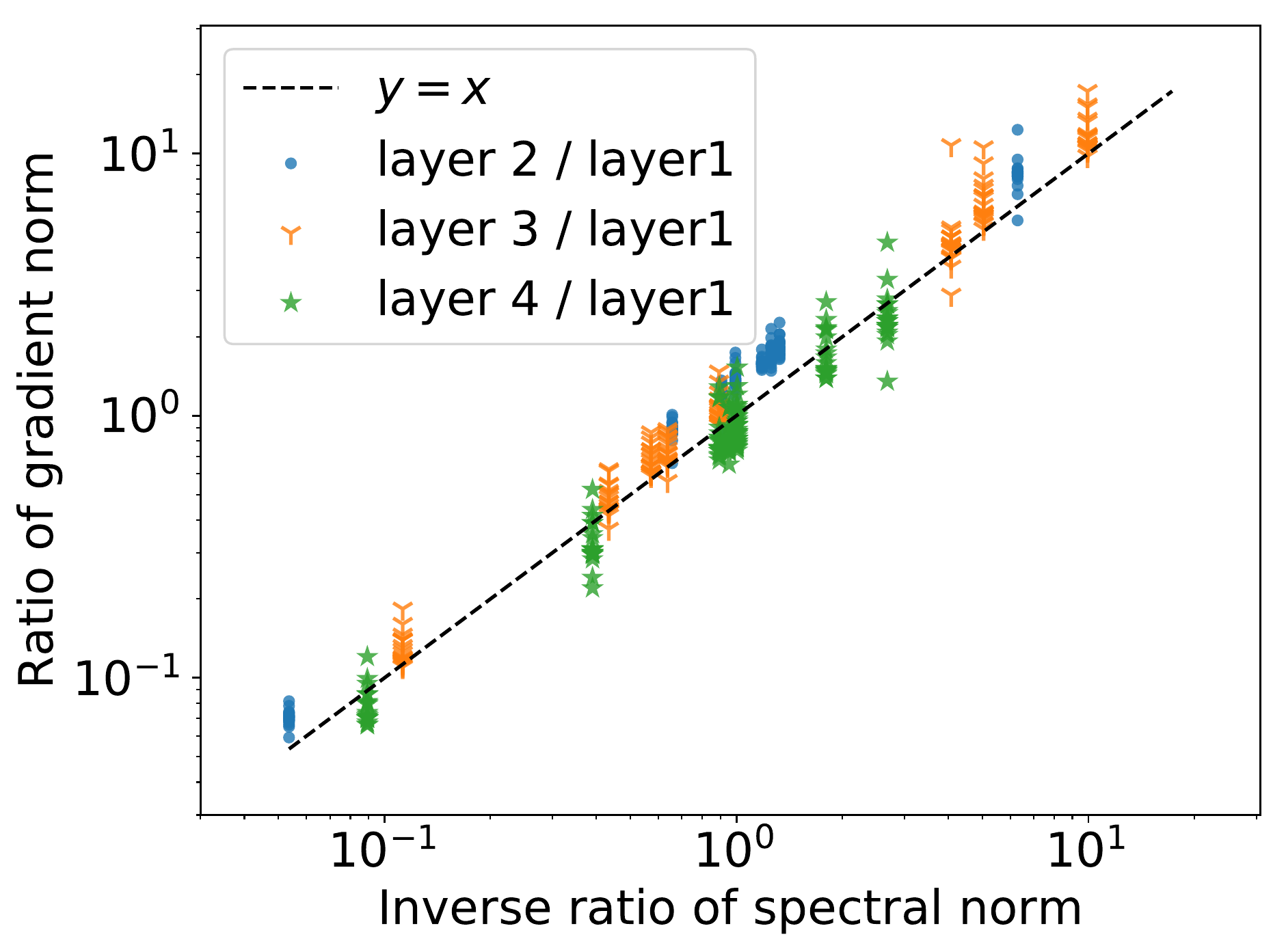}
	\vspace{-0.3cm}
	\caption{Ratio of gradient norm v.s. inverse ratio of spectral norm in \mnist{}. }
	\label{fig:gratio-mnist}
	\vspace{-0.5cm}
\end{figure}
\emph{(Proof in \cref{app:scaling-upperbound-practical}).}
The key constraint in this theorem is that we optimize only over discriminators in set $\mathcal D$ in \cref{eq:setD}.
To show that this constraint is realistic  (i.e., SN GAN discriminator optimization tends to choose models in $\mathcal D$),
we trained a spectrally-normalized GAN with four hidden layers on MNIST,  computing the ratios of the gradient norms at each layer and the ratios of the spectral norms, as dictated by \cref{eq:setD}.
We computed these ratios at different epochs during training, as well as for different randomly-selected rescalings of the spectral normalization vector $c$.
Each point in \cref{fig:gratio-mnist} represents the results averaged over 64 real samples at a specific epoch of training for a given (random) $c$. 
Vertical series of points are from different epochs of the same run, therefore their ratio of spectral norms is the same. 
The fact that most of the points are near the diagonal line suggests  that training naturally favors discriminators that are in or near $\mathcal D$; we confirm this intuition in other experimental settings in \cref{app:result_setd}.
This observation, combined with \cref{thm:scaling-upperbound-practical}, suggests that it is better to force the spectral norms of every layer to be equal. 
Hence, existing SN implementations \cite{miyato2018spectral,farnia2018generalizable} chose the correct, uniform normalization across layers to upper bound discriminator's gradients.

\section{Vanishing Gradients}
\label{sec:vanishing}

An equally troublesome failure mode of GAN training is vanishing gradients \cite{arjovsky2017principled}. 
Prior work has proposed new objective functions to mitigate this problem \cite{arjovsky2017principled,arjovsky2017wasserstein,gulrajani2017improved}, but these approaches do not fully solve the problem (see \cref{fig:grad_size}).  
In this section, we show that SN also helps to control vanishing gradients.

\myparatightest{How SN controls vanishing gradients}
Gradients tend to vanish for two reasons. 
First, gradients vanish when the objective function \emph{saturates} \cite{LeCun1998,arjovsky2017principled},
which is often associated with function parameters growing too large. 
Common loss functions (e.g., hinge loss) and activation functions (e.g., sigmoid, tanh) saturate for inputs of large magnitude. 
Large parameters tend to amplify the inputs to the activation functions and/or loss function, causing saturation.
Second, gradients vanish when function parameters (and hence, internal outputs) grow too small.  
This is because backpropagated gradients are scaled by the function parameters (\cref{app:proof-gradient-upperbound}).

These insights motivated the LeCun initialization technique \cite{LeCun1998}.
The key idea is that to prevent gradients from vanishing, we must ensure that the outputs of each neuron do not vanish or explode.  
If the inputs to a neural unit are uncorrelated random variables with variance 1, then to ensure that the unit's output also has variance (approximately) 1, the weight parameters should be zero-mean random variables with variance of $\frac{1}{n_i}$, where $n_i$ denote the fan-in  (number of incoming connections) of layer $i$ \cite{LeCun1998}. 
Hence, LeCun initialization prevents gradient vanishing by controlling the variance of the individual parameters. 
In the following theorem, we show that SN enforces a similar condition.

\begin{theorem}[Parameter variance of SN]
	\label{thm:sn-variance}
	For a matrix $A\in\Rb^{m\times n}$ with i.i.d. entries $a_{ij}$ from a symmetric distribution with zero mean (e.g., zero-mean Gaussian or uniform), we have 
	\begin{gather}
	\scalebox{1}{$
	 \var\bra{\frac{a_{ij}}{\brsp{A}}} \leq \frac{1}{\max\brc{m, n}} \;\;.
	 \label{eq:ub-var}
	$}
	\end{gather}
	Furthermore, if $m,n\geq 2$ and $\max\brc{m, n}\geq 3$, and $a_{ij}$ are from a zero-mean Gaussian, we have
	\begin{gather} 
	\scalebox{1}{$
	\frac{L}{\max\brc{m, n} \log\bra{ \min\brc{m,n} }}  \leq \var\bra{\frac{a_{ij}}{\brsp{A}}} \leq \frac{1}{\max\brc{m, n}} \;\;,\nonumber
	$}
	\end{gather}
	where $L$ is a constant which does not depend on $m, n$.
\end{theorem}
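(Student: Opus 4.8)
The plan is to collapse the variance into a single expectation of the ratio $\brf{A}^2/\brsp{A}^2$ and then bound that ratio from each side. Write $X_{ij}=a_{ij}/\brsp{A}$. First I would observe that $X_{ij}$ has mean zero: flipping the signs of an entire row of $A$ is multiplication by an orthogonal diagonal matrix, so it leaves $\brsp{A}$ unchanged while sending $a_{ij}\mapsto -a_{ij}$; since the entries are i.i.d.\ and symmetric, this sign-flipped matrix has the same law as $A$, hence $X_{ij}\deq -X_{ij}$ and $\Eb\brb{X_{ij}}=0$. (Also $\brd{X_{ij}}\le 1$, because $\brsp{A}\ge \brn{Ae_j}\ge \brd{a_{ij}}$, so all moments exist.) Thus $\var\bra{X_{ij}}=\Eb\brb{a_{ij}^2/\brsp{A}^2}$, and since the entries are exchangeable I can average this over all $mn$ positions to get the key identity
\[
\var\bra{X_{ij}} \;=\; \frac{1}{mn}\,\Eb\brb{\frac{\brf{A}^2}{\brsp{A}^2}} .
\]

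For the upper bound I would use the deterministic inequality $\brf{A}^2=\sum_k \sigma_k^2 \le \min\brc{m,n}\,\brsp{A}^2$, valid because $A$ has at most $\min\brc{m,n}$ nonzero singular values, each at most $\brsp{A}$. Substituting gives $\var\bra{X_{ij}}\le \min\brc{m,n}/(mn)=1/\max\brc{m,n}$. This step uses only symmetry and the identity above, so it holds for every symmetric zero-mean distribution, matching \cref{eq:ub-var}.

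The lower bound is the crux, and the difficulty is that I am bounding the expectation of a \emph{ratio} whose numerator and denominator are dependent, so I cannot simply substitute their individual expectations. My plan is a good-event truncation, specializing to standard Gaussian entries (the scale cancels in the ratio). Here $\brf{A}^2\sim\chi^2_{mn}$ concentrates near $mn$, and $\brsp{A}$ concentrates near $\sqrt m+\sqrt n$: I would invoke the classical bound $\Eb\brb{\brsp{A}}\le \sqrt m+\sqrt n$ together with Gaussian Lipschitz concentration (the map $A\mapsto\brsp{A}$ is $1$-Lipschitz in Frobenius norm), which yields $\Pb\bra{\brsp{A}\ge \sqrt m+\sqrt n + t}\le e^{-t^2/2}$. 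Fixing a constant $t$ and a constant fraction of $mn$, I define the event $E$ on which $\brf{A}^2\ge mn/2$ and $\brsp{A}^2\le C\max\brc{m,n}$; on $E$ the ratio is at least a constant multiple of $mn/\max\brc{m,n}=\min\brc{m,n}$, so $\Eb\brb{\brf{A}^2/\brsp{A}^2}\ge c\,\min\brc{m,n}$ and therefore $\var\bra{X_{ij}}\ge c/\max\brc{m,n}$. Since $\log\min\brc{m,n}\ge\log 2>0$ on the admissible range, this already implies the stated bound with $L=c\log 2$; the $\log$ factor in the theorem is slack that this argument does not actually require.

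The main obstacle, then, is controlling $\Eb\brb{\brf{A}^2/\brsp{A}^2}$ from below, and the single nontrivial ingredient that makes it work is the high-probability upper bound $\brsp{A}=O(\sqrt{\max\brc{m,n}})$ for Gaussian matrices; everything else (the zero-mean reduction, the Frobenius/spectral inequality, and chi-squared concentration) is routine. A secondary point I would check carefully is that $\Pb\bra{E}$ is bounded below by an absolute constant uniformly over all admissible $m,n$, including the smallest cases such as $\brc{m,n}=\brc{2,3}$ — which is precisely where the hypotheses $m,n\ge 2$ and $\max\brc{m,n}\ge 3$ are used.
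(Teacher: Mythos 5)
Your proposal is correct, and it matches the paper's proof in its skeleton: the same symmetry argument for zero mean, the same exchangeability identity $\var\bra{a_{ij}/\brsp{A}} = \frac{1}{mn}\Eb\bra{\brf{A}^2/\brsp{A}^2}$, and the identical upper bound via counting singular values. Where you genuinely diverge is the lower bound, which is indeed the crux. The paper first applies Jensen's inequality, $\Eb\bra{1/X}\geq 1/\Eb\bra{X}$, reducing the problem to upper-bounding $\Eb\bra{\brsp{B}^2}$ where $B=A/\brf{A}$ is (by Gaussian rotational invariance) uniform on the Frobenius-norm sphere; it then invokes Seginer's theorem to control $\brsp{B}$ by the maximum row and column norms, and proves a tail bound for coordinates of a sphere-uniform vector by explicit incomplete-beta/F-distribution estimates --- this last step is exactly where the $\log\bra{\min\brc{m,n}}$ factor enters. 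Your truncation argument replaces all of this with two off-the-shelf Gaussian facts (Gordon's bound $\Eb\brb{\brsp{A}}\leq\sqrt{m}+\sqrt{n}$ plus Borell--TIS concentration, and a chi-squared lower tail for $\brf{A}^2$), and your accounting of the small cases is sound: a union bound gives $\Pb\bra{E}$ bounded below by an absolute constant uniformly over all admissible $(m,n)$, with the worst case at $mn=6$. The payoff is that your bound is strictly stronger --- $\var\bra{a_{ij}/\brsp{A}}\geq c/\max\brc{m,n}$ with no logarithmic loss --- and the stated theorem follows since $\log\bra{\min\brc{m,n}}\geq\log 2$; the paper's log factor is revealed as an artifact of its proof technique rather than a feature of the problem. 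What the paper's route buys in exchange is that its machinery (Jensen plus Seginer plus the sphere tail lemma) is reused almost verbatim to prove the analogous variance bound for \namebsnshort{} (\cref{thm:ours-variance}), where the Jensen step cleanly splits the sum of the two reshaped spectral norms; your truncation argument would also adapt to that setting, but would require re-verifying the good event for both reshapings simultaneously.
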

\emph{(Proof in \cref{app:sn-variance})}.
In other words, spectral normalization forces zero-mean parameters to have a  variance that scales inversely with  $\max\{m,n\}$.
The proof relies on a characterization of extreme values of random vectors drawn uniformly from the surface of a high-dimensional unit ball. 
Many fully-connected, 
feed-forward neural networks have a fixed width across hidden 
layers, so $\max\{m,n\}$ corresponds precisely to the fan-in of any neuron in a hidden layer, implying that SN has an effect like LeCun initialization. 

\myparatightest{Why \snw{}  works better than \snconv{}}
\label{par:snw_snconv}
In a CNN, the interpretation of $\max\{m,n\}$ depends on how SN is implemented. 
Recall that the implementation \snw{} by  \cite{miyato2018spectral}  does not strictly implement SN, but a variant that normalizes by the spectral norm of $\brmiyato{w_i} =\spwshape{w_i}$.
In architectures like DCGAN \cite{radford2015unsupervised}, the larger dimension of $\brmiyato{w_i}$ for \emph{hidden layers} tends to be $\cin k_w k_h$, which is exactly the fan-in. 
This means that SN gets the right variance for hidden layers in CNN.

    \begin{figure}[t]
 	\centering
	\includegraphics[width=0.6\linewidth]{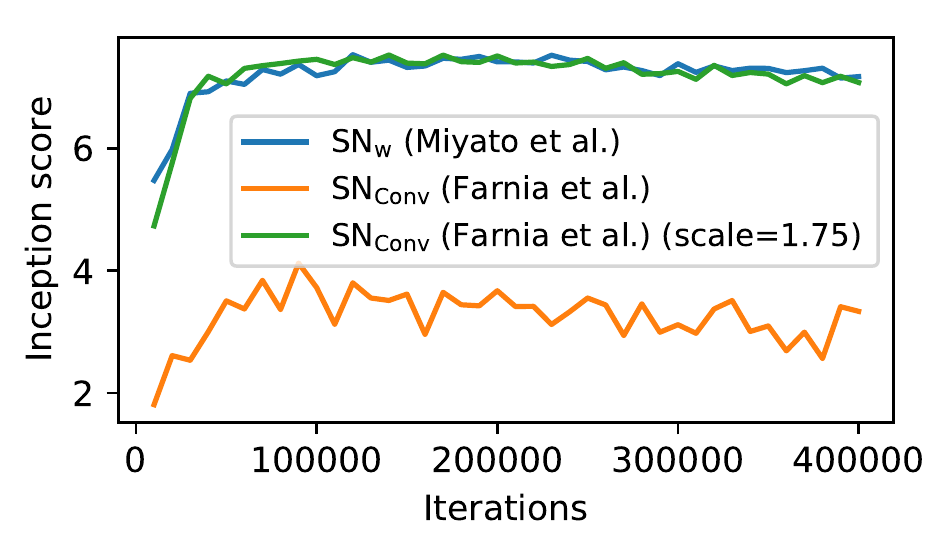}
	\vspace{-0.5cm}
	\caption{Inception score of different SN variants in \cifar{}.}
	\label{fig:sn_gradient_vani_cifar_is}
	\vspace{-0.2cm}
    \end{figure}%

    \begin{figure}[t]
      	\centering
		\includegraphics[width=0.6\linewidth]{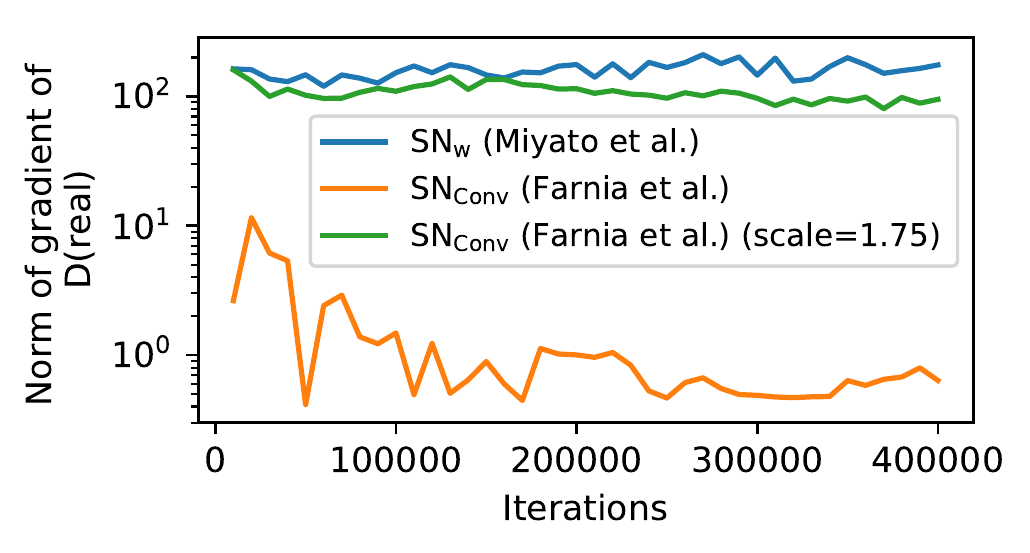}
		\vspace{-0.5cm}
		\caption{Gradient norms of different SN variants in \cifar{}.}
		\label{fig:sn_gradient_vani_cifar_real_gnorm}
		\vspace{-0.4cm}
    \end{figure}

	\begin{figure}[t]
		\centering
		\includegraphics[width=0.6\linewidth]{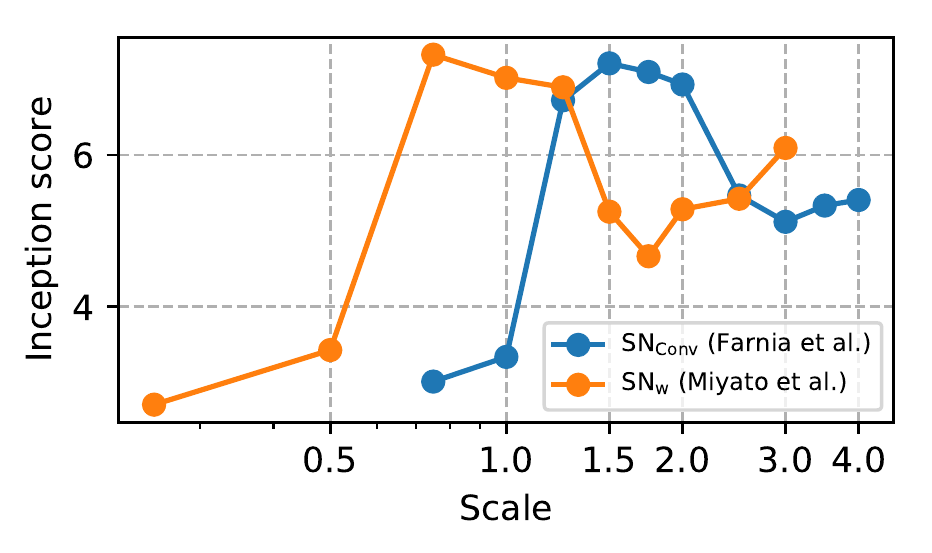}
		\vspace{-0.5cm}
		\caption{Inception score of scaled SN in \cifar{}.}
		\label{fig:sn_scale}
		\vspace{-0.4cm}
	\end{figure}

Perhaps surprisingly, we find empirically that the strict implementation \snconv{} of \cite{farnia2018generalizable} does \emph{not} prevent gradient vanishing. 
\cref{fig:sn_gradient_vani_cifar_is,fig:sn_gradient_vani_cifar_real_gnorm} shows the gradients of \snconv{}  vanishing when trained on CIFAR10, leading to a comparatively poor inception score, whereas the gradients of \snw{} remain stable. 
To understand this phenomenon, recall that  \snconv{} normalizes by the spectral norm of an expanded matrix $\brconv{w_i}$ derived from $w_i$. 
\cref{thm:sn-variance} does not hold for $\brconv{w_i}$ since its entries are not i.i.d. (even at initialization); hence it cannot be used to explain this effect.
However,  Corollary 1 in \cite{tsuzuku2018lipschitz} shows that $\brspw{w_i} \leq \brconvsp{w_i} \leq \alpha \brspw{w_i}$, where $\alpha$ is a constant only depends on kernel size, input size, and stride size of the convolution operation.
This result has two implications:

(1) $\brconvsp{w_i} \leq \alpha \brspw{w_i}$:  Although \snw{} does not strictly normalize the matrix with the actual spectral norm of the layer, it does upper bound the spectral norm of the layer. Therefore, all our analysis in \cref{sec:explosion} still applies for \snw{}  by changing the spectral norm constant from 1 to $\alpha \brspw{w_i}$. This means that \snw{} can still prevent  gradient explosion.

(2) $\brspw{w_i} \leq \brconvsp{w_i}$: 
This implies that \snconv{} normalizes by a factor that is at least as large as \snw{}.
In fact, we observe empirically that $\brconvsp{w_i}$ is strictly larger than $\brspw{w_i}$  during training (\cref{app:compare-sn}). This means that for the same $w_i$, a discriminator using \snconv{} will have smaller outputs than the discriminator using \snw{}. 
We hypothesize that the different scalings explain why  \snconv{} has vanishing gradients but \snw{} does not.

To confirm this hypothesis, for \snw{} and \snconv{}, we propose to multiply all the normalized weights by a scaling factor $s$, which is fixed throughout the training.  
\cref{fig:sn_scale} shows that
\snconv{} seems to be a shifted version of \snw{}. 
\snconv{} with $s=1.75$ has similar inception score (\cref{fig:sn_gradient_vani_cifar_is}) to  \snw{}, as well as similar gradients (\cref{fig:sn_gradient_vani_cifar_real_gnorm}) and parameter variances (\cref{app:parameter-variance-scaled-sn}) throughout training. 
This, combined with \cref{thm:sn-variance}, suggests that \snw{} inherently finds the correct scaling for the problem, whereas ``proper" spectral normalization \snconv{} requires additional hyper-parameter tuning.

\begin{figure}[t]
	\centering
	\includegraphics[width=\linewidth]{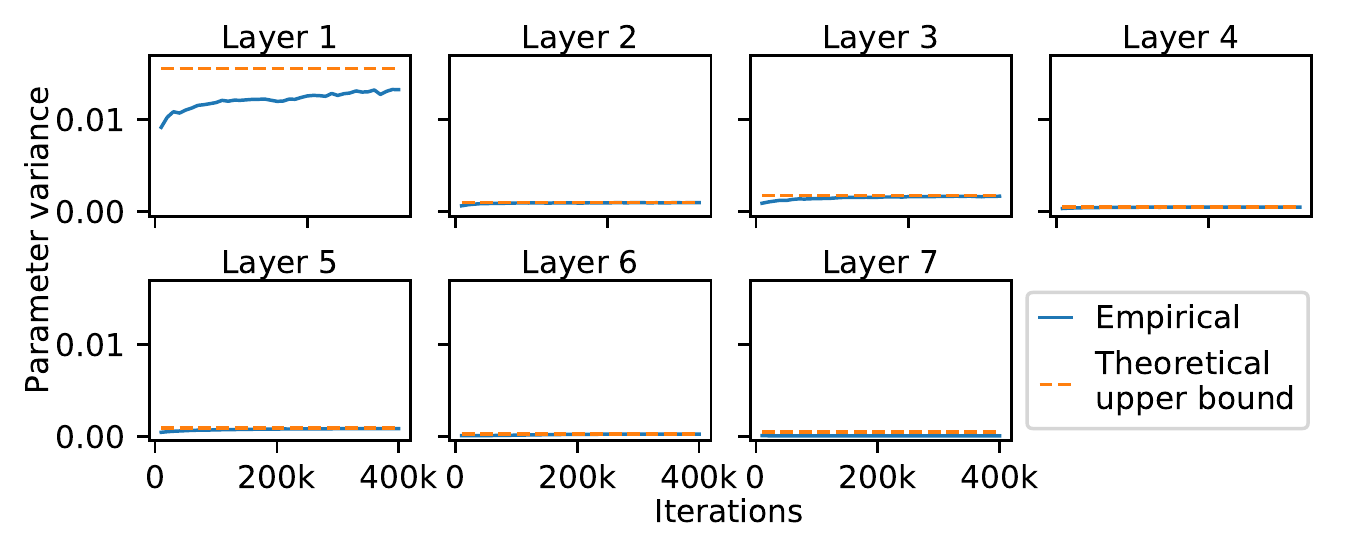}
	\vspace{-0.9cm}
	\caption{Parameter variances throughout training in \cifar{}. The blue lines show the parameter variances of different layers when SN is applied, and the original line shows our theoretical upper bound given in \cref{eq:ub-var}.}
	\label{fig:lecun_training}
	\vspace{-0.3cm}
\end{figure}

\myparatightest{SN has good parameter variances throughout training}
Our theoretical analysis only applies at initialization, when the parameters are selected randomly. 
However, unlike LeCun initialization which only controls the variance at initialization, we find empirically that \cref{eq:ub-var} for SN appears to hold \emph{throughout training} (\cref{fig:lecun_training}). 
As a comparison, if trained without SN, the variance increases and the gradient decreases, which makes sample quality bad (\cref{app:para-var-withwithoutsn}).
This explains why in practice GANs trained with SN are stable \emph{throughout training}.

\section{Extensions of Spectral Normalization}
\label{sec:approach}
Given the above theoretical insights, we propose an extension of spectral normalization called \namebssn{} (\namebssnshort{}). It combines two key ideas: bidirectional normalization and weight scaling. 

\subsection{Bidirectional Normalization}
\label{sec:ours}
Glorot and Bengio \cite{glorot2010understanding} built on the intuition of LeCun \cite{LeCun1998} to design an improved initialization, commonly called \emph{Xavier initialization}.
Their key observation was that to limit gradient vanishing (and explosion), it is not enough to control only feed-forward outputs;
we should also control the variance of backpropagated gradients.
Let $n_i,m_i$ denote the fan-in and fan-out of layer $i$. (In fully-connected layers, $n_i=m_{i-1}=$ the width of layer $i$.)
Whereas LeCun chooses initial parameters with variance  $\frac{1}{n_i}$, 
Glorot and Bengio choose them with variance $\frac{2}{n_i + m_{i}}$,
 a compromise between $\frac{1}{n_i}$ (to control output variance) and $\frac{1}{m_{i}}$ (to control variance of backpropagated gradients).
 
 The first component of \namebssnshort{} is  \namebsn{} (\namebsnshort{}), which applies a similar intuition to improve the spectral normalization of Miyato \emph{et al.} \cite{miyato2018spectral}.
 For fully connected layers, \namebsnshort{} keeps the normalization the same as \snw{} \cite{miyato2018spectral}.
For convolution layers, instead of normalizing by $\brspwshape{w}$, we normalize by $\sigma_w \triangleq \frac{\brspwshape{w} + \brspwshapet{w}}{2}$, where $\brspwshapet{w}$ is the spectral norm of the reshaped convolution kernel of dimension $\cin \times \bra{\cout k_wk_h}$. For calculating these two spectral norms, we use the same power iteration method in \cite{miyato2018spectral}. The following theorem gives the theoretical explanation.

\begin{theorem}[Parameter variance of \namebsnshort{}]
	\label{thm:ours-variance}
	For a convolutional kernel $w\in\Rb^{\cout \cin k_wk_h}$ with i.i.d. entries $w_{ij}$ from a symmetric distribution with zero mean (e.g. zero-mean Gaussian or uniform) where $k_wk_h \geq \max\brc{\frac{\cout}{\cin}, \frac{\cin}{\cout}}$, and $\sigma_w$ defined as above,
we have
	$$
		\var\bra{\frac{w_{ij}}{\sigma_w}} \leq \frac{2}{ \cin k_wk_h + \cout k_wk_h } \;\;.
	$$
	Furthermore, if $\cin,\cout\geq 2$ and $\cin k_wk_h, \cout k_wk_h\geq 3$, and $w_{ij}$ are from a zero-mean Gaussian distribution, there exists a constant $L$ that does not depend on $\cin, \cout,k_w,k_h$ such that
	\begin{align*}
	\frac{L}{  \cin k_w k_h \log(\cout) + \cout k_w k_h \log(\cin)  }  
	\\\leq \var\bra{\frac{w_{ij}}{\sigma_w} }
	\leq \frac{2}{ \cin k_wk_h + \cout k_wk_h }. 
	\end{align*}
\end{theorem}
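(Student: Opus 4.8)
The plan is to reduce everything to the single-matrix result \cref{thm:sn-variance}, applied separately to the two reshapings that define $\sigma_w$, and then to combine the two bounds. Write $s_1 \triangleq \brspwshape{w}$ and $s_2 \triangleq \brspwshapet{w}$, so that $\sigma_w = \frac{s_1+s_2}{2}$. The single structural fact linking them is that they are matricizations of the \emph{same} tensor, so reshaping leaves the sum of squared entries unchanged and both share the Frobenius norm $\brf{w}$. The hypothesis $k_wk_h \geq \max\brc{\cout/\cin, \cin/\cout}$ is precisely what forces $\cin k_wk_h \geq \cout$ and $\cout k_wk_h \geq \cin$, i.e. that the larger dimension of $\spwshape{w}$ is $\cin k_wk_h$ and that of $\wwshapet{w}$ is $\cout k_wk_h$. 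Hence \cref{thm:sn-variance} applies verbatim to each reshaping and yields $\var\bra{w_{ij}/s_1} \leq \frac{1}{\cin k_wk_h}$ and $\var\bra{w_{ij}/s_2} \leq \frac{1}{\cout k_wk_h}$, together with Gaussian lower bounds carrying the factors $\log\cout$ and $\log\cin$ respectively.

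Next I would use symmetry to collapse both bounds onto a single scalar. Since the $w_{ij}$ are i.i.d. and every spectral norm (hence $\sigma_w$) is invariant under flipping the sign of all entries, $w_{ij}/\sigma_w$ has a symmetric law, so $\var\bra{w_{ij}/\sigma_w} = \Eb\brb{w_{ij}^2/\sigma_w^2}$. Summing this common value over all $N = \cin\cout k_wk_h$ entries gives the key identity $\var\bra{w_{ij}/\sigma_w} = \frac{1}{N}\Eb\brb{\brf{w}^2/\sigma_w^2}$, which reduces both directions to two-sided control of $\brf{w}^2/(s_1+s_2)^2$.

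For the upper bound I would lower-bound $\sigma_w$: the rank estimate underlying the upper half of \cref{thm:sn-variance} gives $s_1 \geq \brf{w}/\sqrt{\cout}$ (as $\spwshape{w}$ has rank at most $\cout$) and $s_2 \geq \brf{w}/\sqrt{\cin}$. Substituting into $\sigma_w = \frac{s_1+s_2}{2}$ and applying the identity above produces a denominator of the form $\cin k_wk_h + \cout k_wk_h$ — the arithmetic mean of the two fan-in-like quantities, which is the spectral-normalization analogue of Xavier's $\frac{2}{n_i+m_i}$ (as opposed to LeCun's $\frac{1}{n_i}$). For the Gaussian lower bound, which I expect to be the routine part once \cref{thm:sn-variance} is available, I would instead upper-bound $s_1$ and $s_2$ via the Gaussian extreme-singular-value estimates already developed there, combine them through $(s_1+s_2)^2 \leq 2\bra{s_1^2 + s_2^2}$, and apply Jensen's inequality to $\brf{w}^2/(s_1^2+s_2^2)$; the two logarithmic factors then enter exactly as in the single-matrix bound for each reshaping, giving the stated $\cin k_wk_h\log\cout + \cout k_wk_h\log\cin$ denominator (with the same non-tight $L$ as in \cref{thm:sn-variance}).

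The step I expect to be the main obstacle is pinning down the exact leading constant in the upper bound. Because $s_1$ and $s_2$ are built from the same entries they are strongly correlated, so one cannot treat them as independent and apply convexity of $s\mapsto s^{-2}$ or AM--GM to $\sigma_w$ without bleeding away a factor governed by the gap between $\bra{\sqrt{\cin}+\sqrt{\cout}}^2$ and $2\bra{\cin+\cout}$, which closes only when $\cin=\cout$; indeed, naively substituting the two rank bounds lands one at $\frac{4}{k_wk_h\bra{\sqrt{\cin}+\sqrt{\cout}}^2}$ rather than the sharper $\frac{2}{\cin k_wk_h + \cout k_wk_h}$. The delicate work is therefore to route the argument through the shared Frobenius norm $\brf{w}$ so that the cross term $s_1 s_2$ is retained rather than discarded, and to verify that the dimension hypothesis is genuinely used both in the rank step and in identifying the dominant dimension of each reshaping. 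Making the extreme-value estimates from \cref{thm:sn-variance} quantitative enough to be applied simultaneously to two distinct unfoldings of one tensor is the remaining bookkeeping.
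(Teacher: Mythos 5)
Your proposal is, in every essential step, the paper's own proof. Writing, as you do, $s_1=\brspwshape{w}$ and $s_2=\brspwshapet{w}$: the paper uses the same sign-symmetry/exchangeability identity $\var\bra{w_{ij}/\sigma_w}=\frac{1}{\cin\cout k_wk_h}\Eb\brb{\brf{w}^2/\sigma_w^2}$, the same rank bounds $s_1\geq\brf{w}/\sqrt{\cout}$, $s_2\geq\brf{w}/\sqrt{\cin}$ for the upper bound (with the hypothesis $k_wk_h\geq\max\brc{\cout/\cin,\cin/\cout}$ playing exactly the role you assign it), and, for the Gaussian lower bound, the same combination of $\sigma_w^2\leq\frac{1}{2}\bra{s_1^2+s_2^2}$, Jensen applied to $\brf{w}^2/\bra{s_1^2+s_2^2}$, and the Seginer/sphere machinery from the proof of \cref{thm:sn-variance} applied to each unfolding. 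The only difference is in the upper bound: you lower-bound $s_1+s_2$ directly, keeping the cross term, whereas the paper first passes to $\bra{s_1+s_2}^2\geq s_1^2+s_2^2$ and then splits $\frac{1}{s_1^2+s_2^2}\leq\frac{1}{2}\bra{\frac{1}{s_1^2}+\frac{1}{s_2^2}}$, which is lossier.

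The ``main obstacle'' you flag --- recovering the exact constant in $\frac{2}{\cin k_wk_h+\cout k_wk_h}$ --- calls for the opposite conclusion from the one you draw: it is not delicate work left to do, it is impossible, and the paper does not do it either. The paper's chain (whose intermediate ``$=$'' is really a ``$\leq$'') establishes only $\var\bra{w_{ij}/\sigma_w}\leq\frac{2\bra{\cin+\cout}}{\cin\cout k_wk_h}$, and its final step, the assertion
\begin{equation*}
\frac{2\bra{\cout+\cin}}{\cin\cout k_wk_h}=\frac{2}{\cin k_wk_h+\cout k_wk_h}\;,
\end{equation*}
is false: the two sides differ by the factor $\bra{\cin+\cout}^2/\bra{\cin\cout}\geq4$. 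Your ``naive'' bound $\frac{4}{k_wk_h\bra{\sqrt{\cin}+\sqrt{\cout}}^2}$ is in fact \emph{tighter} than what the paper's argument proves (since $\bra{\cin+\cout}\bra{\sqrt{\cin}+\sqrt{\cout}}^2\geq8\cin\cout$), and it is within a factor of $2$ of the stated bound uniformly in $\cin,\cout$, with equality when $\cin=\cout$. Moreover no refinement can close the remaining gap, because the stated constant is wrong for asymmetric channel counts: take $\cin=2$, $\cout=N$, $k_wk_h=K\geq N/2$ with Gaussian entries. The spectral norm of the $N\times 2K$ unfolding is of order $\sqrt{N}+\sqrt{K}$ while $\brf{w}$ is of order $\sqrt{NK}$, so $s_1/\brf{w}\to0$; the spectral norm of the $2\times NK$ unfolding is $\bra{1+o(1)}\sqrt{NK}$, so $s_2/\brf{w}\to1/\sqrt{2}$. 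By bounded convergence $\Eb\brb{\brf{w}^2/\sigma_w^2}\to 8$, hence $\var\bra{w_{ij}/\sigma_w}=\bra{4-o(1)}/\bra{NK}$, exceeding the theorem's claimed bound $\approx 2/\bra{NK}$. So your proposal already proves everything that is provable here --- the correct Xavier-type order, the claimed Gaussian lower bound, and an upper bound that is sharp --- and the constant $2$ in the theorem statement should be read as correct only up to a factor of $2$.
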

\emph{(Proof in \cref{app:ours-variance})}.
Note that in convolution layers, $n_i=\cin k_w k_h$ and $m_i=\cout k_w k_h$. Therefore, \namebsnshort{} sets the variance of parameters to  scale as $\frac{2}{n_i + m_{i}}$, as dictated by Xavier initialization. Moreover, \namebsnshort{} naturally inherits the benefits of SN discussed in \cref{sec:vanishing} (e.g., controlling variance throughout the training).

\subsection{Weight Scaling}
\label{sec:scaling_approach}

The second component of \namebssnshort{} is to multiply all the normalized weights by a constant scaling factor (i.e., as we did in \cref{fig:sn_scale}). We call the combination of \namebsnshort{} and this weight scaling  \emph{\namebssn{}} (\namebssnshort{}). 
Note that scaling can also be applied independently to SN, which we call \namessn{} (\namessnshort{}).
The scaling is motivated by the following reasons. 

(1) The analysis in LeCun and Xavier initialization assumes that the activation functions are linear, which is not true in practice. More recently, Kaiming initialization was proposed to include the effect of non-linear activations \cite{he2015delving}. The result is that we should set the variance of parameters to be $2/(1+a^2)$ times the ones in LeCun or Xavier initialization, where $a$ is the negative slope of leaky ReLU. This suggests the importance of a constant scaling. 

(2) However, we found that the scaling constants proposed in LeCun/Kaiming initialization do not always perform well for GANs. Even more surprisingly, there are \emph{multiple modes} of good scaling. \cref{fig:sn_scaling_approach} shows the sample quality of LeCun initialization with different scaling on the discriminator. We see that there are at least two good modes of scaling: one at around 0.2 and the other at around 1.2. This phenomenon cannot be explained by the analysis in LeCun/Kaiming initialization. 

Recall that SN has similar properties as LeCun initialization (\cref{sec:vanishing}).  
Interestingly, we see that \namessnshort{} also has two good modes of scaling (\cref{fig:sn_scaling_approach}).  Although the best scaling constants for LeCun initialization and SN are very different, there indeed exists an interesting mode correspondence in terms of parameter variances (\cref{app:result_scaling_approach}).  We hypothesize that the shift of good scaling from Kaiming initialization we see here could result from  adversarial training, and defer the theoretical analysis to future work. These results highlight the need for a separate scaling factor.

(3) The bounds in \cref{thm:sn-variance} and \cref{thm:ours-variance} only imply that in SN and \namebsnshort{} the \emph{order} of parameter variance w.r.t. the network size is correct, but constant scaling is unknown.

\begin{figure}[th]
	\centering
	\includegraphics[width=0.7\linewidth]{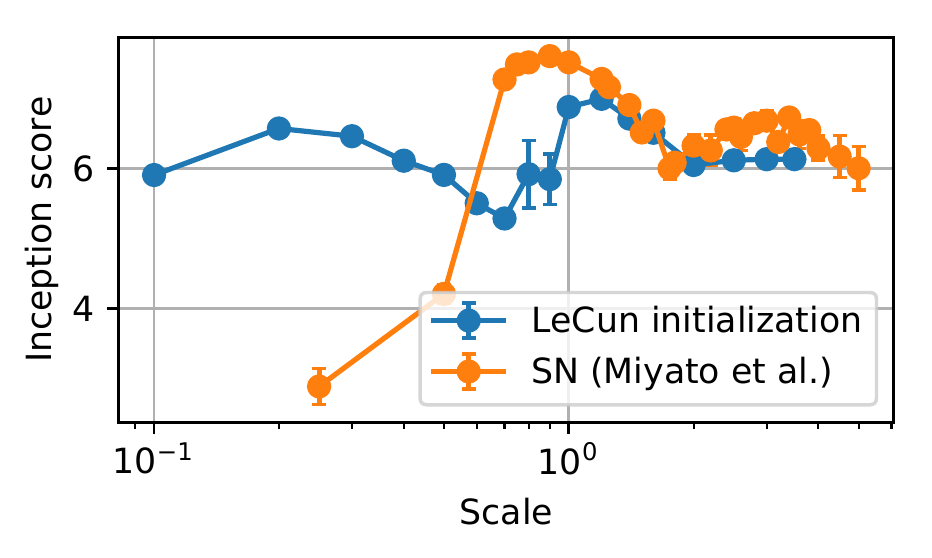}
	\vspace{-0.5cm}
	\caption{Inception score of \namessnshort{} and scaled LeCun initialization in CIFAR10. Mean and standard error of the best score during training across multiple runs are shown.}
	\label{fig:sn_scaling_approach}
	\vspace{-0.4cm}
\end{figure} 

\subsection{Results}
\label{sec:results}

In this section we verify the effectiveness of \namebssnshort{}
with extensive experiments. The code for reproducing the results is at \url{https://github.com/fjxmlzn/BSN}.

\cite{miyato2018spectral} already compares SN with many other regularization techniques like WGAN-GP \cite{gulrajani2017improved}, batch normalization \cite{ioffe2015batch}, layer normalization \cite{ba2016layer}, weight normalization \cite{salimans2016weight},  and orthogonal regularization \cite{brock2016neural}, and SN is shown to outperform them all. 
Therefore, we focus on comparing the performance of SN with \namebssnshort{} here. Additionally, to isolate the effects of the two components proposed in \namebssnshort{}, we include comparison against bidirectional normalization without scaling (\namebsnshort) and scaling without bidirectional normalization (\namessnshort).

We conduct experiments across \emph{different datasets} (from low-resolution to high-resolution) and \emph{different network architectures} (from standard CNN to ResNets). More specifically, we conducts experiments on \cifar{}, \stl{}, \celeba{}, and \imagenet{} (\ilsvrc{}), following the same settings in \cite{miyato2018spectral}. All experimental details are attached in \cref{app:result,app:sn-variant,app:result_cifar10,app:result_stl10,app:result_celeba,app:result_ilsvrc}. The results are summarized in \cref{tbl:all}.

\begin{table*}[ht]
	\centering
	\setlength{\tabcolsep}{4.5pt}
	\scalebox{1.0}{
		\begin{tabular}{l | c c | c c | c | c c}
			\toprule
			& \multicolumn{2}{c|}{\cifar{}} & \multicolumn{2}{c|}{\stl{}} &\multicolumn{1}{c|}{\celeba{}} &\multicolumn{2}{c}{\ilsvrc{}} \\
			\midrule
			& IS $\uparrow$ & FID $\downarrow$ & IS $\uparrow$ & FID $\downarrow$ & FID $\downarrow$ & IS $\uparrow$ & FID $\downarrow$\\
			\midrule
			Real data & 11.26 & 9.70 & 26.70 & 10.17 & 4.44 & 197.37 & 15.62\\
			\hline
			SN&  7.12 $\pm$ 0.07 & 31.43 $\pm$ 0.90   & 9.05 $\pm$ 0.05 & 44.35 $\pm$ 0.54    & 9.43 $\pm$ 0.09  &12.84 $\pm$ 0.33 & 75.06 $\pm$ 2.38 \\\hline
			\namessnshort{} & 7.38 $\pm$ 0.06 & 29.31 $\pm$ 0.23   & \textbf{9.28 $\pm$ 0.03} & 43.52 $\pm$ 0.26   & \textbf{8.50 $\pm$ 0.20}    & 12.84 $\pm$ 0.33 & 73.21 $\pm$ 1.92\\
			\namebsnshort{}& \textbf{7.54 $\pm$ 0.04} & \textbf{26.94 $\pm$ 0.58}   & 9.25 $\pm$ 0.01 & 42.98 $\pm$ 0.54  & 9.05 $\pm$ 0.13    & 1.77 $\pm$ 0.13 & 265.20 $\pm$ 19.01\\
			\namebssnshort{} & \textbf{7.54 $\pm$ 0.04} & \textbf{26.94 $\pm$ 0.58}   & 9.25 $\pm$ 0.01 & \textbf{42.90 $\pm$ 0.17}  & 9.05 $\pm$ 0.13   & \textbf{13.23 $\pm$ 0.16} & \textbf{69.04 $\pm$ 1.46}\\
			\bottomrule
		\end{tabular}
	}
	\vspace{-0.2cm}
	\caption{Inception score (IS) and FID on \cifar{}, \stl{}, \celeba{}, and \ilsvrc{}. 
		The last three rows are proposed in this work, with \namebssnshort{} representing our final proposal---a combination of \namebsnshort{} and  \namessnshort{}. %
		Each experiment is conducted with 5 random seeds except that the last three rows on \ilsvrc{} is conducted with 3 random seeds. Mean and standard error across these random seeds are reported. 
		We follow the common practice of excluding IS in \celeba{} as the inception network is pretrained on ImageNet, which is very different from CelebA. The bold font marks the best numbers in that column.}
	\vspace{-0.3cm}
	\label{tbl:all}
\end{table*}

\trainingcurveinception{\cifar{}}{cifar10/296}{0.0001}{0.0001}{1}{cifar}

\myparatightest{\namebsnshort{} v.s. SN (showing the effect of bidirectional normalization \cref{sec:ours})}
By comparing \namebsnshort{} with SN in \cref{tbl:all}, we can see that \namebsnshort{} outperforms SN by a large margin in all metrics except in \ilsvrc{} (discussed later). 
\newline\ul{More importantly, the superiority of \namebsnshort{} is stable across hyper-parameters.} In \cref{app:result}, we vary the learning rates ($\alpha_g, \alpha_d$)  and momentum parameters of generator and discriminator, and the number of discriminator updates per generator update ($n_{dis}$). We see that \namebsnshort{} consistently outperforms SN in most of the cases.
\newline\ul{Moreover, \namebsnshort{} is more stable in the entire training process.}  We see that as training proceeds, the sample quality of SN often drops, whereas the sample quality of \namebsnshort{} appears to monotonically increase (\cref{fig:cifar-g0.0001-d0.0001-ndis1-inception}, more in \cref{app:result_cifar10,app:result_stl10,app:result_celeba}). In most cases, \namebsnshort{} not only outperforms SN in final sample quality (i.e., at the end of training), but also in \emph{peak} sample quality. 
This means that \namebsnshort{} makes the training process more stable, which is the  purpose of SN (and \namebsnshort{}). 

\myparatightest{\namessnshort{} v.s. SN (showing the effect of scaling \cref{sec:scaling_approach})}
By comparing \namessnshort{} with SN in \cref{tbl:all}, we see that scaling consistently improves (or has the same metric) \emph{in all cases}.  This verifies our intuition in \cref{sec:scaling_approach} that the inherent scaling in SN is not optimal, and a extra constant scaling is needed to get the best results.

\myparatightest{\namebssnshort{} v.s. \namebsnshort{} (showing the effect of scaling \cref{sec:scaling_approach})} By comparing \namebssnshort{} with \namebsnshort{} in \cref{tbl:all}, we see that in some cases the optimal scale of \namebsnshort{} happens to be 1 (e.g., in \cifar{}), but in other cases, scaling is critical.  For example, in \ilsvrc{}, \namebsnshort{} without any scaling has the same gradient vanishing problem we observe for \snconv{} \cite{farnia2018generalizable} in \cref{sec:vanishing}, which causes bad sample quality. \namebssnshort{} successfully solves the gradient vanishing problem and achieves the best sample quality.

\myparatightest{Additional results} Because of the space constraints, we defer other results (e.g., generated images, training curves, more  comparisons and analysis) to \cref{app:result,app:sn-variant,app:result_cifar10,app:result_stl10,app:result_celeba,app:result_ilsvrc}.

\myparatightest{Summary}
In summary, both designs we proposed can effectively stabilize training and achieve better sample quality. Combining them  together, \namebssnshort{} achieves the best sample quality in most of the cases. This demonstrates the practical value of the theoretical insights in \cref{sec:explosion,sec:vanishing}.

\section{Discussion}
\label{sec:discussion}

\myparatightest{Other reasons contributing to the stability of SN} 
In the paper we present one possible reason (i.e., SN avoids exploding and vanishing gradients), and show such correlation through extensive theoretical and empirical analysis. However, there could exists many other parallel factors. For example, SN paper \cite{miyato2018spectral} points out that SN could speed up training by encouraging the weights to be updated along directions orthogonal to itself. This is orthogonal to the reasons we discuss in the paper.

\myparatightest{Related work}
A related result to our upper bound was shown in \cite{santurkar2018does}, which shows that batch normalization (BN)  makes the scaling of the Hessian along the direction of  the gradient smaller, thereby making gradients more predictive. 
Given \cref{thm:gradient-upperbound-sn}, we can apply the reasoning from  \cite{santurkar2018does} to explain why spectrally-normalized GANs are robust to different learning rates as shown in \cite{miyato2018spectral}.
However, our insights regarding the gradient vanishing problem are the more surprising result; this notion is not discussed in \cite{santurkar2018does}. 
An interesting question for future work is whether BN similarly controls vanishing gradients.

In parallel to this work, some other approaches have been proposed to improve SN. For example, \cite{fang2021precondition} finds out that even with SN, the condition numbers of the weights can still be large, which causes the instability. To solve the issue, they borrow the insights from linear algebra and propose precondition layers to improve the condition numbers and therefore promote stability.

\myparatightest{Future directions}
Our results suggest that SN stabilizes GANs by controlling exploding and vanishing gradients in the discriminator.  
However, our analysis also applies to the training of any feed-forward neural network. 
This connection partially explains why SN helps train generators as well as  discriminators \cite{zhang2018self,brock2018large}, and why SN is more generally useful  in training neural networks \cite{farnia2018generalizable,gouk2018regularisation,yoshida2017spectral}.
We focus on GANs in this paper because SN seems to have a disproportionately beneficial effect on GANs \cite{miyato2018spectral}.
Formally extending this analysis to understand the effects of adversarial training is an interesting direction for future work.

Related to the weight initialization and training dynamics, a series of work \cite{pennington2017resurrecting,saxe2013exact} has shown that Gaussian weights or ReLU activations cannot achieve dynamical isometry (all singular values of the network Jacobian are near 1), a desired property for training stability. Orthogonal weight initialization may be better at achieving the goal. In this paper, we focus the theoretical analysis and experiments on Gaussian weights and ReLU activations as they are the predominant implementations in GANs. We defer the study of other networks to future work.

\section*{Acknowledgements}
This work was supported in part by faculty research awards from Google, JP Morgan Chase, and the Sloan Foundation, as well as a gift from Siemens AG.
This research was sponsored in part by National Science Foundation Convergence Accelerator award 2040675 and the U.S. Army Combat
Capabilities Development Command Army Research Laboratory and was accomplished under Cooperative Agreement
Number W911NF-13-2-0045 (ARL Cyber Security CRA).
The views and conclusions contained in this document are
those of the authors and should not be interpreted as representing the official policies, either expressed or implied, of the
Combat Capabilities Development Command Army Research
Laboratory or the U.S. Government. The U.S. Government
is authorized to reproduce and distribute reprints for Government purposes notwithstanding any copyright notation here
on.
This work used the Extreme Science and Engineering Discovery Environment (XSEDE) \cite{xsede}, which is supported by National Science Foundation grant number ACI-1548562. Specifically, it used the Bridges system \cite{bridges}, which is supported by NSF award number ACI-1445606, at the Pittsburgh Supercomputing Center (PSC).

\bibliography{reference}

\begin{thebibliography}{49}
\providecommand{\natexlab}[1]{#1}
\providecommand{\url}[1]{\texttt{#1}}
\expandafter\ifx\csname urlstyle\endcsname\relax
  \providecommand{\doi}[1]{doi: #1}\else
  \providecommand{\doi}{doi: \begingroup \urlstyle{rm}\Url}\fi

\bibitem[Arjovsky \& Bottou(2017)Arjovsky and Bottou]{arjovsky2017principled}
Arjovsky, M. and Bottou, L.
\newblock Towards principled methods for training generative adversarial
  networks, 2017.

\bibitem[Arjovsky et~al.(2017)Arjovsky, Chintala, and
  Bottou]{arjovsky2017wasserstein}
Arjovsky, M., Chintala, S., and Bottou, L.
\newblock Wasserstein gan.
\newblock \emph{arXiv preprint arXiv:1701.07875}, 2017.

\bibitem[Ba et~al.(2016)Ba, Kiros, and Hinton]{ba2016layer}
Ba, J.~L., Kiros, J.~R., and Hinton, G.~E.
\newblock Layer normalization.
\newblock \emph{arXiv preprint arXiv:1607.06450}, 2016.

\bibitem[Bartlett et~al.(2017)Bartlett, Foster, and
  Telgarsky]{bartlett2017spectrally}
Bartlett, P.~L., Foster, D.~J., and Telgarsky, M.~J.
\newblock Spectrally-normalized margin bounds for neural networks.
\newblock In \emph{Advances in Neural Information Processing Systems}, pp.\
  6240--6249, 2017.

\bibitem[Bengio et~al.(1994)Bengio, Simard, and Frasconi]{bengio1994learning}
Bengio, Y., Simard, P., and Frasconi, P.
\newblock Learning long-term dependencies with gradient descent is difficult.
\newblock \emph{IEEE transactions on neural networks}, 5\penalty0 (2):\penalty0
  157--166, 1994.

\bibitem[Bernstein et~al.(2020)Bernstein, Vahdat, Yue, and
  Liu]{bernstein2020distance}
Bernstein, J., Vahdat, A., Yue, Y., and Liu, M.-Y.
\newblock On the distance between two neural networks and the stability of
  learning.
\newblock \emph{arXiv preprint arXiv:2002.03432}, 2020.

\bibitem[Brock et~al.(2016)Brock, Lim, Ritchie, and Weston]{brock2016neural}
Brock, A., Lim, T., Ritchie, J.~M., and Weston, N.
\newblock Neural photo editing with introspective adversarial networks.
\newblock \emph{arXiv preprint arXiv:1609.07093}, 2016.

\bibitem[Brock et~al.(2018)Brock, Donahue, and Simonyan]{brock2018large}
Brock, A., Donahue, J., and Simonyan, K.
\newblock Large scale gan training for high fidelity natural image synthesis.
\newblock \emph{arXiv preprint arXiv:1809.11096}, 2018.

\bibitem[Coates et~al.(2011)Coates, Ng, and Lee]{coates2011analysis}
Coates, A., Ng, A., and Lee, H.
\newblock An analysis of single-layer networks in unsupervised feature
  learning.
\newblock In \emph{Proceedings of the fourteenth international conference on
  artificial intelligence and statistics}, pp.\  215--223, 2011.

\bibitem[Fang et~al.(2021)Fang, Schwing, and Sun]{fang2021precondition}
Fang, T., Schwing, A., and Sun, R.
\newblock Precondition layer and its use for {\{}gan{\}}s, 2021.
\newblock URL \url{https://openreview.net/forum?id=1yXhko8GZEE}.

\bibitem[Farnia et~al.(2018)Farnia, Zhang, and Tse]{farnia2018generalizable}
Farnia, F., Zhang, J.~M., and Tse, D.
\newblock Generalizable adversarial training via spectral normalization.
\newblock \emph{arXiv preprint arXiv:1811.07457}, 2018.

\bibitem[Glorot \& Bengio(2010)Glorot and Bengio]{glorot2010understanding}
Glorot, X. and Bengio, Y.
\newblock Understanding the difficulty of training deep feedforward neural
  networks.
\newblock In \emph{Proceedings of the thirteenth international conference on
  artificial intelligence and statistics}, pp.\  249--256, 2010.

\bibitem[Goodfellow et~al.(2014)Goodfellow, Pouget-Abadie, Mirza, Xu,
  Warde-Farley, Ozair, Courville, and Bengio]{goodfellow2014generative}
Goodfellow, I., Pouget-Abadie, J., Mirza, M., Xu, B., Warde-Farley, D., Ozair,
  S., Courville, A., and Bengio, Y.
\newblock Generative adversarial nets.
\newblock In \emph{Advances in neural information processing systems}, pp.\
  2672--2680, 2014.

\bibitem[Gouk et~al.(2018)Gouk, Frank, Pfahringer, and
  Cree]{gouk2018regularisation}
Gouk, H., Frank, E., Pfahringer, B., and Cree, M.
\newblock Regularisation of neural networks by enforcing lipschitz continuity.
\newblock \emph{arXiv preprint arXiv:1804.04368}, 2018.

\bibitem[Gulrajani et~al.(2017)Gulrajani, Ahmed, Arjovsky, Dumoulin, and
  Courville]{gulrajani2017improved}
Gulrajani, I., Ahmed, F., Arjovsky, M., Dumoulin, V., and Courville, A.~C.
\newblock Improved training of wasserstein gans.
\newblock In \emph{Advances in neural information processing systems}, pp.\
  5767--5777, 2017.

\bibitem[He et~al.(2015)He, Zhang, Ren, and Sun]{he2015delving}
He, K., Zhang, X., Ren, S., and Sun, J.
\newblock Delving deep into rectifiers: Surpassing human-level performance on
  imagenet classification.
\newblock In \emph{Proceedings of the IEEE international conference on computer
  vision}, pp.\  1026--1034, 2015.

\bibitem[Heusel et~al.(2017)Heusel, Ramsauer, Unterthiner, Nessler, and
  Hochreiter]{heusel2017gans}
Heusel, M., Ramsauer, H., Unterthiner, T., Nessler, B., and Hochreiter, S.
\newblock Gans trained by a two time-scale update rule converge to a local nash
  equilibrium.
\newblock In \emph{Advances in neural information processing systems}, pp.\
  6626--6637, 2017.

\bibitem[Ioffe \& Szegedy(2015)Ioffe and Szegedy]{ioffe2015batch}
Ioffe, S. and Szegedy, C.
\newblock Batch normalization: Accelerating deep network training by reducing
  internal covariate shift.
\newblock \emph{arXiv preprint arXiv:1502.03167}, 2015.

\bibitem[Jolicoeur-Martineau(2018)]{jolicoeur2018relativistic}
Jolicoeur-Martineau, A.
\newblock The relativistic discriminator: a key element missing from standard
  gan.
\newblock \emph{arXiv preprint arXiv:1807.00734}, 2018.

\bibitem[Karras et~al.(2017)Karras, Aila, Laine, and
  Lehtinen]{karras2017progressive}
Karras, T., Aila, T., Laine, S., and Lehtinen, J.
\newblock Progressive growing of gans for improved quality, stability, and
  variation.
\newblock \emph{arXiv preprint arXiv:1710.10196}, 2017.

\bibitem[Kingma \& Ba(2014)Kingma and Ba]{kingma2014adam}
Kingma, D.~P. and Ba, J.
\newblock Adam: A method for stochastic optimization.
\newblock \emph{arXiv preprint arXiv:1412.6980}, 2014.

\bibitem[Krizhevsky et~al.(2009)Krizhevsky, Hinton,
  et~al.]{krizhevsky2009learning}
Krizhevsky, A., Hinton, G., et~al.
\newblock Learning multiple layers of features from tiny images.
\newblock 2009.

\bibitem[LeCun \& Cortes(2010)LeCun and
  Cortes]{lecun-mnisthandwrittendigit-2010}
LeCun, Y. and Cortes, C.
\newblock {MNIST} handwritten digit database.
\newblock 2010.
\newblock URL \url{http://yann.lecun.com/exdb/mnist/}.

\bibitem[LeCun et~al.(1998)LeCun, Bottou, Orr, and M{\"u}ller]{LeCun1998}
LeCun, Y., Bottou, L., Orr, G.~B., and M{\"u}ller, K.~R.
\newblock \emph{Efficient BackProp}, pp.\  9--50.
\newblock Springer Berlin Heidelberg, Berlin, Heidelberg, 1998.
\newblock ISBN 978-3-540-49430-0.
\newblock \doi{10.1007/3-540-49430-8_2}.
\newblock URL \url{https://doi.org/10.1007/3-540-49430-8_2}.

\bibitem[Lee et~al.(2018)Lee, Zhang, Ebert, Abbeel, Finn, and
  Levine]{lee2018stochastic}
Lee, A.~X., Zhang, R., Ebert, F., Abbeel, P., Finn, C., and Levine, S.
\newblock Stochastic adversarial video prediction.
\newblock \emph{arXiv preprint arXiv:1804.01523}, 2018.

\bibitem[Lin et~al.(2020)Lin, Thekumparampil, Fanti, and Oh]{lin2019infogan}
Lin, Z., Thekumparampil, K.~K., Fanti, G., and Oh, S.
\newblock Infogan-cr: Disentangling generative adversarial networks with
  contrastive regularizers.
\newblock \emph{ICML}, 2020.

\bibitem[Liu et~al.(2015)Liu, Luo, Wang, and Tang]{liu2015faceattributes}
Liu, Z., Luo, P., Wang, X., and Tang, X.
\newblock Deep learning face attributes in the wild.
\newblock In \emph{Proceedings of International Conference on Computer Vision
  (ICCV)}, December 2015.

\bibitem[Miyato \& Koyama(2018)Miyato and Koyama]{miyato2018cgans}
Miyato, T. and Koyama, M.
\newblock cgans with projection discriminator.
\newblock \emph{arXiv preprint arXiv:1802.05637}, 2018.

\bibitem[Miyato et~al.(2018)Miyato, Kataoka, Koyama, and
  Yoshida]{miyato2018spectral}
Miyato, T., Kataoka, T., Koyama, M., and Yoshida, Y.
\newblock Spectral normalization for generative adversarial networks.
\newblock \emph{arXiv preprint arXiv:1802.05957}, 2018.

\bibitem[Neyshabur et~al.(2017)Neyshabur, Bhojanapalli, and
  Srebro]{neyshabur2017pac}
Neyshabur, B., Bhojanapalli, S., and Srebro, N.
\newblock A pac-bayesian approach to spectrally-normalized margin bounds for
  neural networks.
\newblock \emph{arXiv preprint arXiv:1707.09564}, 2017.

\bibitem[Nystrom et~al.(2015)Nystrom, Levine, Roskies, and Scott]{bridges}
Nystrom, N.~A., Levine, M.~J., Roskies, R.~Z., and Scott, J.~R.
\newblock Bridges: A uniquely flexible hpc resource for new communities and
  data analytics.
\newblock In \emph{Proceedings of the 2015 XSEDE Conference: Scientific
  Advancements Enabled by Enhanced Cyberinfrastructure}, XSEDE '15, pp.\
  30:1--30:8, New York, NY, USA, 2015. ACM.
\newblock ISBN 978-1-4503-3720-5.
\newblock \doi{10.1145/2792745.2792775}.
\newblock URL \url{http://doi.acm.org/10.1145/2792745.2792775}.

\bibitem[Odena et~al.(2018)Odena, Buckman, Olsson, Brown, Olah, Raffel, and
  Goodfellow]{odena2018generator}
Odena, A., Buckman, J., Olsson, C., Brown, T., Olah, C., Raffel, C., and
  Goodfellow, I.
\newblock Is generator conditioning causally related to gan performance?
\newblock In \emph{International Conference on Machine Learning}, pp.\
  3849--3858, 2018.

\bibitem[Pascanu et~al.(2012)Pascanu, Mikolov, and
  Bengio]{pascanu2012understanding}
Pascanu, R., Mikolov, T., and Bengio, Y.
\newblock Understanding the exploding gradient problem.
\newblock \emph{CoRR, abs/1211.5063}, 2:\penalty0 417, 2012.

\bibitem[Pascanu et~al.(2013)Pascanu, Mikolov, and
  Bengio]{pascanu2013difficulty}
Pascanu, R., Mikolov, T., and Bengio, Y.
\newblock On the difficulty of training recurrent neural networks.
\newblock In \emph{International conference on machine learning}, pp.\
  1310--1318, 2013.

\bibitem[Pennington et~al.(2017)Pennington, Schoenholz, and
  Ganguli]{pennington2017resurrecting}
Pennington, J., Schoenholz, S.~S., and Ganguli, S.
\newblock Resurrecting the sigmoid in deep learning through dynamical isometry:
  theory and practice.
\newblock \emph{arXiv preprint arXiv:1711.04735}, 2017.

\bibitem[Radford et~al.(2015)Radford, Metz, and
  Chintala]{radford2015unsupervised}
Radford, A., Metz, L., and Chintala, S.
\newblock Unsupervised representation learning with deep convolutional
  generative adversarial networks.
\newblock \emph{arXiv preprint arXiv:1511.06434}, 2015.

\bibitem[Russakovsky et~al.(2015)Russakovsky, Deng, Su, Krause, Satheesh, Ma,
  Huang, Karpathy, Khosla, Bernstein, Berg, and Fei-Fei]{ILSVRC15}
Russakovsky, O., Deng, J., Su, H., Krause, J., Satheesh, S., Ma, S., Huang, Z.,
  Karpathy, A., Khosla, A., Bernstein, M., Berg, A.~C., and Fei-Fei, L.
\newblock {ImageNet Large Scale Visual Recognition Challenge}.
\newblock \emph{International Journal of Computer Vision (IJCV)}, 115\penalty0
  (3):\penalty0 211--252, 2015.
\newblock \doi{10.1007/s11263-015-0816-y}.

\bibitem[Salimans \& Kingma(2016)Salimans and Kingma]{salimans2016weight}
Salimans, T. and Kingma, D.~P.
\newblock Weight normalization: A simple reparameterization to accelerate
  training of deep neural networks.
\newblock In \emph{Advances in neural information processing systems}, pp.\
  901--909, 2016.

\bibitem[Salimans et~al.(2016)Salimans, Goodfellow, Zaremba, Cheung, Radford,
  and Chen]{salimans2016improved}
Salimans, T., Goodfellow, I., Zaremba, W., Cheung, V., Radford, A., and Chen,
  X.
\newblock Improved techniques for training gans.
\newblock In \emph{Advances in neural information processing systems}, pp.\
  2234--2242, 2016.

\bibitem[Santurkar et~al.(2018)Santurkar, Tsipras, Ilyas, and
  Madry]{santurkar2018does}
Santurkar, S., Tsipras, D., Ilyas, A., and Madry, A.
\newblock How does batch normalization help optimization?
\newblock In \emph{Advances in Neural Information Processing Systems}, pp.\
  2483--2493, 2018.

\bibitem[Saxe et~al.(2013)Saxe, McClelland, and Ganguli]{saxe2013exact}
Saxe, A.~M., McClelland, J.~L., and Ganguli, S.
\newblock Exact solutions to the nonlinear dynamics of learning in deep linear
  neural networks.
\newblock \emph{arXiv preprint arXiv:1312.6120}, 2013.

\bibitem[Seginer(2000)]{seginer2000expected}
Seginer, Y.
\newblock The expected norm of random matrices.
\newblock \emph{Combinatorics, Probability and Computing}, 9\penalty0
  (2):\penalty0 149--166, 2000.

\bibitem[{Towns} et~al.(2014){Towns}, {Cockerill}, {Dahan}, {Foster},
  {Gaither}, {Grimshaw}, {Hazlewood}, {Lathrop}, {Lifka}, {Peterson},
  {Roskies}, {Scott}, and {Wilkins-Diehr}]{xsede}
{Towns}, J., {Cockerill}, T., {Dahan}, M., {Foster}, I., {Gaither}, K.,
  {Grimshaw}, A., {Hazlewood}, V., {Lathrop}, S., {Lifka}, D., {Peterson},
  G.~D., {Roskies}, R., {Scott}, J.~R., and {Wilkins-Diehr}, N.
\newblock Xsede: Accelerating scientific discovery.
\newblock \emph{Computing in Science Engineering}, 16\penalty0 (5):\penalty0
  62--74, 2014.

\bibitem[Tsuzuku et~al.(2018)Tsuzuku, Sato, and Sugiyama]{tsuzuku2018lipschitz}
Tsuzuku, Y., Sato, I., and Sugiyama, M.
\newblock Lipschitz-margin training: Scalable certification of perturbation
  invariance for deep neural networks.
\newblock In \emph{Advances in Neural Information Processing Systems}, pp.\
  6541--6550, 2018.

\bibitem[Warde-Farley \& Bengio(2016)Warde-Farley and
  Bengio]{warde2016improving}
Warde-Farley, D. and Bengio, Y.
\newblock Improving generative adversarial networks with denoising feature
  matching.
\newblock 2016.

\bibitem[Wei et~al.(2018)Wei, Gong, Liu, Lu, and Wang]{wei2018improving}
Wei, X., Gong, B., Liu, Z., Lu, W., and Wang, L.
\newblock Improving the improved training of wasserstein gans: A consistency
  term and its dual effect.
\newblock \emph{arXiv preprint arXiv:1803.01541}, 2018.

\bibitem[Yoshida \& Miyato(2017)Yoshida and Miyato]{yoshida2017spectral}
Yoshida, Y. and Miyato, T.
\newblock Spectral norm regularization for improving the generalizability of
  deep learning.
\newblock \emph{arXiv preprint arXiv:1705.10941}, 2017.

\bibitem[Yu et~al.(2019)Yu, Lin, Yang, Shen, Lu, and Huang]{yu2019free}
Yu, J., Lin, Z., Yang, J., Shen, X., Lu, X., and Huang, T.~S.
\newblock Free-form image inpainting with gated convolution.
\newblock In \emph{Proceedings of the IEEE International Conference on Computer
  Vision}, pp.\  4471--4480, 2019.

\bibitem[Zhang et~al.(2018)Zhang, Goodfellow, Metaxas, and
  Odena]{zhang2018self}
Zhang, H., Goodfellow, I., Metaxas, D., and Odena, A.
\newblock Self-attention generative adversarial networks.
\newblock \emph{arXiv preprint arXiv:1805.08318}, 2018.

\end{thebibliography}
\bibliographystyle{icml2021}

\appendix
\onecolumn

\section{Proof of \cref{thm:gradient-upperbound-sn}}
\label{app:proof-gradient-upperbound}

The proposition makes use of the following observation: 
	For the discriminator defined in \eqref{eq:d}, the norm of gradient for $w_t$ is upper bounded by 
	\begin{equation}
	 \brf{\nabla_{w_t} D_{\theta}(x)} \leq \brn{x} \cdot \prod_{i=1}^{L}\brlip{a_i} \cdot \prod_{i=1}^{L}\brsp{w_i} \bigg/ \brsp{w_t} \quad\text{  for }\forall t\in [1, L]
	\label{eq:lem1}
	\end{equation}

To prove this, for  simplicity of notation, let $o_a^{i} = a_i\circ l_{w_i}\circ \ldots \circ a_1 \circ l_{w_1} $, and $o_l^{i} = l_{w_i}\circ a_{i-1}\circ \ldots \circ a_1 \circ l_{w_1} $.

	It is straightforward to show that the norm of each internal output of discriminator is bounded by
	\begin{equation}
	\brn{o_a^{t}(x)} \leq \brn{x} \cdot \prod_{i=1}^{t} \brlip{a_i} \cdot \prod_{i=1}^{t} \brsp{w_i}
	\label{eq:in1}
	\end{equation}
	and
	\begin{equation}
	\brn{o_l^t(x)} \leq \brn{x} \cdot \prod_{i=1}^{t-1} \brlip{a_i} \cdot \prod_{i=1}^{t} \brsp{w_i}.
	\label{eq:in2}
	\end{equation}
	This holds because
	\begin{align*}
		\brn{o_a^t(x)} = \brn{a_i\bra{o_l^t(x)}} \leq \brlip{a_i} \cdot \brn{o_l^t(x)}
	\end{align*}
	and 
	\begin{align*}
		\brn{o_l^t(x)} = \brn{l_{w_i}\bra{o_a^{t-1}(x)}} \leq \brsp{w_t} \cdot \brn{o_a^{t-1}(x)}\;,
	\end{align*}
	from which we can show the desired inequalities by induction.

	Next, we observe that the norm of each internal gradient is bounded by
	\begin{equation}
	 \brn{ \nabla_{o_a^t\bra{x}} D_\theta\bra{x} } \leq \prod_{i=t+1}^{L} \brlip{a_i} \cdot \prod_{i=t+1}^{L} \brsp{w_i} 
	 \label{eq:in3}
	 \end{equation}
	and 
	\begin{equation} 
	\brn{ \nabla_{o_l^t\bra{x}} D_\theta\bra{x} } \leq \prod_{i=t}^{L} \brlip{a_i} \cdot \prod_{i=t+1}^{L} \brsp{w_i}.
	\label{eq:in4}
	\end{equation}
	This holds because
	\begin{align*}
		\brn{ \nabla_{o_a^t\bra{x}} D_\theta\bra{x} }  = \brn{ w_{t+1}^T\nabla_{o_l^{t+1}(x)} D_\theta(x) } \leq \brsp{w_{t+1}} \brn{\nabla_{a_l^{t+1}(x)}D_\theta(x)}
	\end{align*}
	and
	\begin{align*}
		\brn{ \nabla_{o_l^t\bra{x}} D_\theta\bra{x} } = \brn{ \inner{\nabla_{o_a^t\bra{x}} D_\theta\bra{x} }{ \brb{ a_t'(x)|_{x=o_l^t(x)} } } } \leq \brlip{a_t} \brn{ \nabla_{o_a^t\bra{x}} D_\theta\bra{x} } \;,
	\end{align*}
	from which we can show inequalities \cref{eq:in3,eq:in4} by induction.

Now we have that
	\begin{align*}
		\brf{\nabla_{w_t} D_{\theta}(x)} &= \brf{ \nabla_{o_l^t\bra{x}} D_\theta\bra{x}  \cdot  \bra{o_a^{t-1}(x)}^\T} \\
		&= \brn{\nabla_{o_l^t\bra{x}} D_\theta\bra{x}} \cdot \brn{o_a^{t-1}(x)} \\
		&\leq \prod_{i=t}^{L} \brlip{a_i} \cdot \prod_{i=t+1}^{L} \brsp{w_i} \cdot \brn{x} \cdot \prod_{i=1}^{t-1} \brlip{a_i} \cdot \prod_{i=1}^{t-1} \brsp{w_i}\\
		&=\brn{x} \cdot \prod_{i=1}^{L}\brlip{a_i} \cdot \prod_{i=1}^{L}\brsp{w_i} \bigg/ \brsp{w_t} 
	\end{align*}
	where we use \cref{eq:in1,eq:in2,eq:in3,eq:in4} at the inequality.
		The upper bound of gradient's Frobenius norm for spectrally-normalized discriminators follows directly.

\section{Proof of \cref{thm:scaling}}
\label{app:proof-scaling}
\begin{proof}
	As $l_{w}(x)$ is a linear transformation, we have $l_{cw}(x)=c\cdot l_{w}(x)$, and $l_w(cx)=c\cdot l_w(x)$. Moreover, since ReLU and leaky ReLU is linear in $\Rb^+$ and $\Rb^-$ region, we have $a_i(cx)=c\cdot a_i(x)$. Therefore, we have
	\begin{align*}
	D_\theta'(x) 
	&= \bra{a_L \circ l_{c_L\cdot w_L} \circ a_{L-1} \circ l_{c_{L-1}\cdot w_{L-1}} \circ \ldots \circ a_1 \circ l_{c_1\cdot w_1}}(x)\\
	&= \prod_{i=1}^{L}c_i\cdot \bra{a_L \circ l_{w_L} \circ a_{L-1} \circ l_{w_{L-1}} \circ \ldots \circ a_1 \circ l_{w_1}}(x)\\
	&= D_\theta(x)
	\end{align*}
\end{proof}

\section{Additional Analysis of Gradient}
\label{app:gradient}

In \cref{sec:explosion}, we discuss the gradients with respect to $w_i'=\frac{w_i}{u_i^Tw_iv_i}$, where $u_i,v_i$ are the singular vectors corresponding to the largest singular values. In this section we discuss the gradients with respect the actual parameter $w_i$. From Eq. (12) in \cite{miyato2018spectral} we know
\begin{align*}
	\nabla_{w_t} D_{\theta}(x) = \frac{1}{\brsp{w_t}} \bra{ \nabla_{w_t'}D_{\theta}(x) -  \bra{\bra{\nabla_{o_l^t\bra{x}} D_\theta\bra{x}}^To_l^t\bra{x}} \cdot u_tv_t^T}
\end{align*}

From \cref{app:proof-gradient-upperbound}, we know that $\brf{\nabla_{w_t'}D_{\theta}(x)}$, $\brn{\nabla_{o_l^t\bra{x}} D_\theta\bra{x}}$, and $\brn{o_l^t\bra{x}}$ have upper bounds. Furthermore, $\brf{u_tv_t^T}=1$. Therefore, $\brf{\nabla_{w_t'}D_{\theta}(x) -  \bra{\bra{\nabla_{o_l^t\bra{x}} D_\theta\bra{x}}^To_l^t\bra{x}} \cdot u_tv_t^T}$ has an upper bound. From Theorem 1.1 in \cite{seginer2000expected} we know that if $w_t$ is initialized with i.i.d random variables from uniform or Gaussian distribution, $\Eb\bra{\brsp{w_t}}$ is lower bounded away from zero at initialization. 
So $\brf{\nabla_{w_t} D_{\theta}(x)}$ is upper bounded at initialization. 
Moreover, we observe empirically that $\brsp{w_t}$ is usually increasing during training. Therefore, $\brf{\nabla_{w_t} D_{\theta}(x)}$ is typically upper bounded during training as well.

\section{Analysis of Hessian}
\label{app:hessian}

The following proposition states that spectral  normalization also gives an upper bound on $\brsp{H_{w_i}(D_\theta)(x)}$ for networks with ReLU or leaky ReLU internal activations.

\begin{proposition}[Upper bound of Hessian's spectral norm]
	\label{thm:hessian-upperbound}
	Consider the discriminator defined in \cref{eq:d}. Let $H_{w_i}(D_\theta)(x)$ denote the Hessian of $D_\theta$ at x with respect with the vector form of $w_i$. If the internal activations are ReLU or leaky ReLU, the spectral norm of $H_{w_i}(D_\theta)(x)$ is upper bounded by 
	$$\brsp{H_{w_i}(D_\theta)(x)} \leq \brsp{ H_{o_l^L(x)} D_\theta(x) } \cdot \brn{x}^2 \cdot \prod_{i=1}^L \brsp{w_{i}}^2 \bigg/ \brsp{w_{t}}^2$$
	
\end{proposition}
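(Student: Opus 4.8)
The plan is to reprise the structure of the gradient bound in \cref{app:proof-gradient-upperbound}, upgrading the first-order chain-rule reasoning to a second-order one, and to exploit that ReLU and leaky ReLU are piecewise linear so their second derivatives vanish on the interior of every linear region (the kinks form a measure-zero set). Throughout I take the Hessian with respect to $w_t$, matching the $\brsp{w_t}^2$ on the right-hand side.

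First I would isolate the dependence of $D_\theta(x)$ on $w_t$. Exactly as in the gradient proof, $o_a^{t-1}(x)$ is independent of $w_t$ and $o_l^t(x)=w_t\, o_a^{t-1}(x)$ is \emph{linear} in $w_t$; moreover the map $o_l^t(x)\mapsto o_l^L(x)$ is a composition of the linear maps $l_{w_{t+1}},\ldots,l_{w_L}$ with the piecewise-linear activations $a_t,\ldots,a_{L-1}$, hence is itself piecewise linear. Composing, the map $\phi:\mathrm{vec}(w_t)\mapsto o_l^L(x)$ is piecewise linear, so on the interior of each linear region its Hessian is the zero operator, while all curvature of $D_\theta(x)=a_L\bra{o_l^L(x)}$ is concentrated in the final activation $a_L$.

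The key step is the second-order chain rule applied to $D_\theta=a_L\circ\phi$. Because $\phi$ has vanishing Hessian, the usual Faà di Bruno term containing $\nabla^2\phi$ disappears and only the ``sandwich'' term survives:
\[
H_{w_t}(D_\theta)(x) = (J_\phi)^\T\, H_{o_l^L(x)} D_\theta(x)\, (J_\phi),
\]
where $J_\phi$ is the Jacobian of $\phi$. Submultiplicativity and $\brsp{A^\T}=\brsp{A}$ then give
\[
\brsp{H_{w_t}(D_\theta)(x)} \leq \brsp{J_\phi}^2 \cdot \brsp{H_{o_l^L(x)} D_\theta(x)}.
\]
It remains to bound $\brsp{J_\phi}$. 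Factoring $\phi$ through $o_l^t(x)$ gives $\brsp{J_\phi}\leq \brsp{\nabla_{o_l^t(x)} o_l^L(x)}\cdot \brn{o_a^{t-1}(x)}$, where the second factor is the spectral norm of the Kronecker-structured Jacobian of $w_t\mapsto w_t\,o_a^{t-1}(x)$, equal to $\brn{o_a^{t-1}(x)}$. I would then bound $\brn{o_a^{t-1}(x)}$ with \cref{eq:in1} and $\brsp{\nabla_{o_l^t(x)} o_l^L(x)}$ by the same induction underlying \cref{eq:in3,eq:in4}, stopping one layer short so no $a_L$ factor enters. Using $\brlip{a_i}\leq 1$ for (leaky) ReLU, all activation factors drop out, and squaring produces exactly $\brn{x}^2 \prod_{i=1}^{L}\brsp{w_i}^2\big/\brsp{w_t}^2$, which completes the bound.

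The main obstacle I anticipate is making the second-order chain rule rigorous: I must justify that propagating second derivatives through $a_t,\ldots,a_{L-1}$ yields only the first-derivative diagonal factors $\diag{a_i'}$ and contributes no genuine curvature term, so that $\phi$ truly has zero Hessian. This is precisely where piecewise-linearity of (leaky) ReLU is indispensable, and where I would restrict attention to the interior of a single linear region. A secondary, more mechanical step is the vectorization bookkeeping: verifying that the Jacobian of $w_t\mapsto w_t\,o_a^{t-1}(x)$ has the form $I\otimes o_a^{t-1}(x)^\T$, so its spectral norm is exactly $\brn{o_a^{t-1}(x)}$ and its square supplies the $\brn{x}^2$ together with the truncated product of weight spectral norms.
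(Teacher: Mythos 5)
Your proposal is correct and takes essentially the same route as the paper: both exploit the piecewise linearity of the internal (leaky) ReLU activations so that all curvature comes from $a_L$, reducing the Hessian to a congruence $J^{\T}\, H_{o_l^L(x)}D_\theta(x)\, J$ whose Jacobian factors are then bounded by the forward estimate \cref{eq:in1} and the backward product of weight spectral norms, with the activation Lipschitz factors dropped. The only difference is organizational: the paper unrolls your single global chain-rule application into a layer-by-layer induction on internal Hessians (\cref{thm:internal_hessian}) followed by an entry-wise Kronecker formula for the $w_t$-block, which is exactly your $I\otimes o_a^{t-1}(x)^{\T}$ Jacobian viewed entry-wise.
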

The proof is in \cref{sec:proof_hessian}.
Following \cref{thm:hessian-upperbound}, we can easily show the upper bound of Hessian's spectral norm for spectral normalized discriminators.
\begin{corollary}[Upper bound of Hessian's spectral norm for spectral normalization]
	If the internal activations are ReLU or leaky ReLU, and $\brsp{w_i} \leq 1$ for all $i\in [1, L]$, then
	$$\brsp{H_{w_i}(D_\theta)(x)} \leq \brsp{ H_{o_l^L(x)} D_\theta(x) } \cdot \brn{x}^2 \;.$$
	Moreover, if the activation for the last layer is sigmoid (e.g., for vanilla GAN \cite{goodfellow2014generative}), we have
	$$\brsp{H_{w_i}(D_\theta)(x)} \leq 0.1\brn{x}^2 \;;$$
	if the activation function for the last layer is identity (e.g., for WGAN-GP \cite{gulrajani2017improved}), we have
	$$\brsp{H_\theta(D_\theta)(x)}=0 \;.$$
\end{corollary}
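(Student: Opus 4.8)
The plan is to exploit the fact that, with ReLU or leaky ReLU internal activations, the map $D_\theta$ is \emph{affine} in $w_t$ up to the last activation $a_L$. On the full-measure region where the activation patterns are locally constant, each internal $a_i$ $(i<L)$ acts as a fixed diagonal linear operator (with entries $1$ or the leaky slope), whose second derivative vanishes. Consequently the only source of curvature in $D_\theta(x)$, viewed as a function of $\mathrm{vec}(w_t)$, is the final activation $a_L$, so all second-order information funnels through $H_{o_l^L(x)}D_\theta(x)$. This is the observation that makes the Hessian's spectral norm controllable.

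To make this precise, I would first isolate the dependence on $w_t$. Since $o_l^t(x) = w_t\, o_a^{t-1}(x)$ is linear in $w_t$ and $o_a^{t-1}(x)$ does not depend on $w_t$, the vectorization identity $\mathrm{vec}(w_t y) = \bra{y^\T \otimes I}\,\mathrm{vec}(w_t)$ gives $o_l^t(x) = M\,\mathrm{vec}(w_t)$ with $M = \bra{o_a^{t-1}(x)}^\T \otimes I$. On the region of frozen activation patterns, the downstream map from $o_l^t(x)$ to $o_l^L(x)$ is a \emph{fixed} linear operator $P = l_{w_L}\circ a_{L-1}\circ \ldots \circ l_{w_{t+1}}\circ a_t$, so $o_l^L(x) = P M\,\mathrm{vec}(w_t)$ and $D_\theta(x) = a_L\bra{P M\,\mathrm{vec}(w_t)}$. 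Because the inner map $\mathrm{vec}(w_t)\mapsto o_l^L(x)$ is linear, the chain rule collapses to $H_{w_t}(D_\theta)(x) = (PM)^\T \, H_{o_l^L(x)}D_\theta(x)\, (PM)$, with all curvature coming from $a_L$.

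It then remains to bound spectral norms. Submultiplicativity gives $\brsp{H_{w_t}(D_\theta)(x)} \leq \brsp{P}^2\,\brsp{M}^2\,\brsp{H_{o_l^L(x)}D_\theta(x)}$. For $M$, the Kronecker-product identity $\brsp{A\otimes B}=\brsp{A}\brsp{B}$ yields $\brsp{M} = \brn{o_a^{t-1}(x)}$, which I bound using \cref{eq:in1} by $\brn{x}\prod_{i=1}^{t-1}\brlip{a_i}\prod_{i=1}^{t-1}\brsp{w_i}$. For $P$, each frozen internal activation contributes a factor at most $\brlip{a_i}$ and each weight its spectral norm, giving $\brsp{P}\leq \prod_{i=t}^{L-1}\brlip{a_i}\prod_{i=t+1}^{L}\brsp{w_i}$. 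Multiplying the squares, the weight factors telescope to $\bra{\prod_{i=1}^{L}\brsp{w_i}/\brsp{w_t}}^2$ and the Lipschitz factors combine into $\bra{\prod_{i=1}^{L-1}\brlip{a_i}}^2\leq 1$, since internal ReLU/leaky ReLU activations are $1$-Lipschitz; discarding the latter recovers the claimed bound.

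The main obstacle is the non-differentiability of ReLU/leaky ReLU at its kinks: the reduction above is valid only where the activation patterns are locally constant. I would therefore restrict attention to the full-measure set of inputs at which no internal pre-activation coordinate vanishes, so that each $a_i$ $(i<L)$ is genuinely affine in a neighborhood and the Hessian is well defined; the kink set has measure zero and does not affect the stated inequality. The remaining effort is pure bookkeeping: verifying the Kronecker vectorization of $w_t\, o_a^{t-1}(x)$ and checking that the weight and Lipschitz products telescope exactly as above.
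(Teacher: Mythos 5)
Your argument for the main inequality is correct, and it is essentially the paper's own argument in compressed form. The paper proves the underlying bound (\cref{thm:hessian-upperbound}) in two stages: a backward induction (\cref{thm:internal_hessian}) propagating $\brsp{H_{o_a^t(x)}D_\theta(x)} \le \brsp{H_{o_l^L(x)}D_\theta(x)}\prod_{i=t+1}^L\brsp{w_i}^2$ layer by layer, followed by the entrywise identity $\frac{\partial^2 D_\theta}{\partial \bra{w_t}_{ij}\partial \bra{w_t}_{kl}} = \bra{H_{o_l^t}(D_\theta)(x)}_{ik}\bra{o_a^{t-1}(x)}_j\bra{o_a^{t-1}(x)}_l$ at layer $t$. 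Your version collapses both stages into the single congruence $H_{w_t}(D_\theta)(x) = (PM)^{\T}\, H_{o_l^L(x)}D_\theta(x)\,(PM)$, exploiting local linearity of $\mathrm{vec}(w_t)\mapsto o_l^L(x)$; your Kronecker factor $M$ with $\brsp{M}=\brn{o_a^{t-1}(x)}$ is exactly the paper's outer-product structure, and your bound on $\brsp{P}$ is what the paper's induction produces. Both routes use the same ingredients (piecewise linearity of internal activations, \cref{eq:in1}, submultiplicativity), so this is a difference of bookkeeping rather than substance; if anything yours is cleaner, and it makes the measure-zero differentiability caveat explicit where the paper leaves it implicit.

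However, the statement you were asked to prove is the corollary, which has three claims, and you establish only the first---and even that one not quite completely, since you never invoke the hypothesis $\brsp{w_i}\le 1$ (you need the one-line specialization $\prod_{i=1}^L\brsp{w_i}^2\big/\brsp{w_t}^2 = \prod_{i\ne t}\brsp{w_i}^2 \le 1$). The remaining two claims require bounding $\brsp{H_{o_l^L(x)}D_\theta(x)}$ itself: since $o_l^L(x)$ is scalar, this Hessian is just $a_L''\bra{o_l^L(x)}$. For the sigmoid case you need $\sup_z \brd{\sigma''(z)} = \sup_z \brd{\sigma(z)(1-\sigma(z))(1-2\sigma(z))} = \frac{1}{6\sqrt{3}} \approx 0.096 \le 0.1$, which yields the $0.1\brn{x}^2$ bound; for the identity case $a_L''\equiv 0$, so the Hessian vanishes. (Your linearity argument in fact delivers this last claim per layer, i.e., for $H_{w_i}$; the paper's notation $H_\theta$ in that line should be read the same way, since the joint Hessian across layers has nonzero cross terms.) These additions are one-liners, but without them the corollary as stated is not fully proven.
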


\subsection{Proof of \cref{thm:hessian-upperbound}}
\label{sec:proof_hessian}

\begin{lemma}
	\label{thm:internal_hessian}
	The spectral norm of  each internal Hessian is bounded by
	$$ \brsp{ H_{o_a^t(x)}D_\theta(x) } \leq \brsp{ H_{o_l^L(x)} D_\theta(x) } \cdot \prod_{i=t+1}^L \brsp{w_{i}}^2$$
	and 
	$$ \brsp{ H_{o_l^t(x)}D_\theta(x) } \leq  \brsp{ H_{o_l^L(x)} D_\theta(x) } \cdot \prod_{i=t+1}^L \brsp{w_{i}}^2 $$
\end{lemma}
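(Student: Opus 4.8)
The plan is to prove both inequalities simultaneously by backward induction on $t$, descending from $t=L$, using the single structural fact that ReLU and leaky ReLU are piecewise linear: their first derivatives are bounded in magnitude by $1$, while their second derivatives vanish wherever they are differentiable. Concretely, I would establish two one-step relations — one for passing through a linear layer $l_{w_{t+1}}$ and one for passing through an activation $a_t$ — and then chain them.

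For the linear step, observe that $o_l^{t+1}(x) = w_{t+1}\, o_a^t(x)$ depends linearly on $o_a^t(x)$. Viewing $D_\theta$ as a function of $o_a^t(x)$, the Hessian chain rule for a composition with a linear map has no second-derivative (curvature) term, so $H_{o_a^t(x)}D_\theta(x) = w_{t+1}^\T \bra{H_{o_l^{t+1}(x)}D_\theta(x)} w_{t+1}$. Submultiplicativity of the spectral norm, together with $\brsp{w_{t+1}^\T}=\brsp{w_{t+1}}$, then gives
$$\brsp{H_{o_a^t(x)}D_\theta(x)} \leq \brsp{w_{t+1}}^2 \brsp{H_{o_l^{t+1}(x)}D_\theta(x)}.$$

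For the activation step, $o_a^t(x) = a_t\bra{o_l^t(x)}$ is applied element-wise, so its Jacobian is the diagonal matrix $J_t = \diag{a_t'\bra{o_l^t(x)}}$ whose entries are pointwise slopes of $a_t$ and hence have absolute value at most $1$, giving $\brsp{J_t}\leq 1$. Crucially, since each coordinate map is affine on its active region, the Hessian of this inner map is zero (a.e.), so the chain rule again collapses to a sandwich $H_{o_l^t(x)}D_\theta(x) = J_t^\T \bra{H_{o_a^t(x)}D_\theta(x)} J_t$, and therefore
$$\brsp{H_{o_l^t(x)}D_\theta(x)} \leq \brsp{J_t}^2 \brsp{H_{o_a^t(x)}D_\theta(x)} \leq \brsp{H_{o_a^t(x)}D_\theta(x)}.$$

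Chaining the two relations yields $\brsp{H_{o_l^t(x)}D_\theta(x)} \leq \brsp{H_{o_a^t(x)}D_\theta(x)} \leq \brsp{w_{t+1}}^2 \brsp{H_{o_l^{t+1}(x)}D_\theta(x)}$, and a backward induction starting from the base case at $t=L$ (where the product $\prod_{i=L+1}^L\brsp{w_i}^2$ is empty) propagates the factor $\prod_{i=t+1}^L\brsp{w_i}^2$ onto both $H_{o_a^t(x)}D_\theta(x)$ and $H_{o_l^t(x)}D_\theta(x)$, which is exactly the claim. The only delicate point — and the main obstacle — is the non-smoothness of ReLU/leaky ReLU at their kinks, where the Hessian is not classically defined; I would resolve this precisely as the surrounding analysis does, evaluating all Hessians at points of differentiability (a full-measure set), which is what legitimizes dropping the activation's second-derivative term. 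Everything else is a routine combination of the Hessian chain rule and spectral-norm submultiplicativity.
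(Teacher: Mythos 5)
Your proposal is correct and follows essentially the same route as the paper's proof: the same two one-step ``sandwich'' identities (conjugation by $w_{t+1}$ for the linear layer, and by the diagonal Jacobian $\diag{a_t'\bra{o_l^t(x)}}$ for the activation, where piecewise linearity kills the second-derivative term), combined with submultiplicativity of the spectral norm and backward induction. In fact your write-up is slightly more careful than the paper's, since you state the evaluation point of the activation Jacobian correctly and explicitly address the a.e.\ differentiability of ReLU/leaky ReLU, both of which the paper glosses over.
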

\begin{proof}
	We have 
	\begin{align*}
		\brsp{ H_{o_a^t(x)}D_\theta(x) } &= \brsp{ w_{t+1}^T \cdot \nabla_{a_l^{t+1}(x)} D_\theta(x) \cdot w_{t+1}} \\
		&\leq \brsp{\nabla_{a_l^{t+1}(x)} D_\theta(x) } \brsp{w_{t+1}}^2 \;.
	\end{align*}
	We also have 
	\begin{align*}
		\brsp{ H_{o_l^t(x)}D_\theta(x) } 
		&= \brsp{ \diag{ [a_t'(x)]_{x=o_a^t(x) } } \cdot H_{o_a^{t+1}(x)} D_\theta (x) \cdot \diag{ [a_t'(x)]_{x=o_a^t(x) } } } \\
		&\leq \brsp{ H_{o_a^{t+1}(x)} D_\theta (x) }
	\end{align*}
	where we use the property that ReLU or leaky ReLU is piece-wise linear.
	The desired inequalities then follow by induction.
\end{proof}
Now let's come back to the proof for \cref{thm:hessian-upperbound}.
\begin{proof}
	We have
	\begin{align*}
		\frac{\partial D_\theta}{\partial \bra{w_t}_{ij} \partial \bra{w_t}_{kl}}
		= \bra{ H_{o_l^t}(D_\theta)(x) }_{ik} \cdot \bra{o_a^{t-1}(x)}_j \cdot  \bra{o_a^{t-1}(x)}_l \;.
	\end{align*}
	Therefore,
	\begin{align*}
		\brsp{H_{w_i}(D_\theta)(x)} \leq \brsp{H_{o_l^t}(D_\theta)(x) } \brinfinity{o_a^{t-1}(x)}^2 \leq \brsp{H_{o_l^t}(D_\theta)(x) } \brn{{o_a^{t-1}(x)}}^2
	\end{align*}
	Applying \cref{eq:in1} and \cref{thm:internal_hessian} we get
	\begin{align*}
		\brsp{H_{w_i}(D_\theta)(x)} 
		&\leq  \brsp{ H_{o_l^L(x)} D_\theta(x) } \cdot \prod_{i=t+1}^L \brsp{w_{i}}^2\cdot 
		\brn{x}^2 \cdot \prod_{i=1}^{t-1} \brsp{w_i}^2\\
		&= \brsp{ H_{o_l^L(x)} D_\theta(x) } \cdot \brn{x}^2 \cdot \prod_{i=1}^L \brsp{w_{i}}^2 \bigg/ \brsp{w_{t}}^2
	\end{align*}
\end{proof}

\section{Proof of \cref{thm:scaling-upperbound-practical}}
\label{app:scaling-upperbound-practical}
\begin{proof}
	For any discriminator $D_\theta=a_L \circ l_{w_L} \circ a_{L-1} \circ l_{w_{L-1}} \circ \ldots \circ a_1 \circ l_{w_1}$, consider $\theta'=\brc{w_t' \triangleq c_tw_t}_{t=1}^L$ with the constraint $\prod_{i=1}^{L} c_i=1$ and $c_i\in \Rb^+$. Let $Q=\brf{\nabla_{w_i'}D_{\theta'}(x)} \brsp{w_i'}$. We have
	\begin{align*}
		\brf{\nabla_{\theta'} D_{\theta'}(x)}
		&= \sqrt{ \sum_{i=1}^{L} \brf{ \nabla_{w_i'}D_{\theta'}(x) }^2 }\\
		&= \sqrt{ \sum_{i=1}^{L} \frac{Q^2}{c_i^2\brsp{w_i}^2} }\\
		&\geq \sqrt{ L  \bra{\prod_{i=1}^{L} \frac{Q^2}{c_i^2\brsp{w_i}^2} }^{1/L} }\\
		&= \sqrt{L}\cdot Q^{1/L} \cdot \bra{\prod_{i=1}^{L} \brsp{w_i}}^{-1/L}
	\end{align*}
	and the equality is achieved iff $c_i^2\brsp{w_i}^2=c_j^2\brsp{w_j}^2,\;\forall i,j\in [1,L]$ according to AM-GM inequality. When $c_i^2\brsp{w_i}^2=c_j^2\brsp{w_j}^2,\;\forall i,j\in [1,L]$, we have $c_t=\prod_{i=1}^L \brsp{w_i}^{1/L} \bigg/ \brsp{w_t}$.
\end{proof}

\section{Proof of \cref{thm:sn-variance}}
\label{app:sn-variance}
\begin{proof}

	Since $a_{ij}$ are symmetric random variables, we know $\Eb\bra{\frac{a_{ij}}{\brsp{A}}} = 0$. 
	Further, by symmetry, we have that for any $(i,j)\neq (h,\ell)$, 
	$\Eb\bra{ \frac{a_{ij}^2}{\brsp{A}^2} } = \Eb\bra{ \frac{a_{h\ell}^2}{\brsp{A}^2} }$.
	Therefore, we have
	\begin{align*}
		\var\bra{ \frac{a_{ij}}{\brsp{A}} }
		= \Eb\bra{ \frac{a_{ij}^2}{\brsp{A}^2} }
		= \frac{1}{m n}\cdot  \Eb\bra{\frac{\sum_{i=1}^{m}\sum_{j=1}^{n} a_{ij}^2}{\brsp{A}^2} }
		= \frac{1}{mn}\cdot \Eb\bra{ \frac{\brf{A}^2}{ \brsp{A}^2}}
	\end{align*}
	Our approach will be to upper and lower bound the quantity $\frac{1}{mn}\cdot \Eb\bra{ \frac{\brf{A}^2}{ \brsp{A}^2}}$.
	
	\paragraph{Upper bound}
	Assume the singular values of $A$ are $\sigma_1\geq \sigma_2\geq \ldots \geq \sigma_{\min\brc{m,n}}$. We have 
	\begin{align*}
		\frac{1}{mn}\cdot \Eb\bra{ \frac{\brf{A}^2}{ \brsp{A}^2}}
		=\frac{1}{mn} \cdot \Eb\bra{ \frac{\sum_{i=1}^{\min\brc{m,n}}  \sigma_i^2}{\sigma_1^2} } \leq \frac{\min\brc{m,n}}{mn} = \frac{1}{\max\brc{m, n}} \;\;,
	\end{align*}
	which gives the desired upper bound.

	\paragraph{Lower bound}	
	Now for the lower bound, if $a_{ij}$ are drawn from zero-mean Gaussian distribution and $\max\brc{m, n}\geq 3$, we have
	\begin{align}
		&\;\;\;\; \frac{1}{mn}\cdot \Eb\bra{ \frac{\brf{A}^2}{ \brsp{A}^2}} \label{eq:original} \\
		&= \frac{1}{mn}\cdot \Eb\bra{ \frac{1}{ \brsp{A}^2 / \brf{A}^2}} \nonumber \\
		&\geq \frac{1}{mn} \cdot \frac{1}{\Eb \bra{\brsp{ \frac{A}{\brf{A}} }^2}} \nonumber\\
		&= \frac{1}{mn} \cdot \frac{1}{\Eb\bra{ \brsp{B}^2 }} \label{eq:B} 
	\end{align}
	where $B \in R^{m\times n}$ is uniformly sampled from the sphere of $m\times n$-dimension unit ball. 
	We use the following lemma to lower bound \eqref{eq:B}. 
	\begin{lemma}[Theorem 1.1 in \cite{seginer2000expected}]
	\label{thm:spnorm-sphere}
	Assume $A \in R^{m\times n}$ is uniformly sampled from the sphere of $m\times n$-dimension unit ball. When $\max\brc{m,n}\geq 3$, we have
	$$\Eb\bra{ \brsp{A}^2 } \leq K^2\bra{ \Eb \bra{ \max_{1\leq i\leq m}\brn{a_{i\bullet}}^2 }  +\Eb \bra{ \max_{1\leq j\leq n}\brn{a_{\bullet j}}^2 } }\;\;,$$
	where $K$ is a constant which does not depend on $m, n$. Here $a_{i\bullet}$ denotes the $i$-th row of $A$, and $a_{\bullet j}$ denotes the $j$-th column of $A$.\footnote{Note that the original theorem in \cite{seginer2000expected} requires that the entries of $A$ be i.i.d. symmetric random variables, whereas in our case the entries are not i.i.d., as we require $\brf{A}=1$. However, the i.i.d. assumption in their proof is only used to ensure that $A$, $S_{\sigma^{(1)},\epsilon^{(1)}}\bra{A}$, and $S_{\sigma^{(2)},\epsilon^{(2)}}\bra{A}$ have the same distribution, where 
$\sigma^{(t)}$ for $t=0,1$ are vectors of independent random permutations; $\epsilon^{(t)}$ for $t=0,1$ are matrices of i.i.d. random variables with equal probability of being $\pm 1$; and
$S_{\sigma^{(1)},\epsilon^{(1)}}\bra{A} = \bra{\epsilon^{(1)}_{ij}\cdot a_{i,\sigma^{(1)}_i(j)}}_{i,j}$ and $S_{\sigma^{(2)},\epsilon^{(2)}}\bra{A} = \bra{\epsilon^{(2)}_{ij}\cdot a_{\sigma^{(2)}_j(i), j}}_{i,j}$. Our matrix $A$ satisfies this requirement, and therefore the same theorem holds.}
\end{lemma}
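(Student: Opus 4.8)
The plan is to prove \cref{thm:spnorm-sphere} by reducing it to the original statement of Seginer's theorem \cite{seginer2000expected} for matrices with i.i.d.\ symmetric entries, and then arguing that the only probabilistic input that proof actually requires is a certain distributional invariance — one that the uniform-on-sphere ensemble happens to share despite its entries being dependent. (Since the constant $K$ in Seginer's comparison is dimension-free for each fixed moment order, the $p=2$ instance yields exactly the squared-norm bound $\Eb\bra{\brsp{A}^2} \leq K^2\bra{\Eb \max_i \brn{a_{i\bullet}}^2 + \Eb \max_j \brn{a_{\bullet j}}^2}$ stated here.)

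First I would recall the architecture of Seginer's argument: the operator norm $\brsp{A}$ is controlled by comparing $A$ against its symmetrized copies $S_{\sigma^{(1)},\epsilon^{(1)}}\bra{A}$ and $S_{\sigma^{(2)},\epsilon^{(2)}}\bra{A}$, where the $\sigma^{(t)}$ are tuples of independent uniform permutations (acting within rows, respectively columns) and the $\epsilon^{(t)}$ are independent uniform sign matrices; the desired upper bound then emerges from chaining/comparison estimates applied to these symmetrized matrices in terms of the maximal row and column norms. Next I would isolate precisely where independence of the entries enters. The assertion — already anticipated in the footnote — is that independence is used \emph{only} to guarantee $A \deq S_{\sigma^{(1)},\epsilon^{(1)}}\bra{A} \deq S_{\sigma^{(2)},\epsilon^{(2)}}\bra{A}$, after which every estimate treats the symmetrized matrices as fixed samples and invokes no further independence.

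Having isolated that requirement, I would then establish the invariance for our ensemble. Each map $S_{\sigma,\epsilon}$ sends every entry to another entry position (permuting within a row or within a column) and multiplies it by $\pm 1$; viewing $A$ as a vector in $\Rb^{mn}$, this is a signed permutation of coordinates, hence an orthogonal transformation that preserves $\brf{A}$. Because the uniform distribution on the Frobenius unit sphere is invariant under the full orthogonal group of $\Rb^{mn}$ — in particular under every signed permutation — we get $S_{\sigma,\epsilon}\bra{A} \deq A$ for \emph{every} fixed $\sigma,\epsilon$, and averaging over the independent random $\sigma^{(t)},\epsilon^{(t)}$ preserves this identity in distribution. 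With the invariance in hand, Seginer's estimates apply verbatim and deliver the stated bound, completing the reduction.

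The step I expect to be the main obstacle is the audit in the second paragraph: one must trace through Seginer's proof carefully enough to be certain that independence of the entries is genuinely exploited only through the distributional identity above, and that no concentration, martingale, or moment step silently reintroduces a full independence assumption that the uniform-on-sphere ensemble cannot supply. Everything else (the orthogonal-invariance check and the $p=2$ specialization) is routine once that audit is secured.
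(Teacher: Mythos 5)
Your proposal is correct and follows essentially the same route as the paper, which proves this lemma purely via the footnote: Seginer's argument uses independence only to ensure $A \deq S_{\sigma^{(1)},\epsilon^{(1)}}\bra{A} \deq S_{\sigma^{(2)},\epsilon^{(2)}}\bra{A}$, and the uniform-on-sphere ensemble satisfies this. Your orthogonal-invariance argument (signed permutations of the $mn$ coordinates preserve the uniform measure on the Frobenius unit sphere) is a welcome explicit justification of the invariance claim that the paper's footnote merely asserts.
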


	We thus have that 
	\begin{align*}
	   \frac{1}{mn} \cdot \frac{1}{\Eb\bra{ \brsp{B}^2 }} 
		\geq \frac{1}{mn} \cdot \frac{1}{K^2\bra{ \Eb \bra{ \max_{1\leq i\leq m}\brn{b_{i\bullet}}^2 }  +\Eb \bra{ \max_{1\leq j\leq n}\brn{b_{\bullet j}}^2 } }}.
	\end{align*}

	Hence, we need to upper bound $\Eb \bra{ \max_{1\leq i\leq m}\brn{b_{i\bullet}}^2 }$ and $\Eb \bra{ \max_{1\leq j\leq n}\brn{b_{\bullet j}}^2 }$.
	Let $z\in \Rb^m$ be a vector uniformly sampled from the sphere of $m$-dimension unit ball. 
	Observe that $z \deq [\brn{b_{1\bullet}}, ..., \brn{b_{m\bullet}}]$.
	The following lemma upper bounds the square of the infinity norm of this vector. 
	
	\begin{lemma}
	\label{thm:inifinty-norm-sphere}
	Assume $z=\brb{z_1,z_2,...,z_n}$ is uniformly sampled from the sphere of $n$-dimension unit ball, where $n\geq 2$. Then we have
	$$\Eb\bra{ \max_{1\leq i \leq n} z_i^2  } \leq \frac{4\log(n)}{n-1} .$$
\end{lemma}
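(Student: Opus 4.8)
The plan is to convert the expectation of the maximum into an integral of a tail probability, control that tail by a union bound over the $n$ coordinates, and then reduce everything to one sharp tail estimate for a single coordinate of a uniform point on the sphere. Concretely, I would start from the layer-cake identity $\Eb\bra{\max_{1\le i\le n} z_i^2} = \int_0^1 \Pb\bra{\max_{1\le i\le n} z_i^2 > t}\, dt$. Since the law of $z$ is invariant under permutations of coordinates, the coordinates are exchangeable, so a union bound combined with the trivial bound by $1$ gives $\Pb\bra{\max_i z_i^2 > t} \le \min\brc{1,\, n\,\Pb\bra{z_1^2 > t}}$.

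The core step is the single-coordinate tail bound
\[
\Pb\bra{z_1^2 > t} \le (1-t)^{(n-1)/2}, \quad \text{for } t\in[0,1].
\]
To prove it, I would first record the standard fact that the marginal density of one coordinate of a uniform point on the $(n-1)$-sphere is $f(x) = C_n (1-x^2)^{(n-3)/2}$ on $[-1,1]$, with $C_n = \Gamma(n/2)/\bra{\sqrt{\pi}\,\Gamma((n-1)/2)}$ (most transparently seen via the representation $z = g/\brn{g}$ with $g$ having i.i.d.\ standard normal entries). By symmetry, $\Pb\bra{z_1^2 > t} = 2C_n\int_{\sqrt{t}}^1 (1-x^2)^{(n-3)/2}\,dx$. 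I would then introduce the auxiliary function $G(s) = (1-s^2)^{(n-1)/2} - 2C_n\int_s^1 (1-x^2)^{(n-3)/2}\,dx$ and show $G\ge 0$ on $[0,1]$: one checks directly that $G(0)=G(1)=0$, while $G'(s) = (1-s^2)^{(n-3)/2}\bra{2C_n - (n-1)s}$ is positive then negative, so $G$ is unimodal and hence nonnegative between its two zeros. Evaluating at $s=\sqrt{t}$ yields the displayed tail bound.

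With the tail bound in hand, the remainder is bookkeeping. I would split the integral at the threshold $t_0$ solving $n(1-t_0)^{(n-1)/2} = 1$, i.e.\ $1-t_0 = n^{-2/(n-1)}$. Below $t_0$ the integrand is capped at $1$, contributing $t_0 = 1 - e^{-2\log(n)/(n-1)} \le \frac{2\log n}{n-1}$ by $1-e^{-a}\le a$; above $t_0$ one integrates $n(1-t)^{(n-1)/2}$ to get $\frac{2n}{n+1}(1-t_0)^{(n+1)/2} = \frac{2n}{n+1}\, n^{-(n+1)/(n-1)} \le \frac{2}{n}$, since the exponent $(n+1)/(n-1)\ge 1$. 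Adding the two pieces gives $\Eb\bra{\max_i z_i^2} \le \frac{2\log n}{n-1} + \frac{2}{n}$, and because $\frac{2}{n} \le \frac{2\log n}{n-1}$ for all $n\ge 2$ (equivalently $\frac{n-1}{n}\le \log n$, which holds since $\log 2 > \tfrac12$ and the right-hand side only grows), the bound is at most $\frac{4\log n}{n-1}$, as claimed.

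The main obstacle is the single-coordinate tail bound: obtaining the clean exponent $(n-1)/2$ with leading constant $1$ (rather than a looser sub-Gaussian estimate) is exactly what forces the final constant to be $4$ and not something larger. The auxiliary-function/unimodality argument above is the cleanest route I see; bounding the incomplete Beta integral directly is messier and tends to bleed constants. Everything else — the layer-cake identity, the union bound, and the final split integral — is routine.
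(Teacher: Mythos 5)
Your proof is correct, and while it shares the paper's outer skeleton---the layer-cake identity, the union bound, and a split of $\int_0^1 \min\brc{1,\, n\,\Pb\bra{z_1^2 \ge t}}\,dt$ at the threshold where the two branches cross---it rests on a genuinely different key lemma. The paper proves the single-coordinate tail bound $\Pb\bra{z_1^2 \ge \delta} \le e^{-\frac{n-1}{2}\delta + 1}$ (valid only for $\delta \ge 1/n$) by a heavy route: representing $z_1^2$ via i.i.d.\ Gaussians, recognizing an $F(1,n-1)$ distribution, rewriting the tail as a regularized incomplete beta function, bounding a hypergeometric series term-by-term, and invoking Gautschi's inequality for Gamma functions. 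You instead write down the explicit marginal density $C_n(1-x^2)^{(n-3)/2}$ and prove $\Pb\bra{z_1^2 > t} \le (1-t)^{(n-1)/2}$ on all of $[0,1]$ by an elementary unimodality argument ($G(0)=G(1)=0$ and $G'$ changes sign at most once, so $G\ge 0$). Your bound is strictly stronger, since $(1-t)^{(n-1)/2} \le e^{-\frac{n-1}{2}t}$, which saves the paper's extra factor of $e$ and removes the restriction $\delta \ge 1/n$; as a consequence your bookkeeping is cleaner, giving $\frac{2\log n}{n-1} + \frac{2}{n} \le \frac{4\log n}{n-1}$ directly without the paper's case analysis ($n \le 6$ versus $n \ge 7$). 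One cosmetic remark: your justification of $\frac{n-1}{n} \le \log n$ (``the right-hand side only grows'') is incomplete as stated, since the left-hand side also grows; but the inequality is immediate because $\frac{n-1}{n} < 1 < \log n$ for $n \ge 3$, with $n=2$ checked directly, so this is a trivial fix rather than a gap.
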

	(Proof in \cref{thm:inifinty-norm-sphere-proof})
	
	 Hence, when $m,n\geq 2$, we have
	\begin{align*}
		\Eb \bra{ \max_{1\leq i\leq m}\brn{b_{i\bullet}}^2 }  \leq \frac{4\log \bra{m}}{m-1}
	\end{align*}
	Similarly, we have
	\begin{align*}
		\Eb \bra{ \max_{1\leq j\leq n}\brn{b_{\bullet j}}^2 }  \leq \frac{4\log \bra{n}}{n-1}
	\end{align*}
	Therefore,
	\begin{align*}
		&\;\;\;\;\var\bra{ \frac{a_{ij}}{\brsp{A}} } \\
		&\geq \frac{1}{mn} \cdot \frac{1}{K^2 \bra{ \frac{4\log \bra{m}}{m-1} +  \frac{4\log \bra{n}}{n-1}}}\\
		&\geq \frac{1}{8K^2} \cdot \frac{1}{n\log\bra{m} + m\log\bra{n}} \\
		&\geq \frac{1}{16K^2} \cdot\frac{1}{ \max\brc{m, n} \log\bra{ \min\brc{m,n} }}
	\end{align*}
	which gives the result.

\end{proof}

\subsection{Proof of \cref{thm:inifinty-norm-sphere}}
\label{thm:inifinty-norm-sphere-proof}
\begin{proof}
	\begin{align}
	&\;\;\;\;	\Eb\bra{ \max_{1\leq i \leq n} z_i^2  } \nonumber \\
	&= \int_0^1 \Pb\bra{ \max_{1\leq i \leq n} z_i^2 \geq \delta } d\delta \nonumber\\
	&\leq \int_0^1 \min \brc{1, n\cdot \Pb\bra{ z_1^2 \geq \delta}} d\delta \label{eq:union} 
	\end{align}
	where \eqref{eq:union} follows from the union bound. 
	Next, we use the following lemma to upper bound $\Pb\bra{ z_1^2 \geq \delta}$.

\begin{lemma}
	\label{thm:tail-bound-sphere}
	Assume $z=\brb{z_1,z_2,...,z_n}$ is uniformly sampled from the sphere of $n$-dimension unit ball, where $n\geq 2$. Then for $\frac{1}{n} \leq \delta < 1$ and $\forall i \in [1, n]$, we have
	$$\Pb\bra{ z_i^2 \geq \delta } \leq e^{-\frac{n-1}{2} \cdot \delta + 1}.$$
\end{lemma}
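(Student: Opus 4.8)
The plan is to reduce the spherical tail to a one-dimensional integral by exploiting the explicit marginal density of a single coordinate of a uniform point on $S^{n-1}$. By exchangeability of the coordinates it suffices to treat $i=1$, and it is a classical fact that the marginal density of $z_1$ on $\brb{-1,1}$ is $f(t) = C_n\bra{1-t^2}^{(n-3)/2}$ with normalizing constant $C_n = \frac{\Gamma(n/2)}{\sqrt{\pi}\,\Gamma((n-1)/2)}$ (for $n=3$ this is Archimedes' hat-box theorem, giving the uniform law on $\brb{-1,1}$). Writing $\Pb\bra{z_1^2 \geq \delta} = 2\int_{\sqrt{\delta}}^1 f(t)\,dt$ and substituting $s=t^2$ turns the tail into the incomplete Beta integral $\Pb\bra{z_1^2\geq\delta} = C_n \int_\delta^1 s^{-1/2}\bra{1-s}^{(n-3)/2}\,ds$. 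This is the conceptual heart: the spherical geometry has been replaced by a tractable Beta-type integral whose two factors separate the ``$\sqrt{\delta}$'' scale (coming from $s^{-1/2}$) from the exponential concentration (coming from $(1-s)^{(n-3)/2}$).

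Next I would bound the integral crudely but effectively. On $\brb{\delta,1}$ we have $s^{-1/2}\leq\delta^{-1/2}$, so pulling this factor out leaves $\int_\delta^1 (1-s)^{(n-3)/2}\,ds = \frac{2}{n-1}\bra{1-\delta}^{(n-1)/2}$, which converges for all $n\geq 2$. For the normalizing constant I would invoke Wendel's inequality $\frac{\Gamma(x+1/2)}{\Gamma(x)}\leq\sqrt{x}$ with $x=(n-1)/2$, giving $C_n \leq \sqrt{\tfrac{n-1}{2\pi}}$. Combining these two estimates yields
\begin{equation}
\Pb\bra{z_1^2 \geq \delta} \;\leq\; \sqrt{\tfrac{2}{\pi(n-1)}}\cdot \frac{\bra{1-\delta}^{(n-1)/2}}{\sqrt{\delta}}\;. \nonumber
\end{equation}

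Finally I would use the hypothesis $\delta\geq\frac1n$, which gives $\delta^{-1/2}\leq\sqrt{n}$ and turns the prefactor into $\sqrt{\tfrac{2n}{\pi(n-1)}}$; since $\frac{n}{n-1}\leq 2$ for all $n\geq 2$, this prefactor is at most $\sqrt{4/\pi}\approx 1.13 < e$. Together with the elementary bound $1-\delta\leq e^{-\delta}$, so that $\bra{1-\delta}^{(n-1)/2}\leq e^{-\frac{n-1}{2}\delta}$, this gives $\Pb\bra{z_1^2\geq\delta}\leq e\cdot e^{-\frac{n-1}{2}\delta} = e^{-\frac{n-1}{2}\delta + 1}$, as claimed. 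The only genuinely delicate point is the bookkeeping of constants: the factor of $e$ in the statement is exactly the slack needed to absorb $\sqrt{4/\pi}$ after the $\delta^{-1/2}\leq\sqrt{n}$ step, so the main obstacle is ensuring the combination of Wendel's Gamma-ratio bound, the $\delta\geq 1/n$ substitution, and the $\frac{n}{n-1}\leq 2$ bound all fit under the constant $e$ uniformly for $n\geq 2$; the concentration itself is automatic once the Beta-integral representation is in hand.
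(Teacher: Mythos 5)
Your proof is correct, but it takes a genuinely different and more elementary route than the paper's. The paper represents $z_1^2 \deq x_1^2/\sum_{i=1}^n x_i^2$ with Gaussian $x_i$, identifies the ratio with an $F(1,n-1)$ distribution, writes the tail as a regularized incomplete beta function $I_{1-\delta}\bra{\tfrac{n-1}{2},\tfrac{1}{2}}$, bounds the incomplete beta numerator through a hypergeometric series estimate $\hyg{\tfrac{n-1}{2},\tfrac{1}{2};\tfrac{n+1}{2};1-\delta}\le \delta^{-1/2}$, and lower-bounds the complete beta denominator via Gautschi's inequality, finishing with a chain of logarithmic inequalities. You instead start from the classical marginal density $C_n\bra{1-t^2}^{(n-3)/2}$ of one spherical coordinate, reduce to the Beta-type integral $C_n\int_\delta^1 s^{-1/2}\bra{1-s}^{(n-3)/2}ds$, bound the integrand pointwise by $\delta^{-1/2}$, integrate exactly, and control $C_n$ by Wendel's inequality. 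It is worth noting that the two arguments are bounding literally the same quantity --- your integral \emph{is} $B_{1-\delta}\bra{\tfrac{n-1}{2},\tfrac{1}{2}}/B\bra{\tfrac{n-1}{2},\tfrac{1}{2}}$ --- and your pointwise bound reproduces exactly the estimate the paper extracts from the hypergeometric series, while Wendel plays the role of Gautschi; so the intermediate inequality is identical, but your derivation avoids the F-distribution and hypergeometric machinery entirely. Your route is shorter and self-contained, and your bookkeeping is slightly sharper: you absorb a prefactor of $\sqrt{4/\pi}\approx 1.13$ into the constant $e$, whereas the paper's chain absorbs $\sqrt{12/\pi}\approx 1.95$; both fit with room to spare, so the lemma's stated constant is unaffected.
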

(Proof in \cref{thm:tail-bound-sphere-proof}).
This in turn gives
\begin{align}
	\int_0^1 \min \brc{1, n\cdot \Pb\bra{ z_1^2 \geq \delta}} d\delta  
	&\leq \int_0^{\min\brc{1, \frac{2\log(n) + 2}{n-1}}} 1\cdot  d\delta + \int_{\min\brc{1, \frac{2\log(n) + 2}{n-1}}}^{1} n\cdot e^{-\frac{n-1}{2} \cdot \delta + 1} \cdot d\delta \label{eq:split-integral}\\
	&\leq \left\{\begin{matrix}
	1 & (n \leq 6) \\
	\frac{2\log\bra{n} + 2}{n-1}  - \frac{2n}{n-1} e^{-\frac{n-3}{2}} + \frac{2}{n-1} & (n \geq 7) 
	\end{matrix}
	\right. \nonumber \\
	&\leq \frac{4\log(n)}{n-1} \nonumber
\end{align}
where \cref{eq:split-integral} follows from \cref{thm:tail-bound-sphere}.

\end{proof}

\subsection{Proof of \cref{thm:tail-bound-sphere}}
\label{thm:tail-bound-sphere-proof}

\begin{proof}
	Due to the symmetry of $z_i$, we only need to prove the inequality for $i=1$ case.
	Let $x = [x_1,...,x_n] \sim \Nc\bra{ \bm{0}, I_n }$, where $I_n$ is the identity matrix in $n$ dimension. We know that $\frac{x_1^2}{\sum_{i=1}^{n}x_i^2} \deq z_1^2$. Therefore, we have
	\begin{align*}
		\Pb\bra{ z_1^2 \geq \delta }
		= \Pb\bra{ \frac{x_1^2}{\sum_{j=1}^{n}x_j^2}  \geq \delta }
		= \Pb\bra{ \frac{x_1^2}{\bra{\sum_{i=2}^{n} x_i^2} / (n-1)}  \geq \frac{\bra{n-1} \delta}{1-\delta}} \;\;.
	\end{align*}
	Note that $x_1^2$ and $\sum_{i=2}^{n} x_i^2$ are two independent chi-squared random variables, therefore, we know that $\frac{x_1^2}{\bra{\sum_{i=2}^{n} x_i^2} / (n-1) }\sim F(1, n-1)$, where $F$ denotes the central F-distribution. Therefore, 
	\begin{align*}
		\Pb\bra{ \frac{x_1^2}{\bra{\sum_{i=2}^{n} x_i^2} / (n-1)}  \geq \frac{\bra{n-1} \delta}{1-\delta}} 
		&= 1 - I_{\delta}\bra{ \frac{1}{2}, \frac{n-1}{2} } \\
		&= I_{1-\delta}\bra{ \frac{n-1}{2}, \frac{1}{2} } \\
		&= \frac{B_{1-\delta}\bra{\frac{n-1}{2}, \frac{1}{2}}}  {B\bra{\frac{n-1}{2}, \frac{1}{2}}} \numberthis \label{eq:beta-ratio}\;\;,
	\end{align*}
	where $I_x(a,b)$ is the regularized incomplete beta function, $B_x(a, b)$ is the incomplete beta function, and $B(a, b)$ is beta function.
	
	For the ease of computation, we take the $\log$ of \cref{eq:beta-ratio}. The numerator gives
	\begin{align*}
		&\;\;\;\;\log\bra{ B_{1-\delta}\bra{\frac{n-1}{2}, \frac{1}{2}} }\\
		&= \log\bra{  \frac{\bra{1-\delta}^{(n-1)/2}}{(n-1)/2} \hyg{ \frac{n-1}{2}, \frac{1}{2}; \frac{n+1}{2}; 1-\delta } }  \\
		&= \frac{n-1}{2} \log\bra{1-\delta} - 
		\log(n-1) + \log\bra{\hyg{ \frac{n-1}{2}, \frac{1}{2}; \frac{n+1}{2}; 1-\delta }} + \log(2) \;\;,\numberthis \label{eq:log-in-co-beta}
	\end{align*}
	where $\hyg{\cdot}$ is the hypergeometric function. Let $\poc{q}{i} = \left\{ \begin{matrix}
	1 & (i = 0) \\
	q(q+1)\ldots (q + i - 1) &  (i > 0)
	\end{matrix} \right.\;\;,$ we have
	\begin{align*}
		&\;\;\;\; \hyg{ \frac{n-1}{2}, \frac{1}{2}; \frac{n+1}{2}; 1-\delta }\\
		&=\sum_{i=0}^{\infty}  \frac{\poc{\frac{n-1}{2}}{i} \poc{\frac{1}{2}}{i}  \bra{1-\delta}^i} {  \poc{\frac{n+1}{2}}{i}  \cdot i!}\\
		&\leq \sum_{i=0}^{\infty}  \frac{ \poc{\frac{1}{2}}{i}  \bra{1-\delta}^i} {   \cdot i!}\\
		&=\delta^{-\frac{1}{2}} \numberthis \label{eq:delta}
	\end{align*}
	Substituting it into \cref{eq:log-in-co-beta} gives
	\begin{align*}
		\log\bra{ B_{1-\delta}\bra{\frac{n-1}{2}, \frac{1}{2}} }
		\leq \frac{n-1}{2} \log\bra{1-\delta} - 
		\log\bra{n-1} -\frac{1}{2}\log\bra{\delta} + \log(2) \numberthis \label{eq:log-in-co-beta2}\;\;.
	\end{align*}
	
	The $\log$ of the denominator of \eqref{eq:beta-ratio} is
	\begin{align*}
	&\;\;\;\;	\log \bra{ B\bra{\frac{n-1}{2}, \frac{1}{2}} } \\
	&= \log\bra{ \frac{\Gamma\bra{\frac{n-1}{2}} \Gamma\bra{\frac{1}{2}} }{\Gamma\bra{ \frac{n}{2} }} }\\
	&\geq \log\bra{ \sqrt{\pi}\cdot \bra{\frac{n+1}{2}}^{-\frac{1}{2}} }\\
	&= -\frac{1}{2} \log(n + 1) + \frac{1}{2} \log(2) + \frac{1}{2} \log(\pi) \numberthis \label{eq:log-beta}\;\;.
	\end{align*}
	where $\Gamma$ denotes the Gamma function and we use the Gautschi's inequality: $\frac{\Gamma\bra{x+1}}{\Gamma\bra{x+\frac{1}{2}}} < \bra{x+1}^{\frac{1}{2}}$ for positive real number $x$.
	
	Combining \cref{eq:beta-ratio}, \cref{eq:log-in-co-beta2}, and \cref{eq:log-beta} we get
	\begin{align*}
		&\;\;\;\; \log\bra{\Pb\bra{ \frac{x_1^2}{\bra{\sum_{i=2}^{n} x_i^2} / (n-1)}  \geq \frac{\bra{n-1} \delta}{1-\delta}} } \\
		&\leq \frac{n-1}{2} \log\bra{1-\delta} - 
		\log\bra{n-1} +\frac{1}{2} \log(n + 1) -\frac{1}{2}\log\bra{\delta} + \frac{1}{2}\log(2/\pi) \\
		&\leq  \frac{n-1}{2} \log\bra{1-\delta} - \frac{1}{2} \log(n-1) -\frac{1}{2}\log(\delta) + \frac{1}{2} \log(6/\pi)\\
		&\leq \frac{n-1}{2} \log\bra{1-\delta}  - \frac{1}{2}\log\bra{\frac{n-1}{n}}+ \frac{1}{2} \log(6/\pi) \\
		&\leq \frac{n-1}{2} \log\bra{1-\delta}  + \frac{1}{2} \log\frac{12}{\pi}\\
		&\leq -\frac{n-1}{2} \cdot \delta + 1
	\end{align*}
	Therefore, we have
	\begin{align*}
		\Pb\bra{ z_1^2 \geq \delta } \leq e^{-\frac{n-1}{2} \cdot \delta + 1}
	\end{align*}
\end{proof}

\section{Proof of \cref{thm:ours-variance}}
\label{app:ours-variance}
\begin{proof}
	Let $s_w = \cin\cout k_w k_h$.
	Since $w_{ij}$ are symmetric random variables, we know $\Eb\bra{\frac{w_{ij}}{\sigma_w}} = 0$. Therefore, we have
	\begin{align*}
	\var\bra{ \frac{w_{ij}}{\sigma_w} }
	= \Eb\bra{ \frac{w_{ij}^2}{\sigma_w^2} }
	= \frac{1}{s_w}\cdot  \Eb\bra{\frac{\sum_{i=1}^{m}\sum_{j=1}^{n} w_{ij}^2}{ \sigma_w^2} }
	= \frac{1}{s_w}\cdot \Eb\bra{ \frac{\brf{w}^2}{ \sigma_w^2}}
	\end{align*}
	
	Note that 
	\begin{align*}
	 \frac{1}{s_w}\cdot \Eb\bra{ \frac{\brf{w}^2}{ \sigma_w^2}} \in \bigg[ &\frac{2}{s_w}\cdot \Eb\bra{  \frac{\brf{w}^2}{\brspwshape{w}^2 + \brspwshapet{w}^2} },\\
	 &\frac{4}{s_w}\cdot \Eb\bra{  \frac{\brf{w}^2}{\brspwshape{w}^2 + \brspwshapet{w}^2} } \bigg]\;\;.
	\end{align*}
	
	Assume the singular values of $\wwshape{w}$ are $\sigma_1\geq \sigma_2\geq \ldots \geq \sigma_{\cout}$, and the singular values of $\wwshapet{w}$ are $\sigma'_1\geq \sigma'_2\geq \ldots \geq \sigma'_{\cin}$. We have 
	\begin{align*}
	&\;\;\;\;\frac{4}{s_w}\cdot \Eb\bra{  \frac{\brf{w}^2}{\brspwshape{w}^2 + \brspwshapet{w}^2} }\\
	&= \frac{4}{s_w} \cdot \Eb\bra{ \frac{1}{2}\cdot \frac{\sum_{i=1}^{\cout}  \sigma_i^2}{\sigma_1^2}   + \frac{1}{2}\cdot \frac{\sum_{i=1}^{\cin}  {\sigma'}_i^2}{{\sigma'}_1^2} } 
	\leq\frac{2\bra{ \cout + \cin}}{s_w} 
	= \frac{2}{\cin k_w k_h + \cout k_w k_h} \;\;,
	\end{align*}
	which gives the desired upper bound.
	
	As for the lower bound, observe that
	\begin{align*}
	&\;\;\;\;\frac{2}{s_w}\cdot \Eb\bra{  \frac{\brf{w}^2}{\brspwshape{w}^2 + \brspwshapet{w}^2} }\\
	&= \frac{2}{s_w}\cdot \Eb\bra{  \frac{1}{\brsp{\frac{\wwshape{w}}{\brf{w}}}^2 + \brsp{\frac{\wwshapet{w}}{\brf{w}}}^2} }\\
	&\geq \frac{2}{s_w}\cdot  \frac{1}{\Eb\bra{\brsp{\frac{\wwshape{w}}{\brf{w}}}^2} + \Eb\bra{\brsp{\frac{\wwshapet{w}}{\brf{w}}}^2}} \\
	\end{align*}
	Then we can follow the same approach in \cref{app:sn-variance} for bounding $\Eb\bra{\brsp{\frac{\wwshape{w}}{\brf{w}}}^2}$ and 
	$\Eb\bra{\brsp{\frac{\wwshapet{w}}{\brf{w}}}^2}$,
	which gives the desired lower bound.
\end{proof}

\section{Datasets and Metrics}
\label{app:dataset-metric}

\subsection{Datasets}

\paragraph{\mnist{} \cite{lecun-mnisthandwrittendigit-2010}}
We use the training set for our experiments, which contains 60000 images of handwritten digits of shape $28\times 28\times 1$. The pixels values are normalized to $[0,1]$ before feeding to the discriminators. 

\paragraph{\cifar{} \cite{krizhevsky2009learning}}
We use the training set for our experiments, which contains 50000 images of shape $32\times 32\times 3$. The pixels values are normalized to $[-1,1]$ before feeding to the discriminators. 

\paragraph{\stl{} \cite{coates2011analysis}}
We use the unlabeled set for our experiments, which contains 100000 images of shape $96\times 96\times 3$. Following \cite{miyato2018spectral}, we resize the images to $48\times 48\times 3$ for training. The pixels values are normalized to $[-1,1]$ before feeding to the discriminators. 

\paragraph{\celeba{} \cite{liu2015faceattributes}}
This dataset contains 202599 images. For each image, we crop the center $128\times 128$, and resize it to $64\times 64\times 3$ for training. The pixels values are normalized to $[-1,1]$ before feeding to the discriminators.

\paragraph{\imagenet{} (\ilsvrc{}) \cite{ILSVRC15}}
The dataset contains 1281167 images. Following \cite{miyato2018spectral}, for each images, we crop the central square of the images according to min(width, height), and then reshape it to $128\times 128\times 3$ for training. The pixels values are normalized to $[-1,1]$ before feeding to the discriminators. 

\subsection{Metrics}

\paragraph{Inception score \cite{salimans2016improved}} 
Following \cite{miyato2018spectral}, we use 50000 generated images and split them into 10 sets for computing the score.

\paragraph{FID \cite{heusel2017gans}}
Following \cite{miyato2018spectral}, we use 5000 real images and 10000 generated images for computing the score.

\section{Gradient Explosion and Vanishing in GANs}
\label{app:exp-vani}

\subsection{Results}
To illustrate that gradient explosion and vanishing are closely related to the instability in GANs, we trained a WGAN \cite{gulrajani2017improved} on the \cifar{} dataset with different hyper-parameters leading to stable training, exploding gradients, and vanishing gradients over 40,000 training iterations (more experimental details in \cref{app:exp-vani-settings}). 
\cref{fig:gradient_vanishing}  shows the resulting inception scores for each of these runs, and \cref{fig:grad_size} shows the corresponding magnitudes of the gradients over the course of training. 
Note that the stable run has improved sample quality and stable gradients throughout training.
This phenomenon has also been observed in prior literature \cite{arjovsky2017principled,brock2018large}.
We will demonstrate that by controlling these gradients, SN (and \snw{} in particular) is able to achieve more stable training and better sample quality.

\begin{figure}[!htb]
	\centering
	\begin{minipage}{.45\textwidth}
		\centering
		\includegraphics[width=\linewidth]{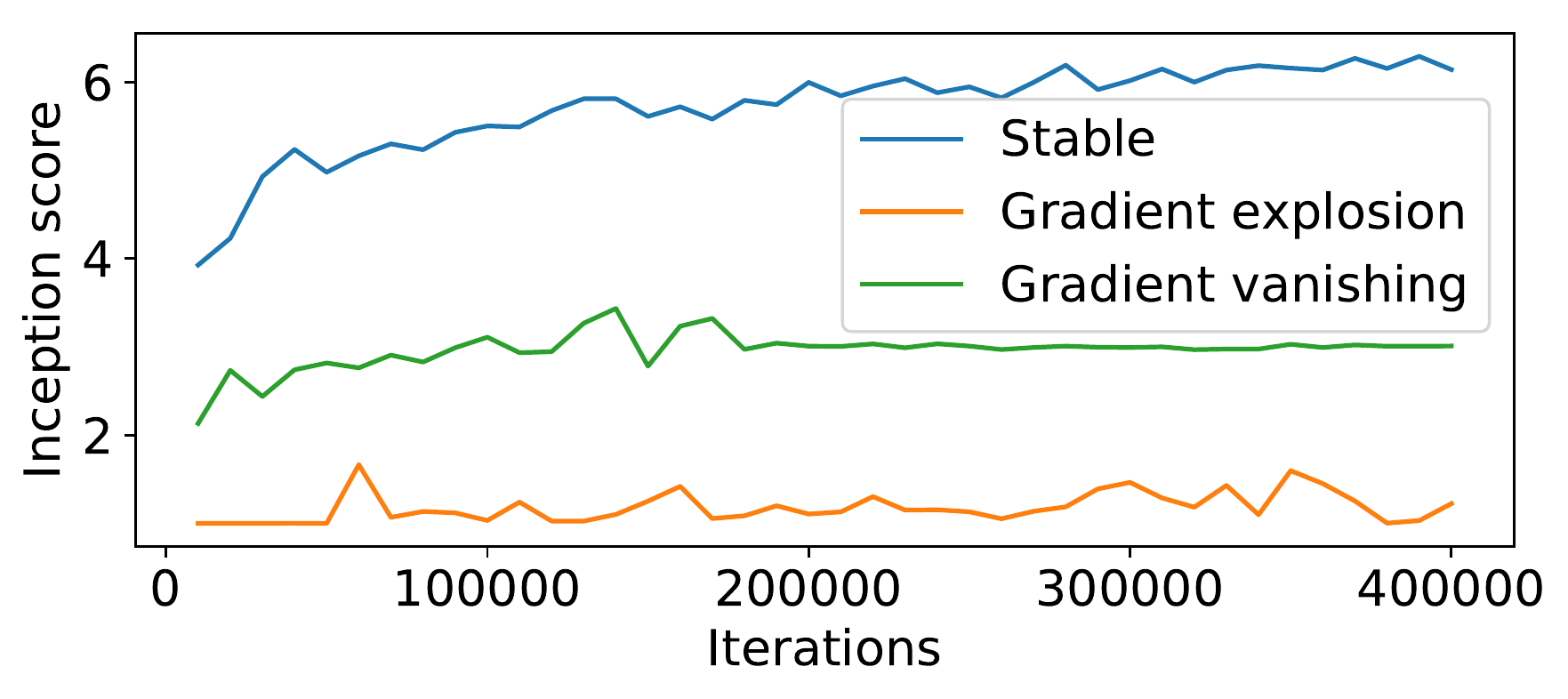}
		\caption{Inception score over the course of training.
			The ``gradient vanishing" inception score plateaus as training is stalled.}
		\label{fig:gradient_vanishing}
	\end{minipage}%
	~~~
	\begin{minipage}{0.47\textwidth}
		\centering
		\includegraphics[width=\linewidth]{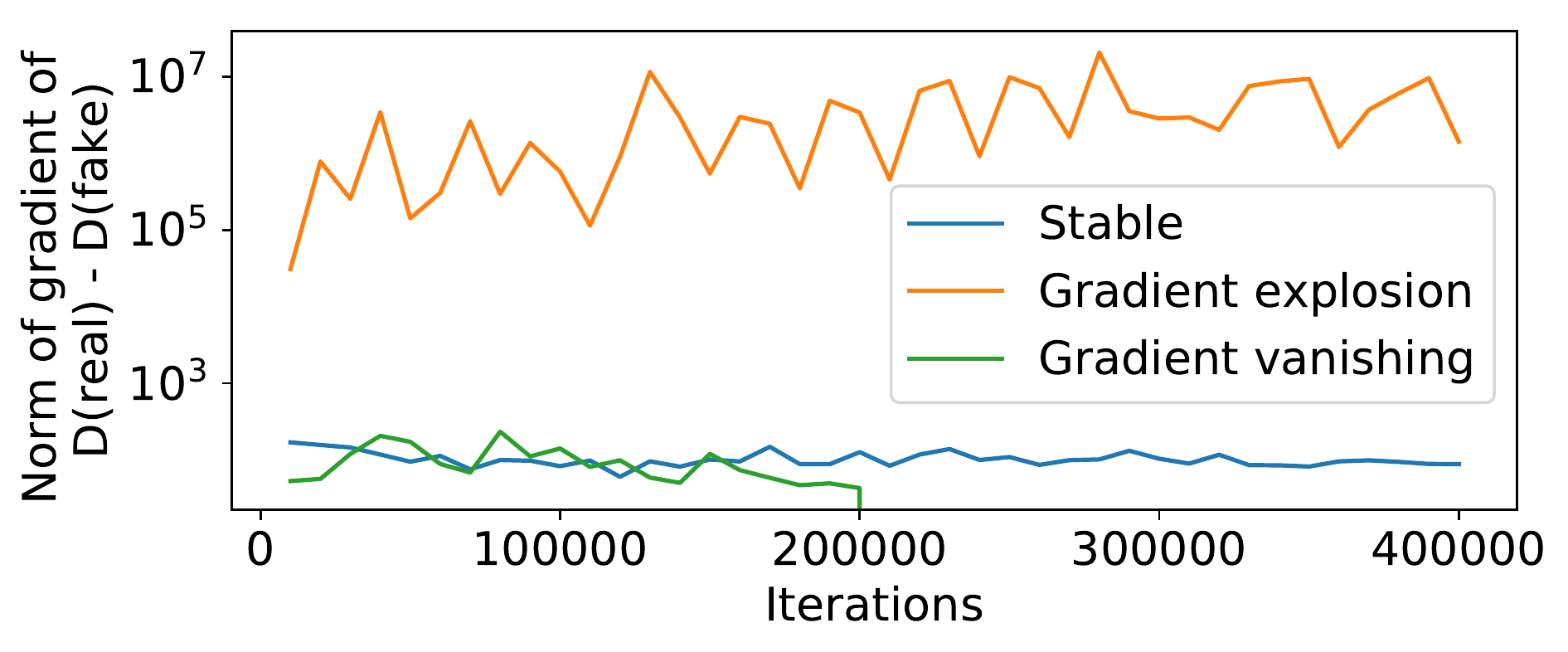}
		\caption{Norm of gradient with respect to parameters during training. The vanishing gradient collapses after 200k iterations.}
		\label{fig:grad_size}
	\end{minipage}
\end{figure}

\subsection{Experimental Details}
\label{app:exp-vani-settings}

The network architectures are shown in \cref{tbl:gen-network-cifar10-stl10,tbl:dis-network-cifar10-stl10}. The dataset is \cifar{}. 
All experiments are run for 400k iterations. Batch size is 64. The optimizer is Adam.
Let $\lambda$ be the WGAN's gradient penalty weight \cite{gulrajani2017improved}. For the stable run, $\alpha_g=0.0001, \alpha_d=0.0002, \beta_1=0.5, \beta_2=0.999, \lambda=10, n_{dis}=1$. For the gradient explosion run, $\alpha_g=0.001, \alpha_d=0.001, \beta_1=0.5, \beta_2=0.999, \lambda=10, n_{dis}=1$. For the gradient vanishing run, $\alpha_g=0.001, \alpha_d=0.001, \beta_1=0.5, \beta_2=0.999, \lambda=50, n_{dis}=1$, and the activation functions in the discriminator are changed from leaky ReLU to ReLU. 

\begin{table}
	\centering
	\begin{tabular}{c}
		\toprule
		$z\in \Rb^{128}\sim \Nc(0, I)$\\\hline
		Fully connected ($M_g \times M_g\times 512$). BN. ReLU.\\\hline
		Deconvolution ($c=256, k=4, s=2$). BN. ReLU.\\\hline
		Deconvolution ($c=128, k=4, s=2$). BN. ReLU.\\\hline
		Deconvolution ($c=64, k=4, s=2$). BN. ReLU.\\\hline
		Deconvolution ($c=3, k=3, s=1$). Tanh.\\
		\bottomrule
	\end{tabular}
	\caption{Generator network architectures for \cifar{}, \stl{}, and \celeba{} experiments (from \cite{miyato2018spectral}). For \cifar{}, $M_g=4$. For \stl{}, $M_g=6$. For \celeba{}, $M_g=8$. BN stands for batch normalization. $c$ stands for number of channels. $k$ stands for kernel size. $s$ stands for stride.}
	\label{tbl:gen-network-cifar10-stl10}
\end{table}

\begin{table}
	\centering
	\begin{tabular}{c}
		\toprule
		$x\in \Rb^{M\times M\times 3}$\\\hline
		Convolution ($c=64, k=3, s=1$). Leaky ReLU (0.1).\\\hline
		Convolution ($c=64, k=4, s=2$). Leaky ReLU (0.1).\\\hline
		Convolution ($c=128, k=3, s=1$). Leaky ReLU (0.1).\\\hline
		Convolution ($c=128, k=4, s=2$). Leaky ReLU (0.1).\\\hline
		Convolution ($c=256, k=3, s=1$). Leaky ReLU (0.1).\\\hline
		Convolution ($c=256, k=4, s=2$). Leaky ReLU (0.1).\\\hline
		Convolution ($c=512, k=3, s=1$). Leaky ReLU (0.1).\\\hline
		Fully connected (1). \\
		\bottomrule
	\end{tabular}
	\caption{Discriminator network architectures for \cifar{}, \stl{}, and \celeba{} experiments (from \cite{miyato2018spectral}). For \cifar{}, $M=32$. For \stl{}, $M=48$. For \celeba{}, $M=64$. $c$ stands for number of channels. $k$ stands for kernel size. $s$ stands for stride.}
	\label{tbl:dis-network-cifar10-stl10}
\end{table}

\section{Experimental Details and Additional Results on Gradient Norms}
\label{app:gradient-norm}

\subsection{Experimental Details}
For the \mnist{} experiment, the network architectures are shown in \cref{tbl:gen-network-mnist,tbl:dis-network-mnist}.
All experiments are run for 100 epochs. Batch size is 64. The optimizer is Adam.
$\alpha_g=0.001, \alpha_d=0.001, \beta_1=0.5, \beta_2=0.999, n_{dis}=1$. 

For the \cifar{} experiment, , the network architectures are shown in \cref{tbl:gen-network-cifar10-stl10,tbl:dis-network-cifar10-stl10}.
All experiments are run for 400k iterations. Batch size is 64. The optimizer is Adam.
$\alpha_g=0.0001, \alpha_d=0.0001, \beta_1=0.5, \beta_2=0.999, n_{dis}=1$. 

Let $\lambda$ be the WGAN's gradient penalty weight \cite{gulrajani2017improved}. For the runs without SN, $\lambda=10$. For the runs with SN, we use the strict SN implementation \cite{farnia2018generalizable} in order to verifying the theoretical results (the popular SN implementation \cite{miyato2018spectral} only gives a loose bound on the actual spectral norm of layers, see \cref{sec:vanishing}). Since it already ensures that the Lipschitz constant of the discriminator is no more than 1, we discard the gradient penalty loss from training.

For all the results, the gradient norm only considers the weights and excludes the biases (if exist), so as to be consistent with the theoretical analysis.

\begin{table}
	\centering
	\begin{tabular}{c}
		\toprule
		$z\in \Rb^{100}\sim \uniform(-1, 1)$\\\hline
		Fully connected ($7 \times 7\times 128$). Leaky ReLU (0.2). BN.\\\hline
		Deconvolution ($c=64, k=5, s=2$). Leaky ReLU (0.2). BN.\\\hline
		Deconvolution ($c=1, k=5, s=2$). Sigmoid.\\
		\bottomrule
	\end{tabular}
	\caption{Generator network architectures for \mnist{} experiments. BN stands for batch normalization. $c$ stands for number of channels. $k$ stands for kernel size. $s$ stands for stride.}
	\label{tbl:gen-network-mnist}
\end{table}

\begin{table}
	\centering
	\begin{tabular}{c}
		\toprule
		$x\in \Rb^{28\times 28\times 1}$\\\hline
		Convolution ($c=64, k=5, s=2$, no bias). Leaky ReLU (0.2).\\\hline
		Convolution ($c=128, k=5, s=2$, no bias). Leaky ReLU (0.2).\\\hline
		Convolution ($c=256, k=5, s=2$, no bias). Leaky ReLU (0.2).\\\hline
		Fully connected (1, no bias). \\
		\bottomrule
	\end{tabular}
	\caption{Discriminator network architectures for \mnist{} experiments. $c$ stands for number of channels. $k$ stands for kernel size. $s$ stands for stride.}
	\label{tbl:dis-network-mnist}
\end{table}

\subsection{Additional Results}
\cref{fig:gnorm-mnist-epoch50,fig:gnorm-cifar10-epoch10000} show the gradient norms of each discriminator layer in \mnist{} and \cifar{}. Despite the difference on the network architecture and dataset, we see the similar phenomenon: when training without SN, some layers have extremely large gradient norms, which causes the overall gradient norm to be large; when training with SN, the gradient norms are much smaller and are similar across different layers.

\begin{figure}
	\centering
	\begin{minipage}{0.45\linewidth}
		\centering
		\includegraphics[width=\linewidth]{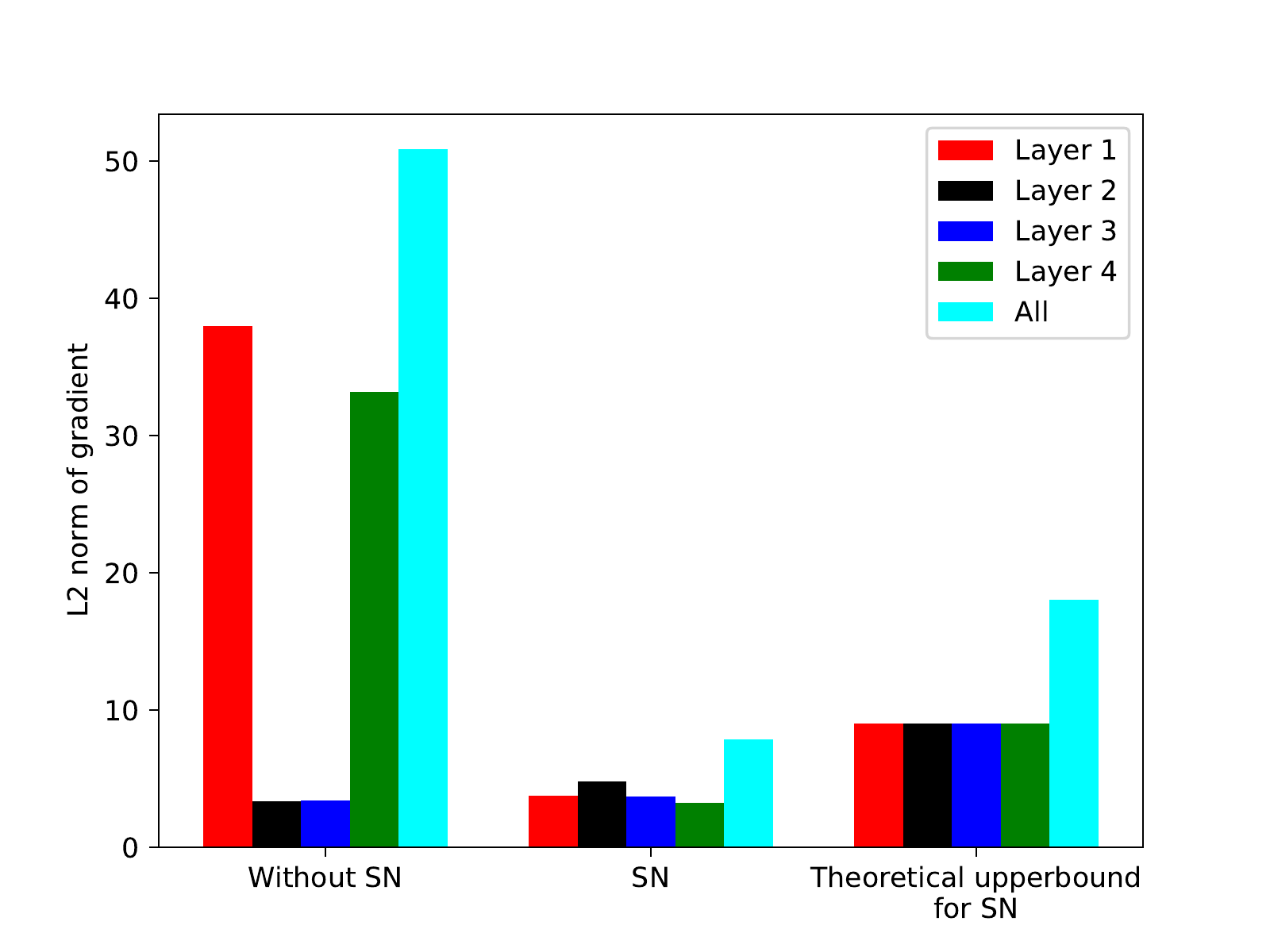}
		\caption{Gradient norms of each discriminator layer in \mnist{} at epoch 50. }
		\label{fig:gnorm-mnist-epoch50}
	\end{minipage}
	~~~
	\begin{minipage}{0.45\linewidth}
		\centering
		\includegraphics[width=\linewidth]{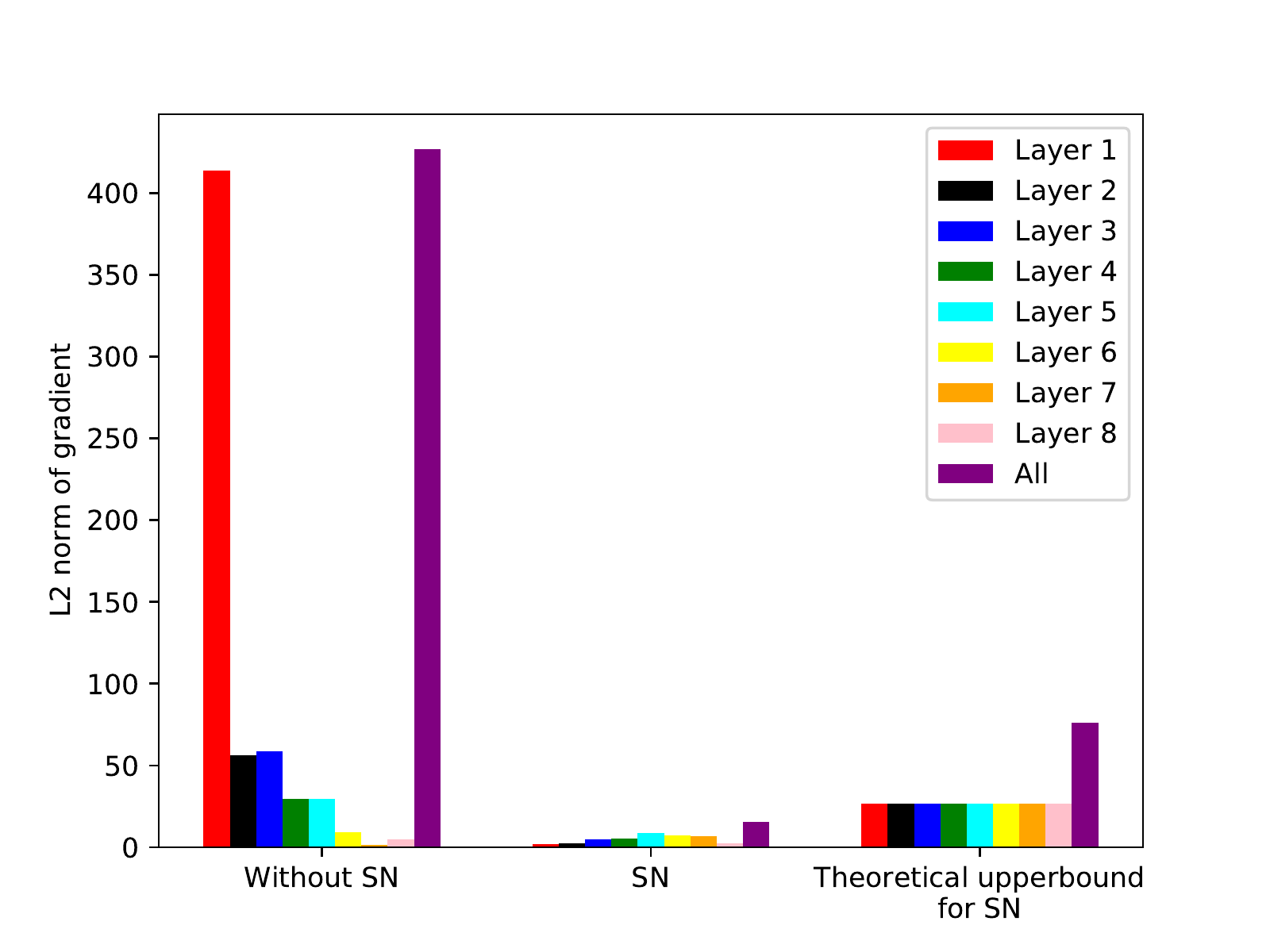}
		\caption{Gradient norms of each discriminator layer in \cifar{} at iteration 10000. }
		\label{fig:gnorm-cifar10-epoch10000}
	\end{minipage}
\end{figure}

\section{Experimental Details and Additional Results for Confirming \cref{eq:setD}}
\label{app:result_setd}

\subsection{Experimental Details}
For the \mnist{} experiment, the network architectures are shown in \cref{tbl:gen-network-mnist,tbl:dis-network-mnist}.
All experiments are run for 100 epochs. Batch size is 64. The optimizer is Adam.
$\alpha_g=0.001, \alpha_d=0.001, \beta_1=0.5, \beta_2=0.999, n_{dis}=1$. We use WGAN loss with the strict SN implementation \cite{farnia2018generalizable}.  Since it already ensures that the Lipschitz constant of the discriminator is no more than 1, we discard the gradient penalty loss from training. The random scaling are selected in a way the geometric mean of spectral norms of all layers equals 1. 

For the \cifar{} and \stl{} experiments , the network architectures are shown in \cref{tbl:gen-network-cifar10-stl10,tbl:dis-network-cifar10-stl10}.
All experiments are run for 400k iterations. Batch size is 64. The optimizer is Adam.
$\alpha_g=0.0001, \alpha_d=0.0001, \beta_1=0.5, \beta_2=0.999, n_{dis}=1$. We use hinge loss \cite{miyato2018spectral} with the strict SN implementation \cite{farnia2018generalizable}. The random scaling are selected in a way the geometric mean of spectral norms of all layers equals 1.75, which avoids the gradient vanishing problem as seen in \cref{sec:vanishing}.

\subsection{Additional Results}
\cref{fig:gratio-cifar10,fig:gratio-stl10} show the ratios of the gradient norms at each layer and the inverse ratios of the spectral norms in \cifar{} and \stl{}. Generally, we see that the most of the points are near the diagonal line, which means that the assumption in \cref{eq:setD} is reasonably true in practice. However, we note that the last layer (layer 8) somehow has slightly smaller gradient, as the points of ``layer 8 / layer 1'' are slightly lower than the diagonal line. This could result from the fact that layer 8 is a fully connected layer whereas all other layers are convolutional layers. We defer the more detailed analysis of this phenomenon to future work.

\begin{figure}
	\centering
	\begin{minipage}{0.45\linewidth}
		\centering
		\includegraphics[width=\linewidth]{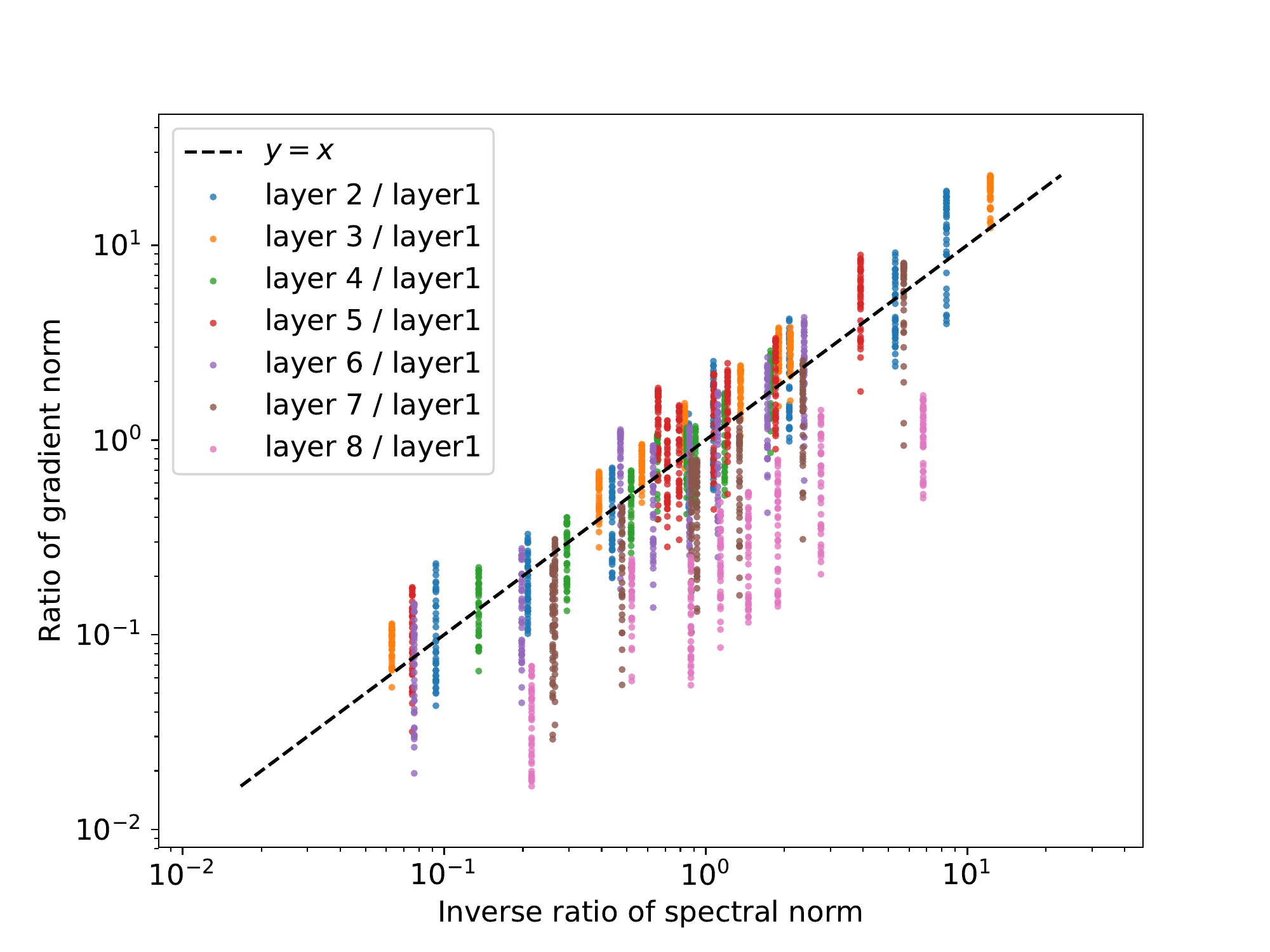}
		\caption{Ratio of gradient norm v.s. inverse ratio of spectral norm in \cifar{}.}
		\label{fig:gratio-cifar10}
	\end{minipage}
	~~~
	\begin{minipage}{0.45\linewidth}
		\centering
		\includegraphics[width=\linewidth]{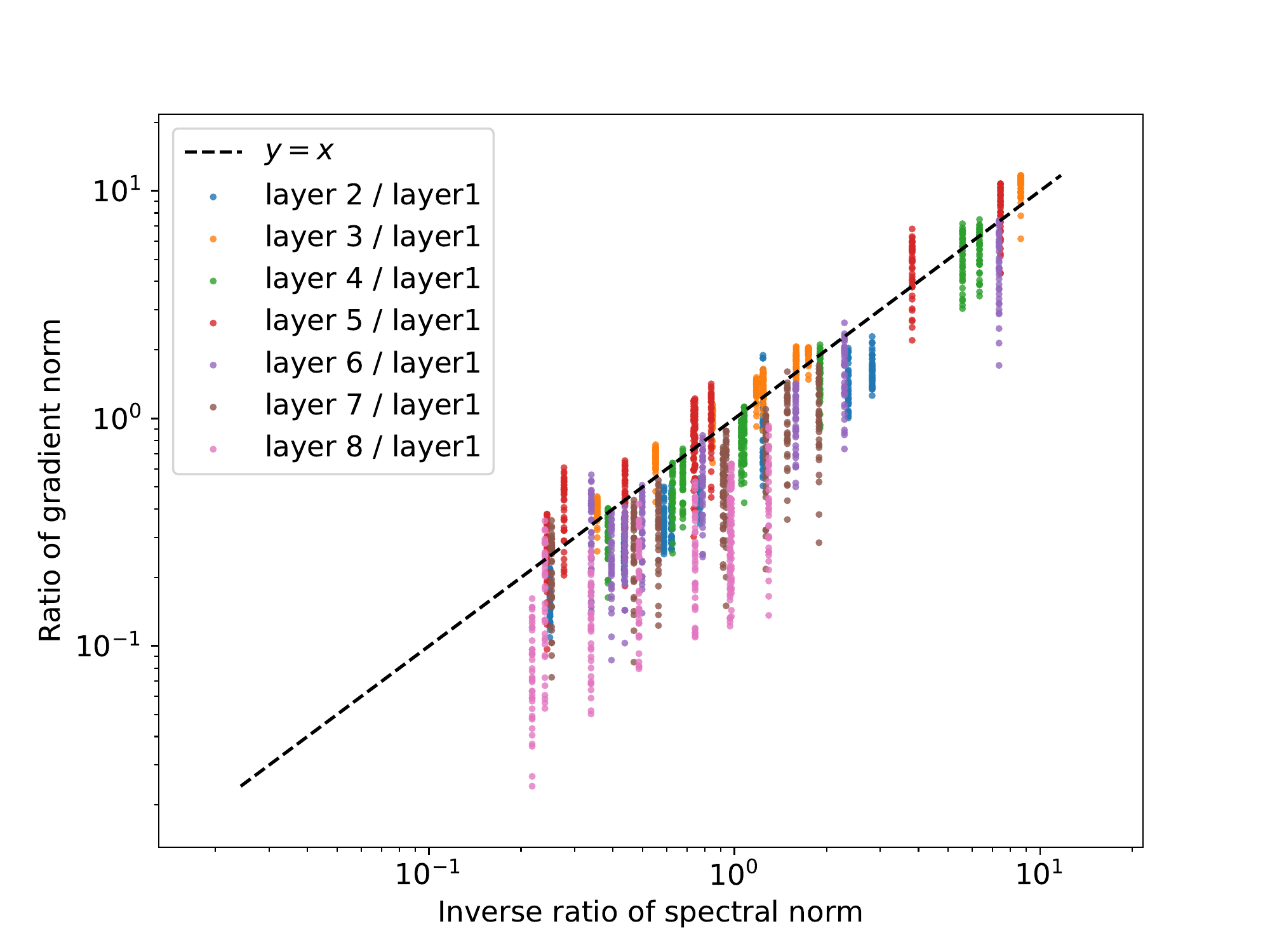}
		\caption{Ratio of gradient norm v.s. inverse ratio of spectral norm in \stl{}.}
		\label{fig:gratio-stl10}
	\end{minipage}
\end{figure}

\section{Experimental Details and Additional Results on Vanishing Gradient}
\label{app:gradient-vani}

\subsection{Experimental Details}

The network architectures are shown in \cref{tbl:gen-network-cifar10-stl10,tbl:dis-network-cifar10-stl10}. The dataset is \cifar{}. 
All experiments are run for 400k iterations. Batch size is 64. The optimizer is Adam.
$\alpha_g=0.0001, \alpha_d=0.0001, \beta_1=0.5, \beta_2=0.999, n_{dis}=1$.
We use hinge loss \cite{miyato2018spectral}. 

\subsection{Parameter Variance With and Without SN}
\label{app:para-var-withwithoutsn}
\cref{fig:paravar-withoutsn-cifar10,fig:paravar-sn-cifar10} show the parameter variance of each layer without and with SN. Note that \cref{fig:paravar-sn-cifar10} is just collecting the empirical lines in \cref{fig:lecun_training} for the ease of comparison here. \cref{fig:gnorm-withwithoutsn-cifar10,fig:inception-withwithoutsn-cifar10} show the gradient norm and inception score.

We can see that when training with SN, the parameter variance is stable throughout training (\cref{fig:paravar-sn-cifar10}), and the magnitude of gradient is also stable (\cref{fig:gnorm-withwithoutsn-cifar10}) . However, when training without SN, the parameter variance tends to increase throughout training (\cref{fig:paravar-withoutsn-cifar10}), which causes a quick decrease in the magnitude of  gradient in the begining of training (\cref{fig:gnorm-withwithoutsn-cifar10}) because of the saturation of hinge loss (\cref{sec:vanishing}). Because SN promotes the stability of the variance and gradient throughout training, we see that SN improves the sample quality significantly (\cref{fig:inception-withwithoutsn-cifar10}).

\begin{figure}[t]
	\centering
	\begin{minipage}{.47\textwidth}
		\centering
		\includegraphics[width=\linewidth]{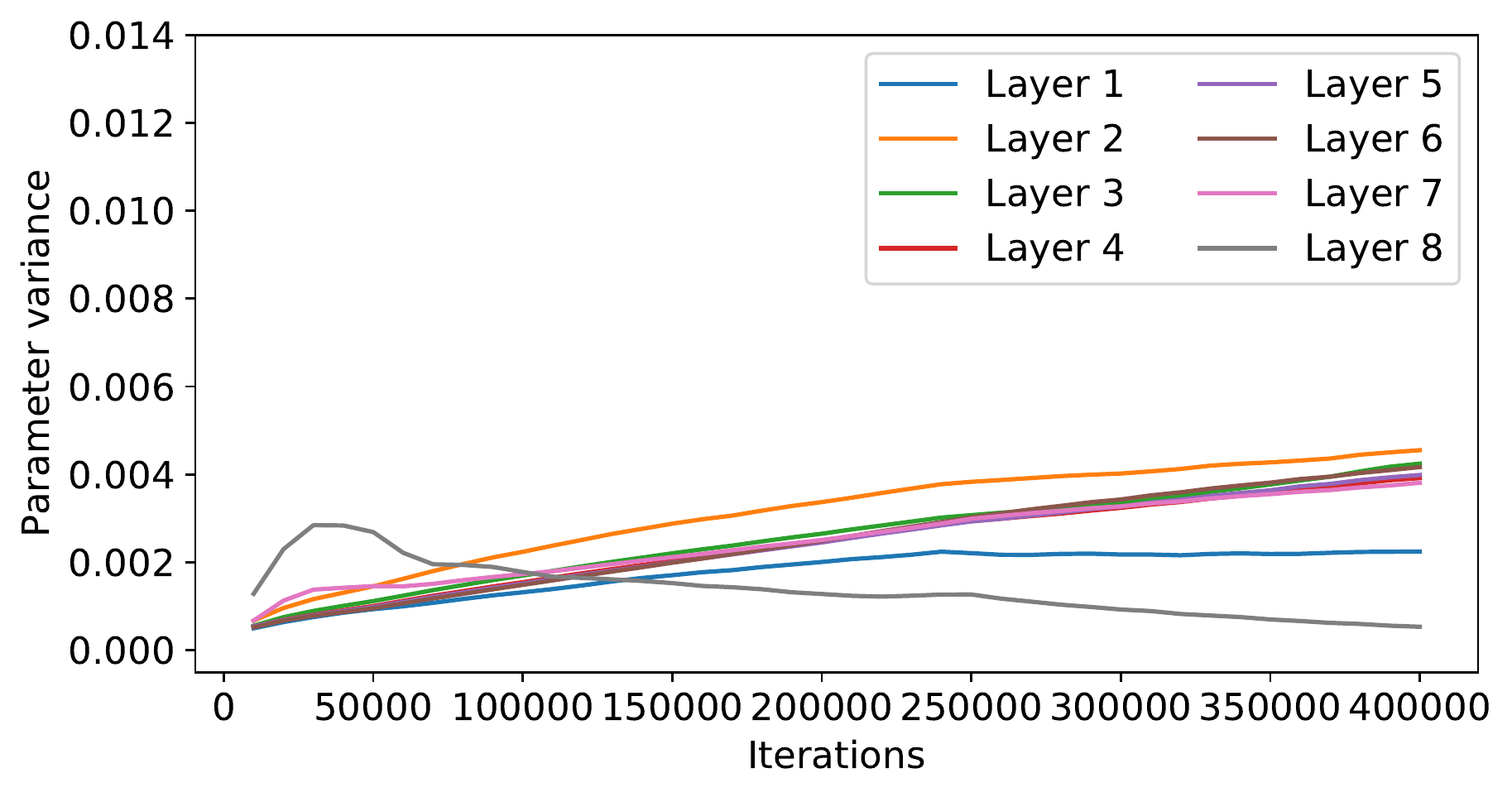}
		\caption{Parameter variance without SN in \cifar{}.}
		\label{fig:paravar-withoutsn-cifar10}
	\end{minipage}%
	~~
	\begin{minipage}{0.47\textwidth}
		\centering
		\includegraphics[width=\linewidth]{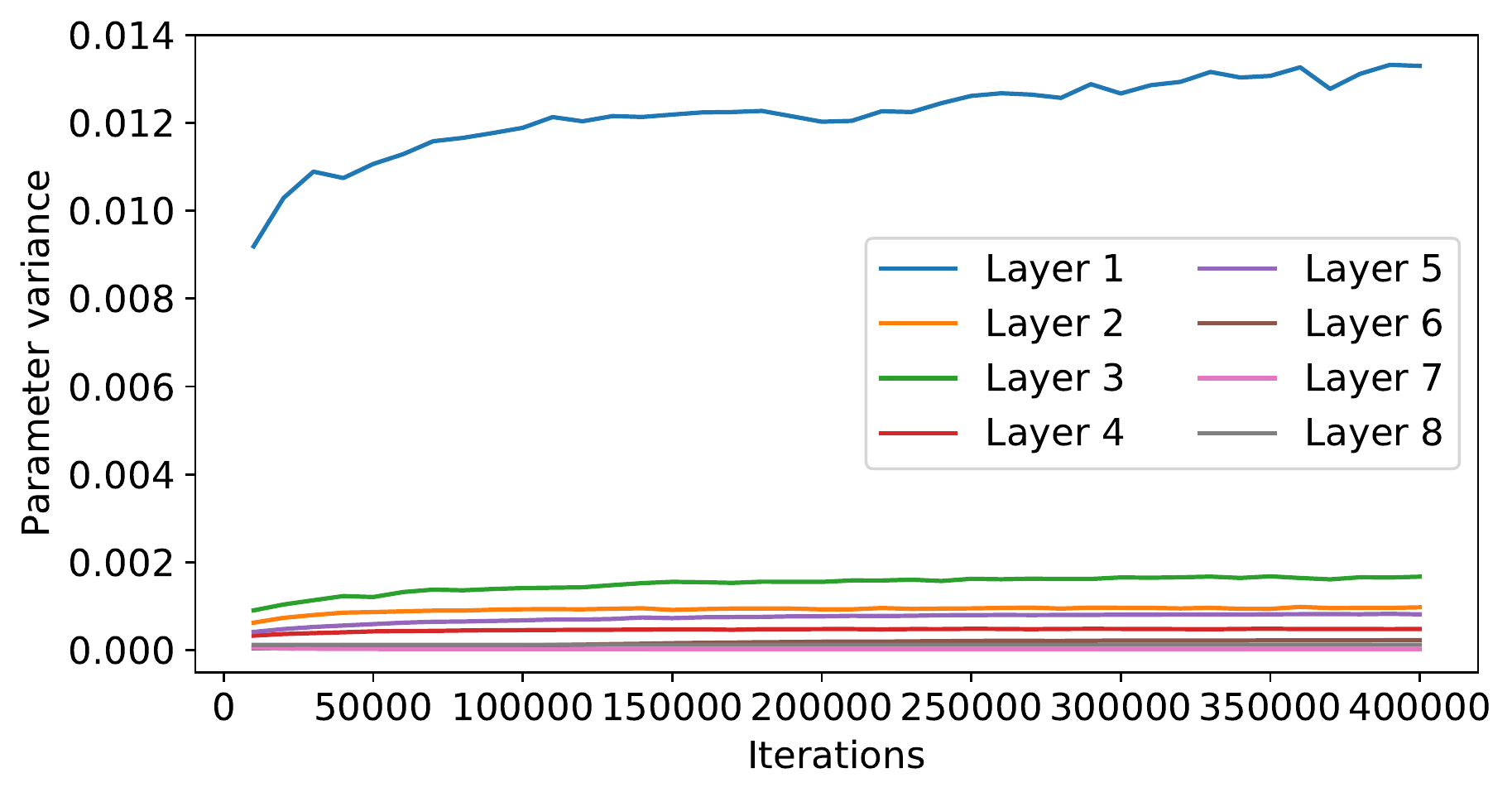}
		\caption{Parameter variance with SN in \cifar{}.}
		\label{fig:paravar-sn-cifar10}
	\end{minipage}
\end{figure}

\begin{figure}[t]
	\centering
	\begin{minipage}{.47\textwidth}
		\centering
		\includegraphics[width=\linewidth]{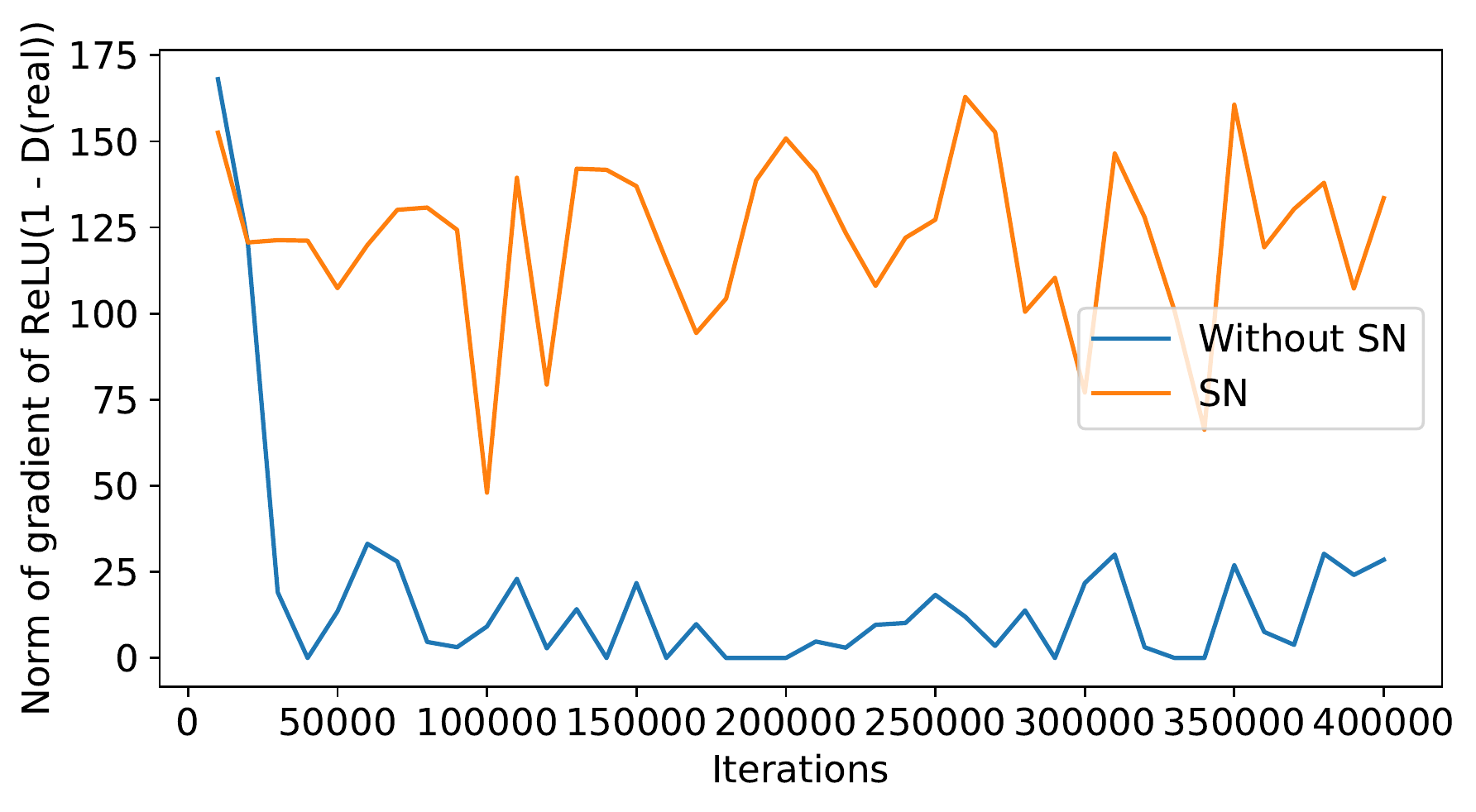}
		\caption{Gradient norm with and without SN in \cifar{}.}
		\label{fig:gnorm-withwithoutsn-cifar10}
	\end{minipage}%
	~~
	\begin{minipage}{0.47\textwidth}
		\centering
		\includegraphics[width=\linewidth]{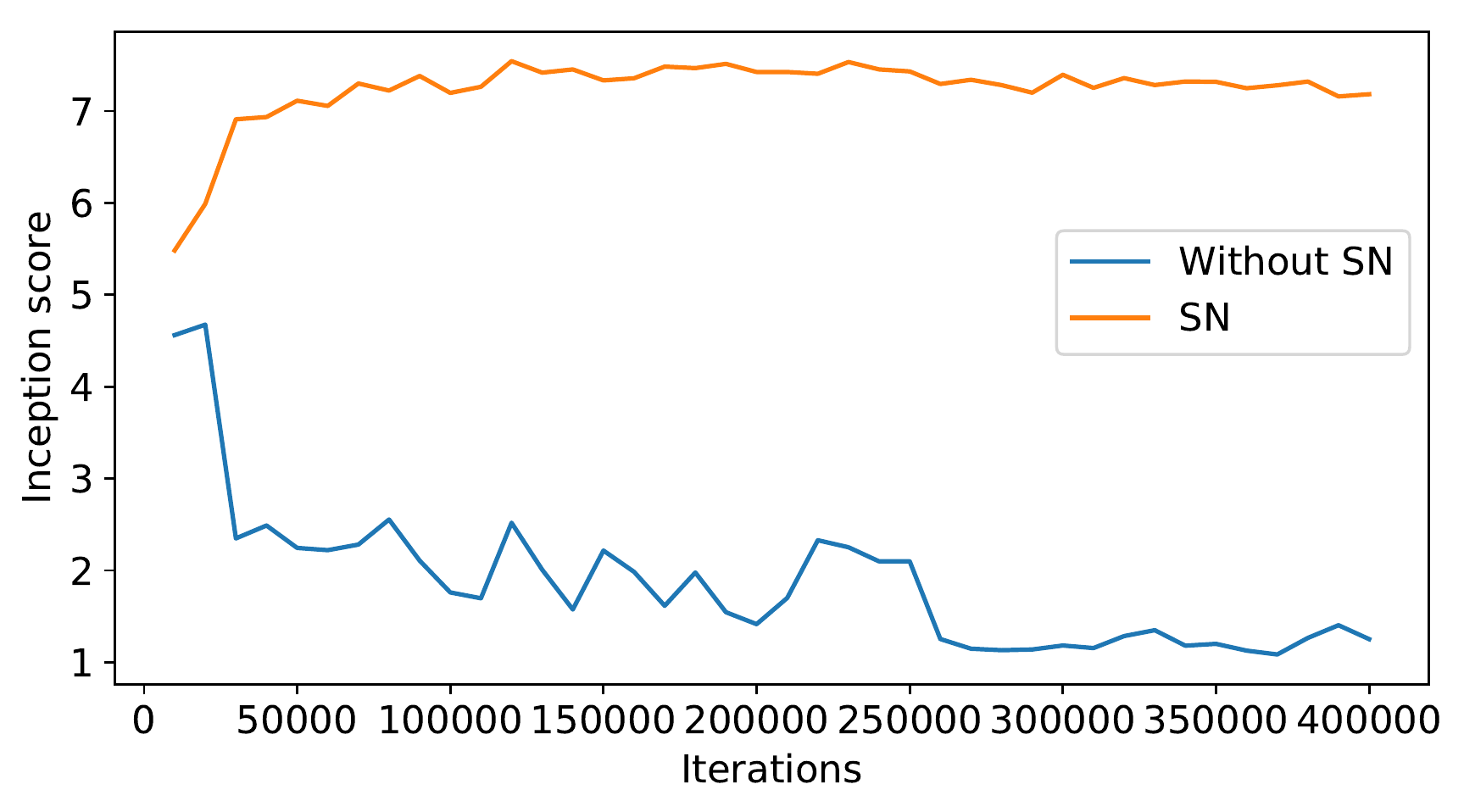}
		\caption{Inception score with and without SN in \cifar{}.}
		\label{fig:inception-withwithoutsn-cifar10}
	\end{minipage}
\end{figure}

\subsection{Comparing Two Variants Spectral Norms}
\label{app:compare-sn}
\cref{fig:sns-sn-cifar10,fig:sns-realsn-cifar10} show the ratio between two versions of spectral norm \cite{miyato2018spectral,farnia2018generalizable} throughout the training of the popular SN \cite{miyato2018spectral} and the strict SN \cite{farnia2018generalizable}.
$\brsp{Conv}$ denotes the spectral norm of the expanded matrix $\brconvsp{w}$  used in \cite{farnia2018generalizable}. $\brsp{w}$ denotes the spectral norm of reshaped matrix $\brspw{w}$ used in \cite{miyato2018spectral}. 
The theoretical lower and upper bound are calculated according to Corollary 1 in \cite{tsuzuku2018lipschitz}. 
We can see that no matter in which architecture, $\brconvsp{w}$ is usually strictly larger than $\brspw{w}$.
Note that the reason why in some cases the ratio exceeds the upper bound in \cref{fig:sns-sn-cifar10} is because the spectral norms are calculated using power iteration \cite{miyato2018spectral,farnia2018generalizable} which has approximation error.

\begin{figure}[t]
	\centering
	\begin{minipage}{.47\textwidth}
		\centering
		\includegraphics[width=\linewidth]{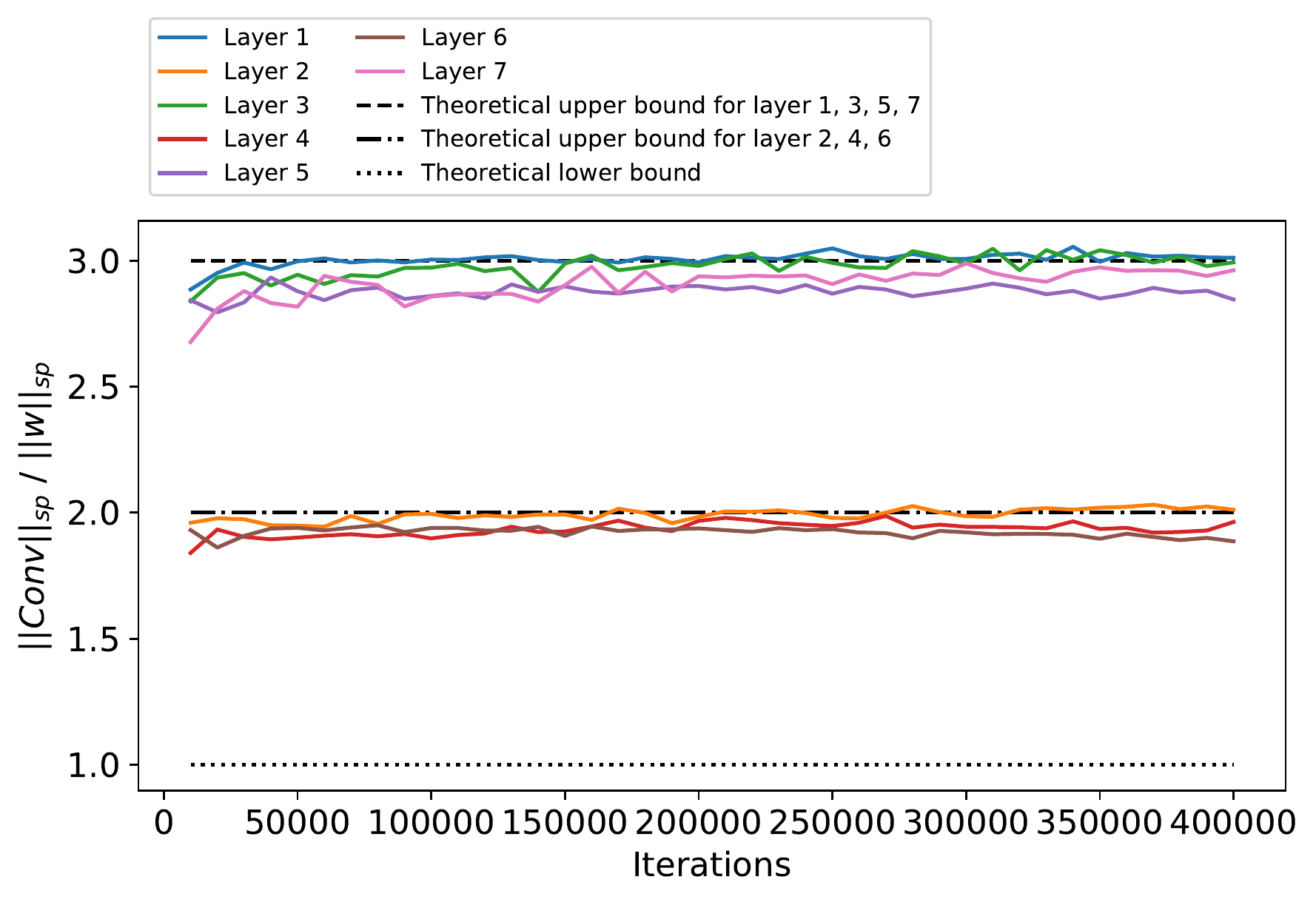}
		\caption{The ratio of two spectral norms throughout the training of the popular SN \cite{miyato2018spectral} in \cifar{}.}
		\label{fig:sns-sn-cifar10}
	\end{minipage}%
	~~
	\begin{minipage}{0.47\textwidth}
		\centering
		\includegraphics[width=\linewidth]{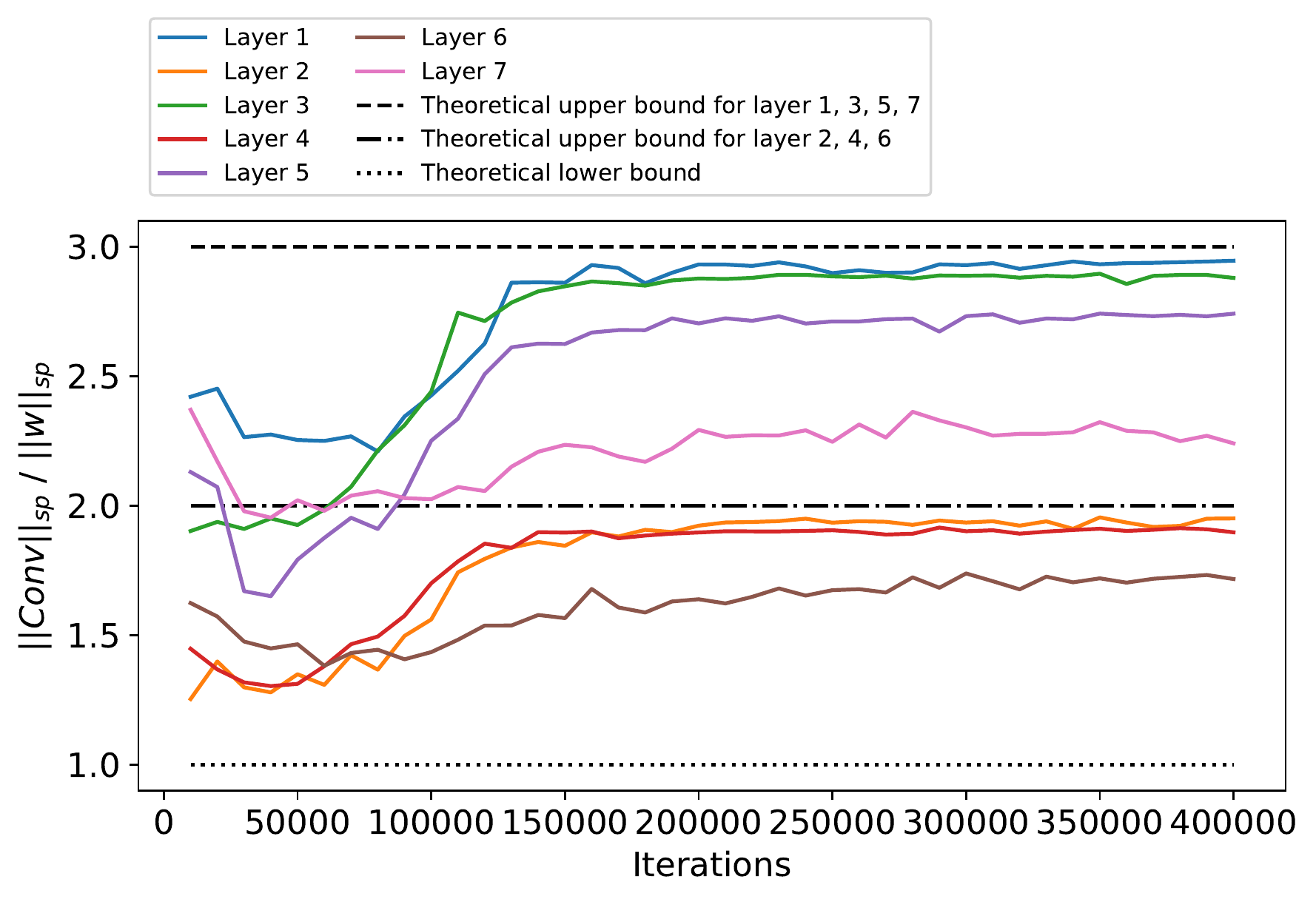}
		\caption{The ratio of two spectral norms throughout the training of the strict SN \cite{farnia2018generalizable} in \cifar{}.}
		\label{fig:sns-realsn-cifar10}
	\end{minipage}
\end{figure}

\subsection{Parameter Variance of Scaled SN}
\label{app:parameter-variance-scaled-sn}
Figure \cref{fig:paravar-scaledsn-cifar10} shows the parameter variance of scaled SN for both SN versions \cite{miyato2018spectral,farnia2018generalizable}. We can see that when scale=1.75, the product of parameter variances for \snconv{} \cite{farnia2018generalizable} is similar to the one of \snw{} \cite{miyato2018spectral}. Moreover, by comparing \cref{fig:paravar-scaledsn-cifar10} and \cref{fig:sn_scale} we can see that when the products of variances of two SN variants are similar, the sample quality is also similar. This confirms the intuition from LeCun initialization \cite{LeCun1998} that the magnitude of variance plays an important role on the performance of neural network, and it should not be too large nor too small.

\begin{figure}
	\centering
	\includegraphics[width=0.5\linewidth]{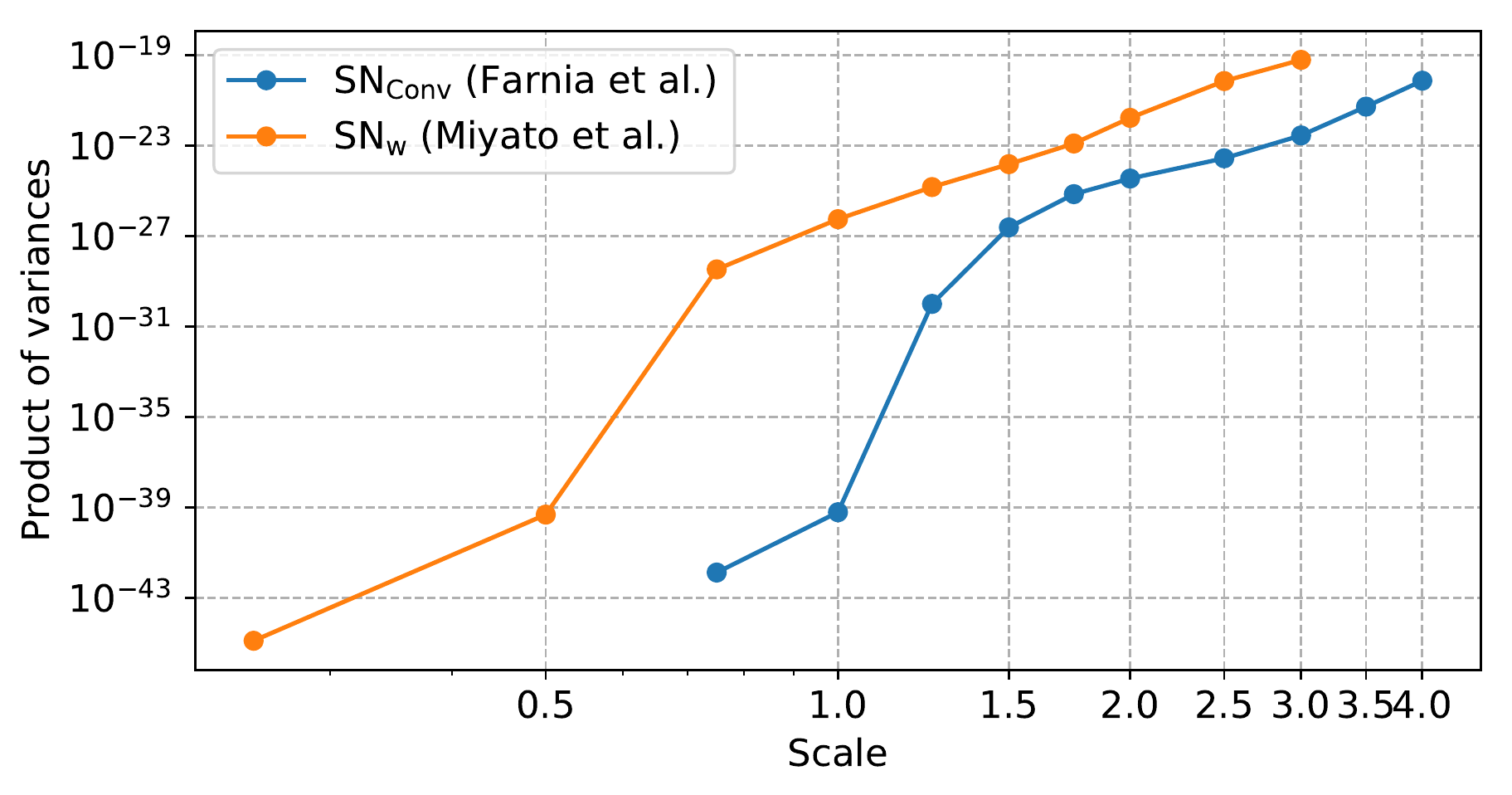}
	\caption{The parameter variance of scaled SN in \cifar{}.}
	\label{fig:paravar-scaledsn-cifar10}
\end{figure}

\section{Experimental Details and Additional Results on Scaling (\cref{sec:scaling_approach})}
\label{app:result_scaling_approach}

\subsection{Experimental Details}
The network architectures are shown in \cref{tbl:gen-network-cifar10-stl10,tbl:dis-network-cifar10-stl10}.
SN models are run for 400k iterations. 
LeCun initialization models are run till the sample quality converges or starts dropping (usually within 400k iterations).
Batch size is 64. The optimizer is Adam.
$\alpha_g=0.0001, \alpha_d=0.0001, \beta_1=0.5, \beta_2=0.999, n_{dis}=1$.
We use hinge loss \cite{miyato2018spectral}. 

Since LeCun initialization is unstable when the scaling is not proper, in \cref{fig:sn_scaling_approach}, we plot the best score during training instead of the score at the end of training.

\subsection{Additional Results}
Although the good scaling modes for SN and LeCun initialization seem to be very different in \cref{fig:sn_scaling_approach}, there indeed exists a (perhaps  coincidental) correspondence in terms of parameter variances. In \cref{fig:sn_scaling_approach_variance}, we show the inception score v.s. parameter variances for SN and LeCun initialization. We can see that the first good mode occurs when log of the product of parameter variances is around -70 to -60, and the second mode is around -50 to -40.

\begin{figure}[ht]
	\centering
	\includegraphics[width=0.3\linewidth]{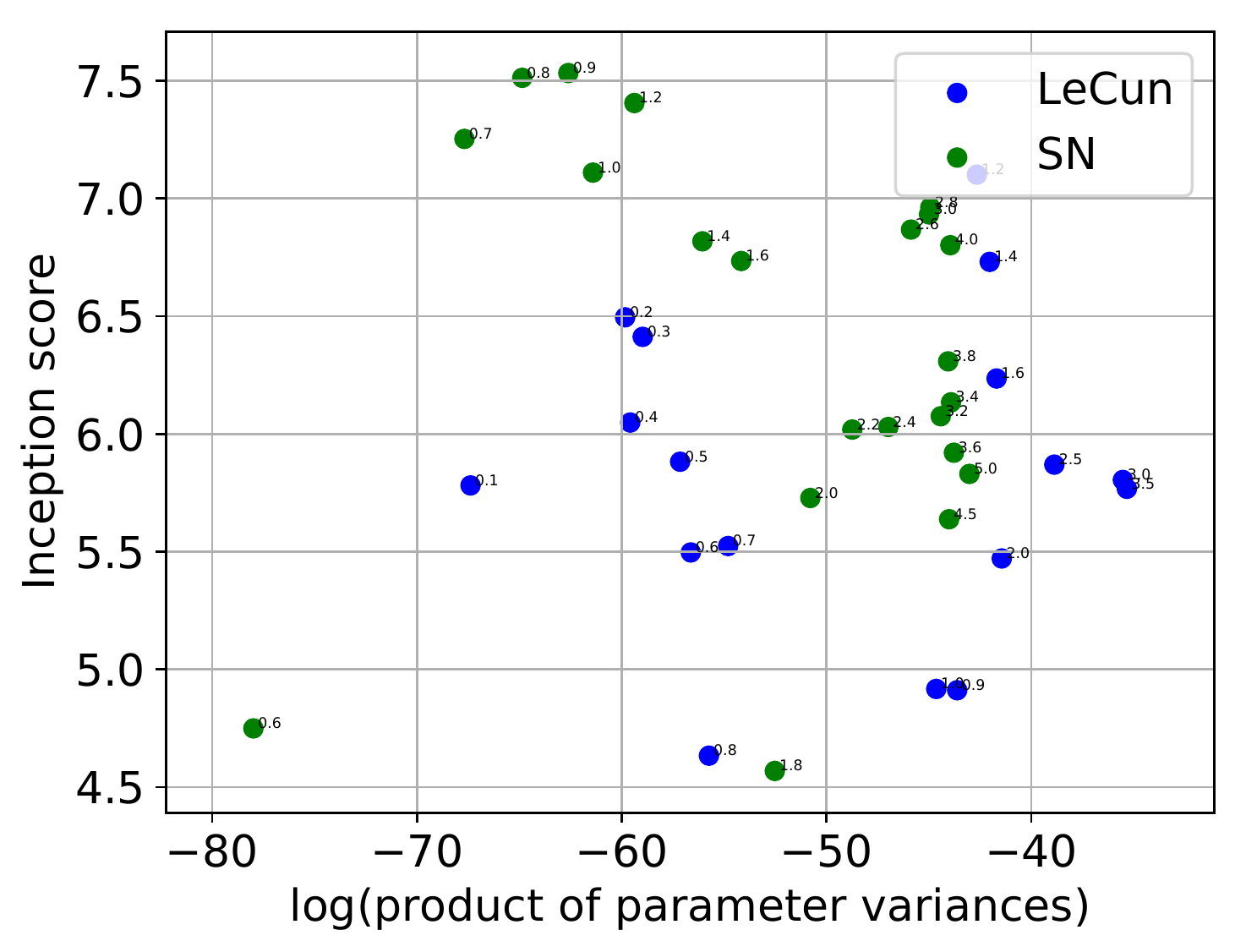}
	\caption{Inception score v.s. parameter variances of scaled SN and scaled LeCun initialization in \cifar{}. Each point corresponds to one run, at the point when the score is the best during training. The numbers near each point indicate the scaling.}
	\label{fig:sn_scaling_approach_variance}
\end{figure}

\section{Results with Different Hyper-parameters and SN Variants}
\label{app:result}

In addition to SN \cite{miyato2018spectral},  we compare against two variants of SN proposed in the appendix of  \cite{miyato2018spectral}, which we denote ``same $\gamma$'' and ``diff. $\gamma$'' (details in \cref{app:sn-variant}). 
{These two variants are reported to be worse than SN in \cite{miyato2018spectral} and are not used in practice, but we include them here for reference.}
We run experiments on \cifar{}, \stl{}, \celeba{}, and \imagenet{}, with two widely-used metrics for sample quality: inception score \cite{salimans2016improved} and Frechet Inception Distance (FID) \cite{heusel2017gans} (details in \cref{app:dataset-metric}).

\begin{table*}[t]
	\centering
	\scalebox{0.98}{
		\begin{tabular}{l | c c | c c | c c}
			\toprule
			& \multicolumn{2}{c|}{\cifar{}} & \multicolumn{2}{c|}{\stl{}} &\multicolumn{1}{c}{\celeba{}} \\
			\midrule
			& Inception score $\uparrow$ & FID $\downarrow$ & Inception score $\uparrow$ & FID $\downarrow$ & FID $\downarrow$\\
			\midrule
			Real data & 11.26 & 9.70 & 26.70 & 10.17 & 4.44\\ \hline
			SN (same $\gamma$) & 6.46 $\pm$ 0.06 & 42.35 $\pm$ 0.74 &   \textbf{8.86 $\pm$ 0.03} &   \textbf{54.61 $\pm$ 0.51}  & \textbf{7.74 $\pm$ 0.11}\\
			BSN (same $\gamma$) & \textbf{6.69 $\pm$ 0.05} & \textbf{39.62 $\pm$ 0.40} & 8.76 $\pm$ 0.03 & 55.04 $\pm$ 0.48 & 7.83 $\pm$ 0.09\\\hline
			SN (diff. $\gamma$) & 6.53 $\pm$ 0.01 & 41.88 $\pm$ 0.50 &   8.79 $\pm$ 0.03 &  56.76 $\pm$ 0.44  & \textbf{\red{7.54 $\pm$ 0.08}}\\ 
			BSN (diff. $\gamma$)  & \textbf{6.72 $\pm$ 0.05} & \textbf{38.15 $\pm$ 0.72} & \textbf{8.80 $\pm$ 0.03} & \textbf{53.99 $\pm$ 0.33} & 7.67 $\pm$ 0.04\\\hline
			SN& 7.22 $\pm$ 0.09 & 31.43 $\pm$ 0.90 &   9.16 $\pm$ 0.03&   \textbf{\red{42.89 $\pm$ 0.54}} & 9.09 $\pm$ 0.32\\
			\nameshort{}& \textbf{\red{7.58 $\pm$ 0.04}} & \textbf{\red{26.62 $\pm$ 0.21}} &    \textbf{\red{9.25 $\pm$ 0.01}}&  42.98 $\pm$ 0.54 & \textbf{8.54 $\pm$ 0.20}\\
			\bottomrule
		\end{tabular}
	}
	\caption{Inception scores and FIDs on \cifar{}, \stl{}, and \celeba{}. Each experiment is conducted with 5 random seeds, with mean and standard error reported. 
		We follow the common practice of excluding Inception Score in \celeba{} as the inception network is pretrained on ImageNet, which is very different from CelebA. {The bold font marks the best numbers between SN and \nameshort{} using the same variant. The red color marks the best numbers among all runs. The``same $\gamma$" and ``diff. $\gamma$" variants are not used in practice and are reported to have bad performance in \cite{miyato2018spectral}.}}
	\label{tbl:cifar-stl}
\end{table*}

\label{sec:cifar_stl}
We use the network architecture from SN \cite{miyato2018spectral}. We controlled five  hyper-parameters (\cref{tbl:cifar-stl10-hyper}, \cref{app:result_cifar10}): $\alpha_g$ and $\alpha_d$, the generator/discriminator learning rates, $\beta_1, \beta_2$, Adam momentum parameters \cite{kingma2014adam}, and $n_{dis}$, the number of discriminator updates per generator update. 
Three hyper-parameter settings are from \cite{miyato2018spectral}, with equal discriminator and generator learning rates; 
the final two test  \emph{unequal} learning rates {for showing a more thorough comparison}.
More details are in \cref{app:result_cifar10,app:result_stl10}.

As in \cite{miyato2018spectral},  we report the metrics from the best hyper-parameter for each algorithm in \cref{tbl:cifar-stl}. 
\emph{\nameshort{} outperforms the standard SN in all sample quality metrics except FID score on \stl{}, where their metrics are within standard error of each other.} 
Regarding the SN variants with $\gamma$, in \cifar{} and \stl{}, they have worse performance than SN and \nameshort{}, same as reported in \cite{miyato2018spectral}. 
In \celeba{}, the SN variants have better performance for the best hyper-parameter setting. But in general, these SN variants are very sensitive to hyper-parameters (\cref{app:result_cifar10,app:result_stl10,app:result_celeba}), therefore they are not adopted in practice \cite{miyato2018spectral}. {Nevertheless, \nameshort{} is still able to improve or have similar performance on those two variants in most of the settings.}

More importantly, the superiority of \nameshort{} is stable across hyper-parameters. \cref{fig:cifar-bar-is,fig:stl-bar-is} show the inception scores of all the hyper-parameters we tested on \cifar{} and \stl{}. 
\nameshort{} has the best or competitive performance in most of the settings. 
The only exception is $n_{dis}=5$ setting in \stl{}, where we observe that the performance from both SN and \nameshort{} have larger variance across different random seeds, and the SN variants with $\gamma$ perform better. On \celeba{}, BSN also outperforms SN in FID across all hyper-parameters (\cref{app:result_celeba}), and it outperforms all SN variants in every hyper-parameter setting except one (\cref{fig:celeba-bar-fid}).

More results (generated images, training curves, FID plots) are in \cref{app:result_cifar10,app:result_stl10,app:result_celeba}.
\begin{figure}[ht]
	\centering
	\includegraphics[width=0.5\linewidth]{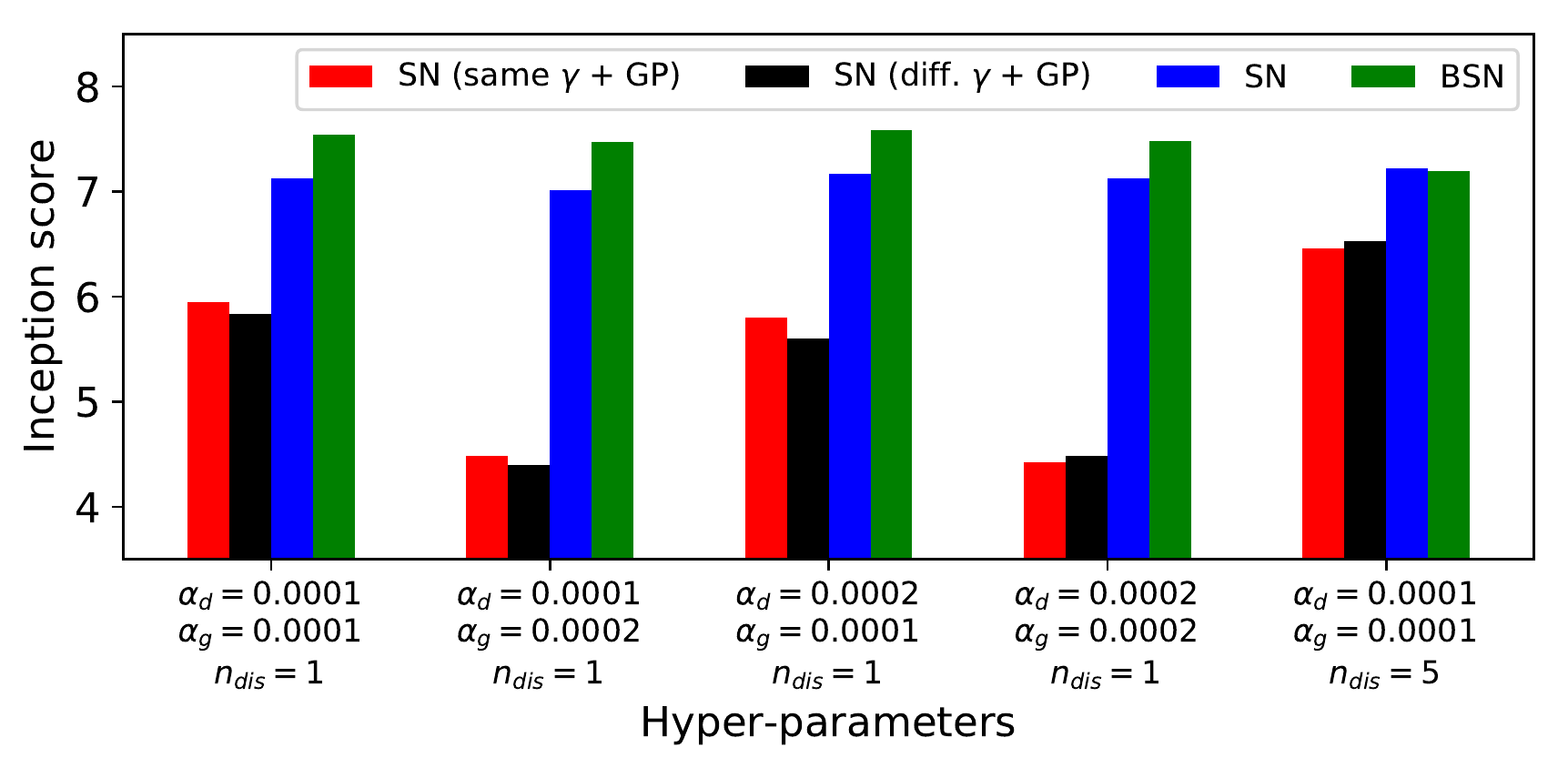}
	\caption{Inception score in \cifar{}. The results are averaged over 5 random seeds.}
	\label{fig:cifar-bar-is}
\end{figure}%
\begin{figure}[ht]
	\centering
	\includegraphics[width=0.5\linewidth]{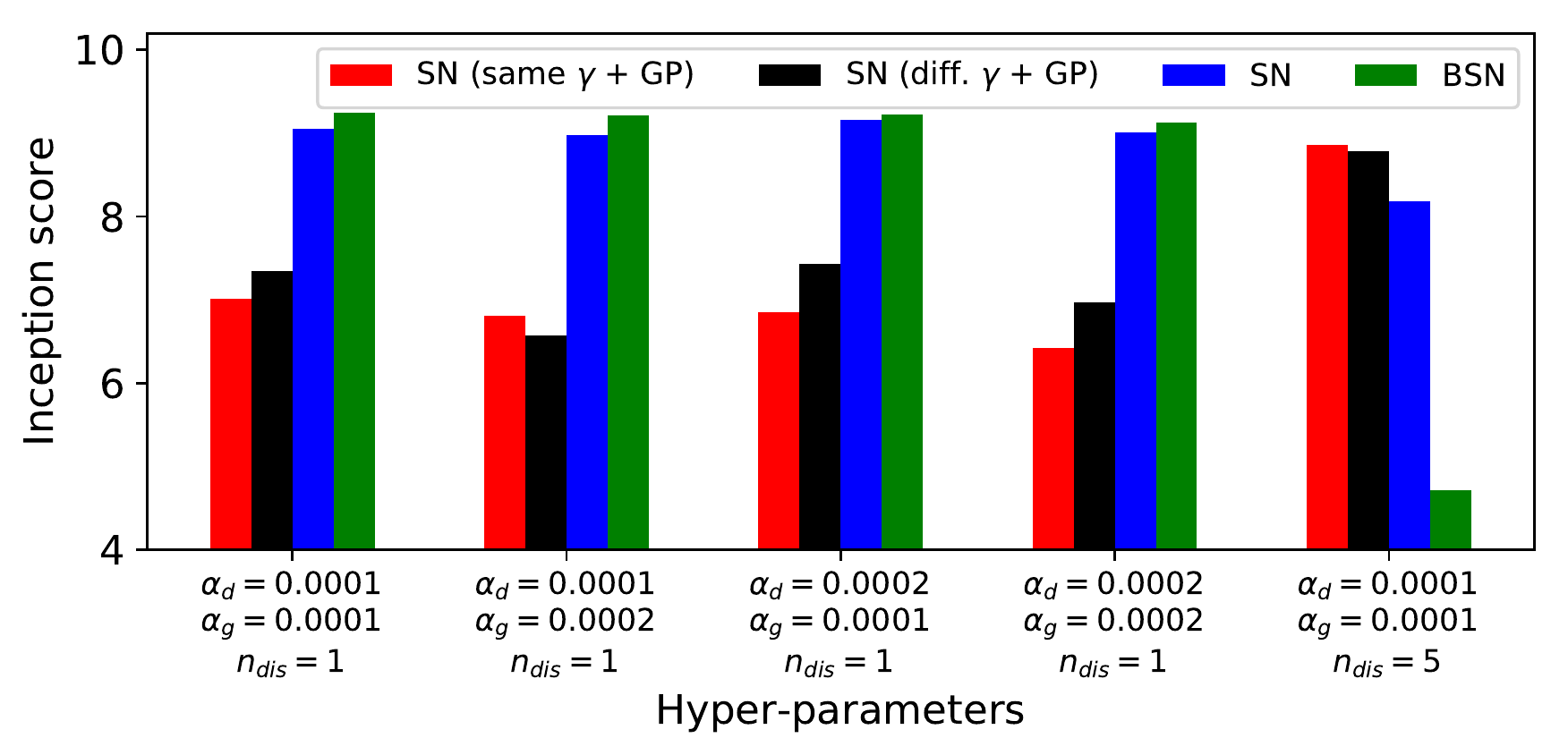}
	\caption{Inception score in \stl{}. The results are averaged over 5 random seeds.}
	\label{fig:stl-bar-is}
\end{figure}

\section{Details on SN Variants}
\label{app:sn-variant}

In Appendix E of \cite{miyato2018spectral}, a variant of SN is introduced. Instead of strictly setting the spectral norm of each layer, the idea of this approach is to release the constraint by multiplying each spectral normalized weights with a trainable parameter $\gamma$. However, this would make the gradient of discriminator arbitrarily large, which violates the original motivation of SN. Therefore, the approach incorporates gradient penalty \cite{gulrajani2017improved} for setting the Lipschitz constant of discriminator to 1. The gradient penalty weights are set to 10 in all experiments.

However, from the description in \cite{miyato2018spectral}, it is unclear if all layers have the same or separated $\gamma$. Therefore, we try both versions in our experiments. ``Same $\gamma$'' denotes that version where all layers share the same $\gamma$. ``Diff. $\gamma$'' denotes the version where each layer has a separate $\gamma$.

\section{Experimental Details and Additional Results on \cifar{}}
\label{app:result_cifar10}

\subsection{Experimental Details}
The network architectures are shown in \cref{tbl:gen-network-cifar10-stl10,tbl:dis-network-cifar10-stl10}.
All experiments are run for 400k iterations. Batch size is 64. The optimizer is Adam.
We use the five hyper-parameter settings listed in \cref{tbl:cifar-stl10-hyper}.
(In \cref{tbl:all} we only show the results from the first hyper-parameter setting.)
We use hinge loss with the popular SN implementation \cite{miyato2018spectral}. 

For \namessnshort{} in \cref{tbl:all}, we ran following scales: [0.7, 0.8, 0.9, 1.0, 1.2, 1.4, 1.6, 1.8, 2.0, 2.2, 2.4, 2.6, 2.8, 3.0, 3.2, 3.4, 3.6, 3.8, 4.0, 4.5, 5.0, 5.5, 6.0, 7.0, 8.0, 9.0, 10.0], and present the results from best one for each metric. For \namebssnshort{} in \cref{tbl:all}, we ran the following scales: [0.7, 0.8, 0.9, 1.0, 1.2, 1.4, 1.6, 1.8, 2.0, 2.2, 2.4, 2.6, 2.8, 3.0, 3.2, 3.4, 3.6, 3.8, 4.0], and present the results from the best one for each metric.

\begin{table}
	\centering
	\begin{tabular}{ccccc}
		\toprule  
		$\alpha_g$ & $\alpha_d$ & $\beta_1$ & $\beta_2$ & $n_{dis}$ \\
		\midrule
		0.0001 & 0.0001 & 0.5 & 0.9 & 5\\
		0.0001 & 0.0001 & 0.5 & 0.999 & 1\\
		0.0002 & 0.0002 & 0.5 & 0.999 & 1\\
		0.0001 & 0.0002 & 0.5 & 0.999 & 1\\
		0.0002 & 0.0001 & 0.5 & 0.999 & 1\\
		\bottomrule
	\end{tabular}
	\caption{Hyper-parameters tested in \cifar{} and \stl{} experiments. The first three settings are from \cite{miyato2018spectral,gulrajani2017improved, warde2016improving,radford2015unsupervised}. $\alpha_g$ and $\alpha_d$: learning rates for generator and discriminator. $\beta_1, \beta_2$: momentum parameters in Adam. $n_{dis}$: number of discriminator updates per generator update. }
	\label{tbl:cifar-stl10-hyper}
\end{table} 

\subsection{FID Plot}
\cref{fig:cifar-bar-fid} shows the FID score in \cifar{} dataset. We can see that \nameshort{} has the best performance in all 5 hyper-parameter settings.

\begin{figure}
		\centering
		\includegraphics[width=0.45\linewidth]{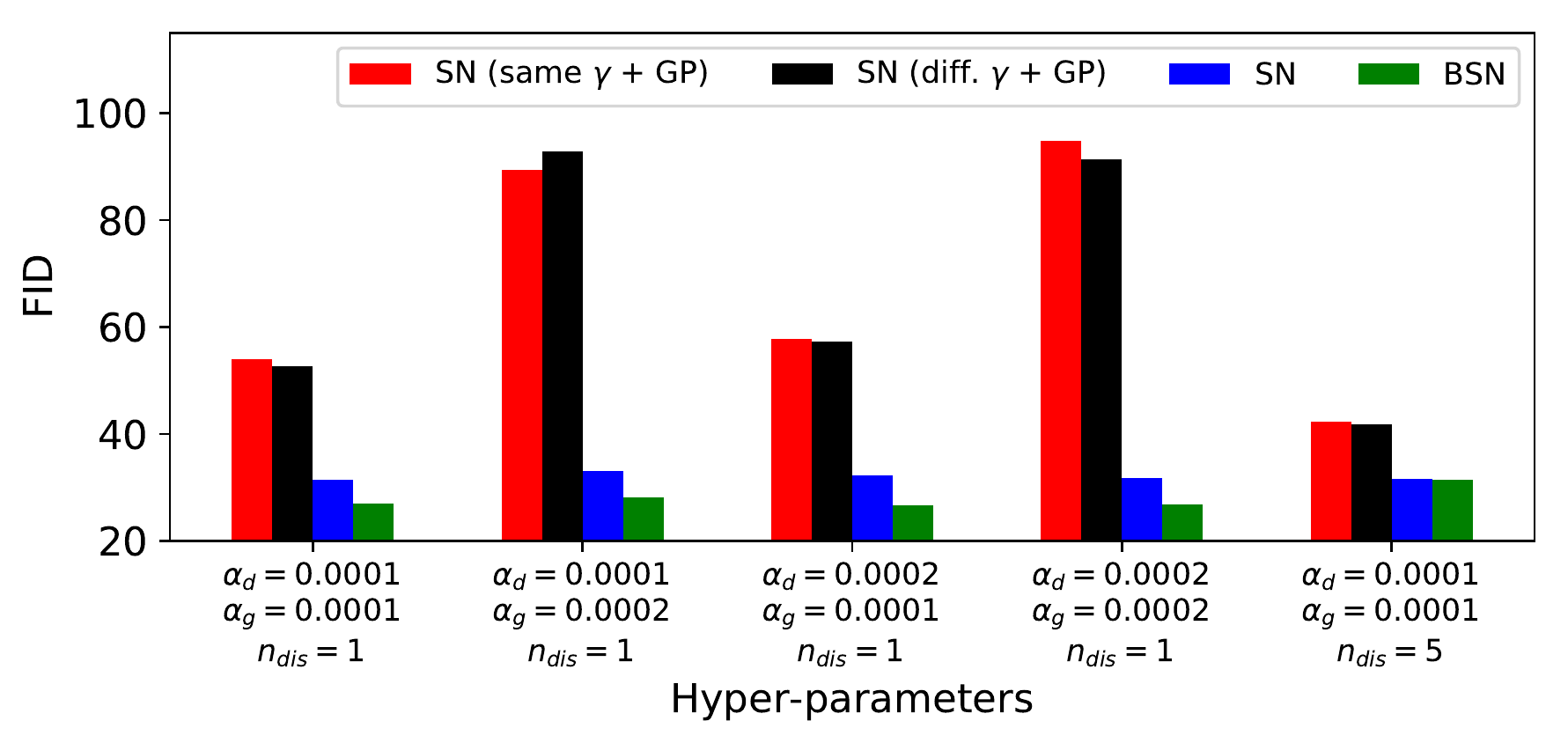}
		\caption{FID in \cifar{}. The results are averaged over 5 random seeds.}
		\label{fig:cifar-bar-fid}
\end{figure}

\subsection{Training Curves}
From \cref{sec:cifar_stl} we can see that SN (no $\gamma$) and \nameshort{} generally have the best performance. Therefore, in this section, we focus on comparing these two algorithms with the training curves. \cref{fig:cifar-g0.0001-d0.0001-ndis1-inception,fig:cifar-g0.0001-d0.0001-ndis1-fid,fig:cifar-g0.0001-d0.0002-ndis1-inception,fig:cifar-g0.0001-d0.0002-ndis1-fid,fig:cifar-g0.0002-d0.0001-ndis1-inception,fig:cifar-g0.0002-d0.0001-ndis1-fid,fig:cifar-g0.0002-d0.0002-ndis1-inception,fig:cifar-g0.0002-d0.0002-ndis1-fid,fig:cifar-g0.0001-d0.0001-ndis5-inception,fig:cifar-g0.0001-d0.0001-ndis5-fid} show the inception score and FID of these two algorithms during training. Generally, we see that \nameshort{} converges slower than SN \emph{at the beginning of training}. However, as training proceeds, the sample quality of SN often drops (e.g. \cref{fig:cifar-g0.0001-d0.0001-ndis1-inception,fig:cifar-g0.0001-d0.0001-ndis1-fid,fig:cifar-g0.0001-d0.0002-ndis1-inception,fig:cifar-g0.0001-d0.0002-ndis1-fid,fig:cifar-g0.0002-d0.0001-ndis1-inception,fig:cifar-g0.0002-d0.0001-ndis1-fid,fig:cifar-g0.0002-d0.0002-ndis1-inception,fig:cifar-g0.0002-d0.0002-ndis1-fid}), whereas the sample quality of \nameshort{} always increases and then stabilizes at the high level. In most cases, \nameshort{} not only outperforms SN at the end of training, but also outperforms the peak sample quality of SN during training (e.g. \cref{fig:cifar-g0.0001-d0.0001-ndis1-inception,fig:cifar-g0.0001-d0.0001-ndis1-fid,fig:cifar-g0.0001-d0.0002-ndis1-inception,fig:cifar-g0.0001-d0.0002-ndis1-fid,fig:cifar-g0.0002-d0.0001-ndis1-inception,fig:cifar-g0.0002-d0.0001-ndis1-fid,fig:cifar-g0.0002-d0.0002-ndis1-inception,fig:cifar-g0.0002-d0.0002-ndis1-fid}). From these results, we can conclude that \nameshort{} improves both the sample quality and training stability over SN.

\trainingcurvefid{\cifar{}}{cifar10/296}{0.0001}{0.0001}{1}{cifar}
\trainingcurve{\cifar{}}{cifar10/296}{0.0001}{0.0002}{1}{cifar}
\trainingcurve{\cifar{}}{cifar10/296}{0.0002}{0.0001}{1}{cifar}
\trainingcurve{\cifar{}}{cifar10/296}{0.0002}{0.0002}{1}{cifar}
\trainingcurve{\cifar{}}{cifar10/296}{0.0001}{0.0001}{5}{cifar}

\subsection{Generated Images}
\cref{fig:cifar-SN_same_gamma-generated-samples,fig:cifar-SN_diff_gamma-generated-samples,fig:cifar-SN_no_gamma-generated-samples,fig:cifar-BSN-generated-samples} show the generated images from the run with the best inception score for each algorithm.

    \begin{figure}
        \centering
        \includegraphics[width=0.6\linewidth]{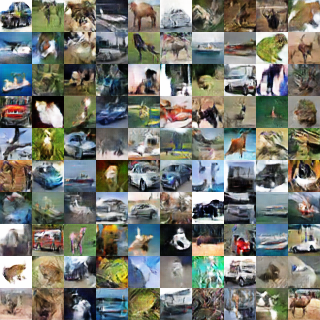}
        \caption{Generated samples from the best run of SN (same $\gamma$) in \cifar{}. 
            The hyper-parameters are: $\alpha_g=\protect\input{figure/sample_quality/cifar10/296/0.0,sn-double-gamma_best_sample/SN_same_gamma/g_lr.txt}$,
            $\alpha_d=\protect\input{figure/sample_quality/cifar10/296/0.0,sn-double-gamma_best_sample/SN_same_gamma/d_lr.txt}$,
            $n_{dis}=\protect\input{figure/sample_quality/cifar10/296/0.0,sn-double-gamma_best_sample/SN_same_gamma/n_dis.txt}$.
            Inception score is $\protect\input{figure/sample_quality/cifar10/296/0.0,sn-double-gamma_best_sample/SN_same_gamma/inception_score_mean.txt}$.
            FID is $\protect\input{figure/sample_quality/cifar10/296/0.0,sn-double-gamma_best_sample/SN_same_gamma/fid.txt}$.
        }
        \label{fig:cifar-SN_same_gamma-generated-samples}
    \end{figure}

    \begin{figure}
        \centering
        \includegraphics[width=0.6\linewidth]{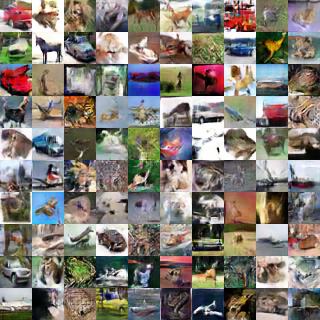}
        \caption{Generated samples from the best run of SN (diff. $\gamma$) in \cifar{}. 
            The hyper-parameters are: $\alpha_g=\protect\input{figure/sample_quality/cifar10/296/0.0,sn-double-gamma_best_sample/SN_diff_gamma/g_lr.txt}$,
            $\alpha_d=\protect\input{figure/sample_quality/cifar10/296/0.0,sn-double-gamma_best_sample/SN_diff_gamma/d_lr.txt}$,
            $n_{dis}=\protect\input{figure/sample_quality/cifar10/296/0.0,sn-double-gamma_best_sample/SN_diff_gamma/n_dis.txt}$.
            Inception score is $\protect\input{figure/sample_quality/cifar10/296/0.0,sn-double-gamma_best_sample/SN_diff_gamma/inception_score_mean.txt}$.
            FID is $\protect\input{figure/sample_quality/cifar10/296/0.0,sn-double-gamma_best_sample/SN_diff_gamma/fid.txt}$.
        }
        \label{fig:cifar-SN_diff_gamma-generated-samples}
    \end{figure}

    \begin{figure}
        \centering
        \includegraphics[width=0.6\linewidth]{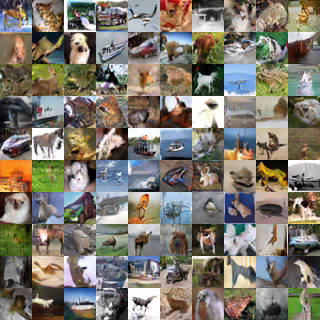}
        \caption{Generated samples from the best run of SN in \cifar{}. 
            The hyper-parameters are: $\alpha_g=\protect\input{figure/sample_quality/cifar10/296/0.0,sn-double-gamma_best_sample/SN_no_gamma/g_lr.txt}$,
            $\alpha_d=\protect\input{figure/sample_quality/cifar10/296/0.0,sn-double-gamma_best_sample/SN_no_gamma/d_lr.txt}$,
            $n_{dis}=\protect\input{figure/sample_quality/cifar10/296/0.0,sn-double-gamma_best_sample/SN_no_gamma/n_dis.txt}$.
            Inception score is $\protect\input{figure/sample_quality/cifar10/296/0.0,sn-double-gamma_best_sample/SN_no_gamma/inception_score_mean.txt}$.
            FID is $\protect\input{figure/sample_quality/cifar10/296/0.0,sn-double-gamma_best_sample/SN_no_gamma/fid.txt}$.
        }
        \label{fig:cifar-SN_no_gamma-generated-samples}
    \end{figure}

    \begin{figure}
        \centering
        \includegraphics[width=0.6\linewidth]{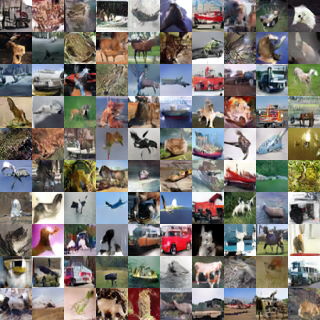}
        \caption{Generated samples from the best run of \nameshort{} in \cifar{}. 
            The hyper-parameters are: $\alpha_g=\protect\input{figure/sample_quality/cifar10/296/0.0,sn-double-gamma_best_sample/BSN/g_lr.txt}$,
            $\alpha_d=\protect\input{figure/sample_quality/cifar10/296/0.0,sn-double-gamma_best_sample/BSN/d_lr.txt}$,
            $n_{dis}=\protect\input{figure/sample_quality/cifar10/296/0.0,sn-double-gamma_best_sample/BSN/n_dis.txt}$.
            Inception score is $\protect\input{figure/sample_quality/cifar10/296/0.0,sn-double-gamma_best_sample/BSN/inception_score_mean.txt}$.
            FID is $\protect\input{figure/sample_quality/cifar10/296/0.0,sn-double-gamma_best_sample/BSN/fid.txt}$.
        }
        \label{fig:cifar-BSN-generated-samples}
    \end{figure}

\clearpage

\section{Experimental Details and Additional Results on \stl{}}
\label{app:result_stl10}

\subsection{Experimental Details}
The network architectures are shown in \cref{tbl:gen-network-cifar10-stl10,tbl:dis-network-cifar10-stl10}.
Batch size is 64. The optimizer is Adam. We use the five hyper-parameter settings listed in \cref{tbl:cifar-stl10-hyper}.
(In \cref{tbl:all} we only show the results from the first hyper-parameter setting.)
We use hinge loss with the popular SN implementation \cite{miyato2018spectral}. 

SN (no $\gamma$) and \nameshort{} under $n_{dis}=1$ settings are run for 800k iterations as we observe that they need longer time to converge. All other experiments are run for 400k iterations.

For \namessnshort{} and \namebssnshort{} in \cref{tbl:all}, we ran following scales: [0.7, 0.8, 0.9, 1.0, 1.2, 1.4, 1.6], and present the results from best one for each metric.

\subsection{FID Plot}
\cref{fig:stl-bar-fid} shows the FID score in \stl{} dataset. We can see that \nameshort{} has the best or competitive performance in most of the hyper-parameter settings. Again, the only exception is $n_{dis}=5$ setting.

\begin{figure}
	\centering
	\includegraphics[width=0.45\linewidth]{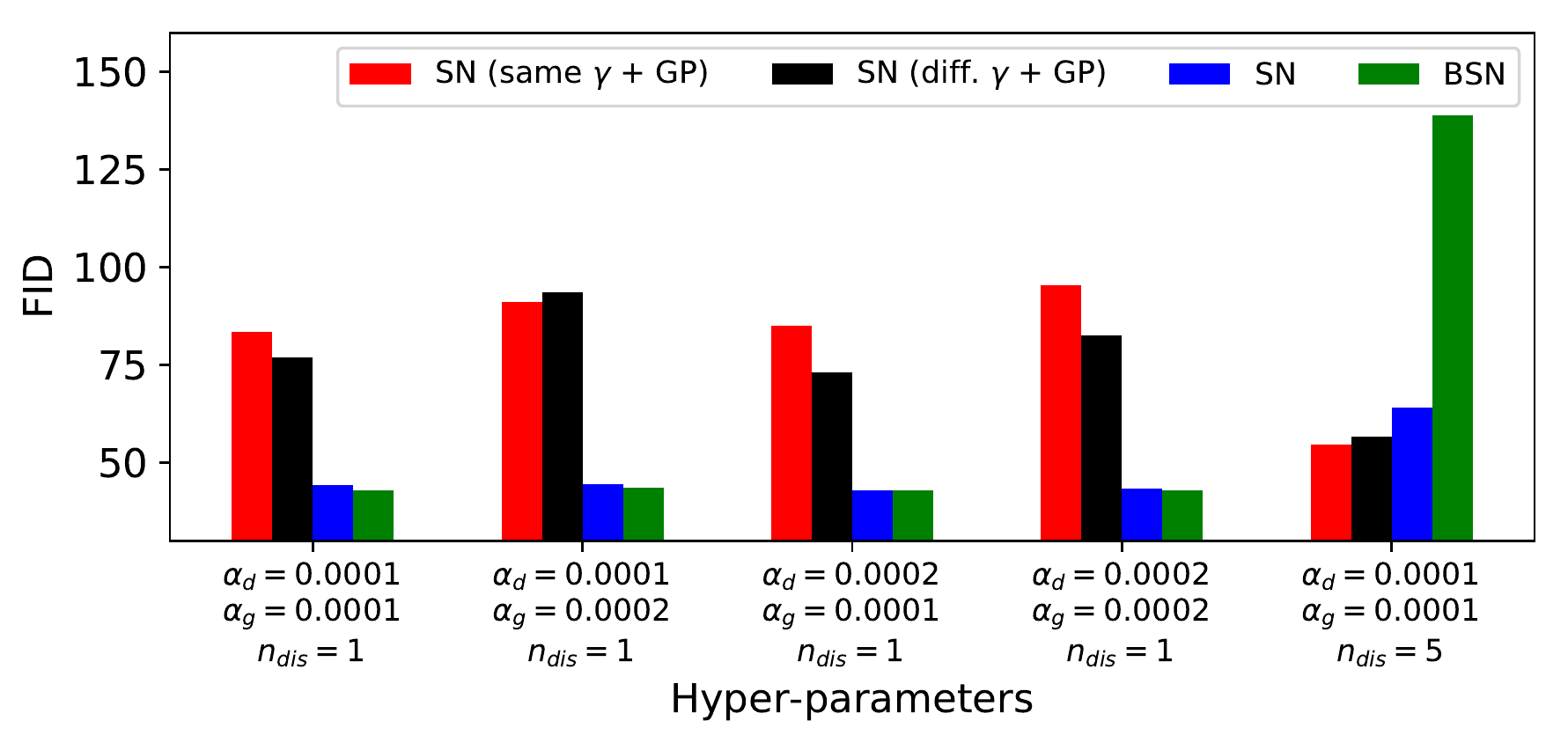}
	\caption{FID in \stl{}. The results are averaged over 5 random seeds.}
	\label{fig:stl-bar-fid}
\end{figure}

\subsection{Training Curves}
From \cref{sec:cifar_stl} we can see that SN (no $\gamma$) and \nameshort{} generally have the best performance. Therefore, in this section, we focus on comparing these two algorithms with the training curves. \cref{fig:stl-g0.0001-d0.0001-ndis1-inception,fig:stl-g0.0001-d0.0001-ndis1-fid,fig:stl-g0.0001-d0.0002-ndis1-inception,fig:stl-g0.0001-d0.0002-ndis1-fid,fig:stl-g0.0002-d0.0001-ndis1-inception,fig:stl-g0.0002-d0.0001-ndis1-fid,fig:stl-g0.0002-d0.0002-ndis1-inception,fig:stl-g0.0002-d0.0002-ndis1-fid,fig:stl-g0.0001-d0.0001-ndis5-inception,fig:stl-g0.0001-d0.0001-ndis5-fid} show the inception score and FID of these two algorithms during training. 
Generally, we see that \nameshort{} converges slower than SN \emph{at the beginning of training}. However, as training proceeds, \nameshort{} finally has better metrics in most cases.
Note that unlike \cifar{}, SN seems to be more stable in \stl{} as its sample quality does not drop in most hyper-parameters.  
But the key conclusion is the same: in most cases, \nameshort{} not only outperforms SN at the end of training, but also outperforms the peak sample quality of SN during training (e.g. \cref{fig:stl-g0.0001-d0.0001-ndis1-inception,fig:stl-g0.0001-d0.0001-ndis1-fid,fig:stl-g0.0001-d0.0002-ndis1-inception,fig:stl-g0.0001-d0.0002-ndis1-fid,fig:stl-g0.0002-d0.0001-ndis1-inception,fig:stl-g0.0002-d0.0001-ndis1-fid,fig:stl-g0.0002-d0.0002-ndis1-inception,fig:stl-g0.0002-d0.0002-ndis1-fid}). 
The only exception is the $n_{dis}=5$ setting, where both SN and \nameshort{} has instability issue: the sample quality first improves and then significantly drops. The problem with \nameshort{} seems to be severer. We discussed about this problem in \cref{sec:cifar_stl}.

\trainingcurve{\stl{}}{stl10/301}{0.0001}{0.0001}{1}{stl}
\trainingcurve{\stl{}}{stl10/301}{0.0001}{0.0002}{1}{stl}
\trainingcurve{\stl{}}{stl10/301}{0.0002}{0.0001}{1}{stl}
\trainingcurve{\stl{}}{stl10/301}{0.0002}{0.0002}{1}{stl}
\trainingcurve{\stl{}}{stl10/301}{0.0001}{0.0001}{5}{stl}

\subsection{Generated Images}

\cref{fig:stl-SN_same_gamma-generated-samples,fig:stl-SN_diff_gamma-generated-samples,fig:stl-SN_no_gamma-generated-samples,fig:stl-BSN-generated-samples} show the generated images from the run with the best inception score for each algorithm.

    \begin{figure}
        \centering
        \includegraphics[width=0.6\linewidth]{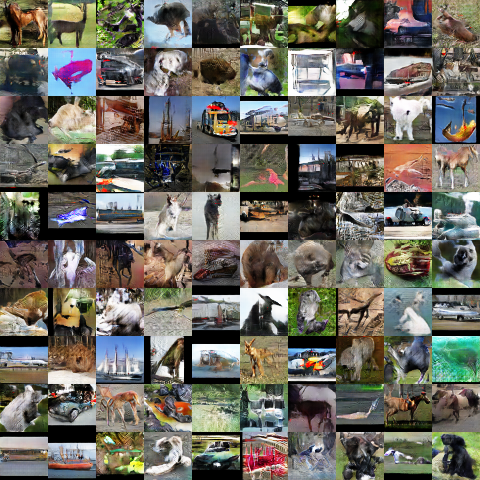}
        \caption{Generated samples from the best run of SN (same $\gamma$) in \stl{}. 
            The hyper-parameters are: $\alpha_g=\protect\input{figure/sample_quality/stl10/301/0.0,sn-double-gamma_best_sample/SN_same_gamma/g_lr.txt}$,
            $\alpha_d=\protect\input{figure/sample_quality/stl10/301/0.0,sn-double-gamma_best_sample/SN_same_gamma/d_lr.txt}$,
            $n_{dis}=\protect\input{figure/sample_quality/stl10/301/0.0,sn-double-gamma_best_sample/SN_same_gamma/n_dis.txt}$.
            Inception score is $\protect\input{figure/sample_quality/stl10/301/0.0,sn-double-gamma_best_sample/SN_same_gamma/inception_score_mean.txt}$.
            FID is $\protect\input{figure/sample_quality/stl10/301/0.0,sn-double-gamma_best_sample/SN_same_gamma/fid.txt}$.
        }
        \label{fig:stl-SN_same_gamma-generated-samples}
    \end{figure}

    \begin{figure}
        \centering
        \includegraphics[width=0.6\linewidth]{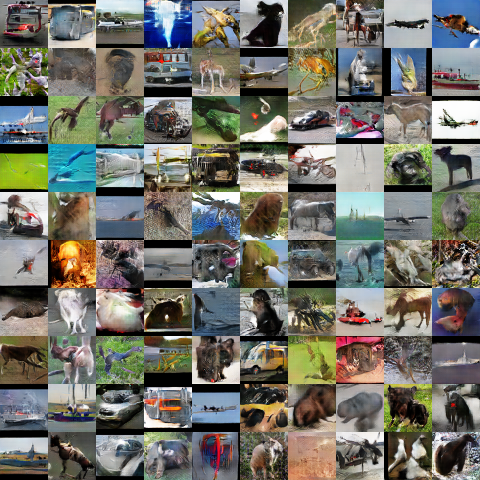}
        \caption{Generated samples from the best run of SN (diff. $\gamma$) in \stl{}. 
            The hyper-parameters are: $\alpha_g=\protect\input{figure/sample_quality/stl10/301/0.0,sn-double-gamma_best_sample/SN_diff_gamma/g_lr.txt}$,
            $\alpha_d=\protect\input{figure/sample_quality/stl10/301/0.0,sn-double-gamma_best_sample/SN_diff_gamma/d_lr.txt}$,
            $n_{dis}=\protect\input{figure/sample_quality/stl10/301/0.0,sn-double-gamma_best_sample/SN_diff_gamma/n_dis.txt}$.
            Inception score is $\protect\input{figure/sample_quality/stl10/301/0.0,sn-double-gamma_best_sample/SN_diff_gamma/inception_score_mean.txt}$.
            FID is $\protect\input{figure/sample_quality/stl10/301/0.0,sn-double-gamma_best_sample/SN_diff_gamma/fid.txt}$.
        }
        \label{fig:stl-SN_diff_gamma-generated-samples}
    \end{figure}

    \begin{figure}
        \centering
        \includegraphics[width=0.6\linewidth]{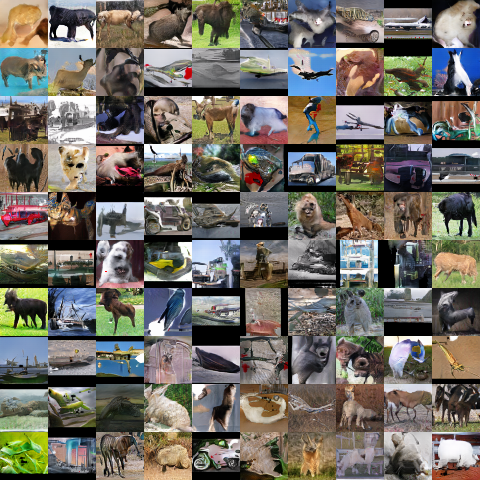}
        \caption{Generated samples from the best run of SN in \stl{}. 
            The hyper-parameters are: $\alpha_g=\protect\input{figure/sample_quality/stl10/301/0.0,sn-double-gamma_best_sample/SN_no_gamma/g_lr.txt}$,
            $\alpha_d=\protect\input{figure/sample_quality/stl10/301/0.0,sn-double-gamma_best_sample/SN_no_gamma/d_lr.txt}$,
            $n_{dis}=\protect\input{figure/sample_quality/stl10/301/0.0,sn-double-gamma_best_sample/SN_no_gamma/n_dis.txt}$.
            Inception score is $\protect\input{figure/sample_quality/stl10/301/0.0,sn-double-gamma_best_sample/SN_no_gamma/inception_score_mean.txt}$.
            FID is $\protect\input{figure/sample_quality/stl10/301/0.0,sn-double-gamma_best_sample/SN_no_gamma/fid.txt}$.
        }
        \label{fig:stl-SN_no_gamma-generated-samples}
    \end{figure}

    \begin{figure}
        \centering
        \includegraphics[width=0.6\linewidth]{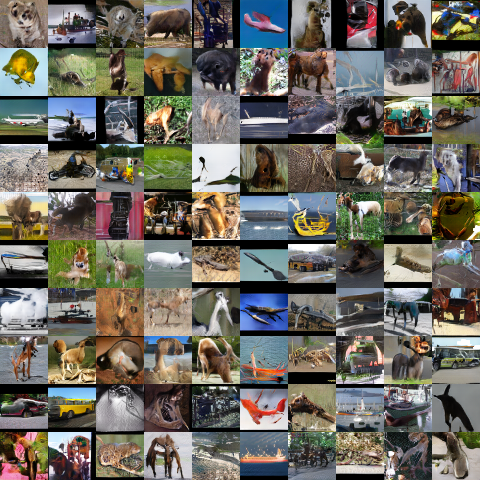}
        \caption{Generated samples from the best run of \nameshort{} in \stl{}. 
            The hyper-parameters are: $\alpha_g=\protect\input{figure/sample_quality/stl10/301/0.0,sn-double-gamma_best_sample/BSN/g_lr.txt}$,
            $\alpha_d=\protect\input{figure/sample_quality/stl10/301/0.0,sn-double-gamma_best_sample/BSN/d_lr.txt}$,
            $n_{dis}=\protect\input{figure/sample_quality/stl10/301/0.0,sn-double-gamma_best_sample/BSN/n_dis.txt}$.
            Inception score is $\protect\input{figure/sample_quality/stl10/301/0.0,sn-double-gamma_best_sample/BSN/inception_score_mean.txt}$.
            FID is $\protect\input{figure/sample_quality/stl10/301/0.0,sn-double-gamma_best_sample/BSN/fid.txt}$.
        }
        \label{fig:stl-BSN-generated-samples}
    \end{figure}

\clearpage
\section{Experimental Details and Additional Results on \celeba{}}
\label{app:result_celeba}

\subsection{Experimental Details}
The network architectures are shown in \cref{tbl:gen-network-cifar10-stl10,tbl:dis-network-cifar10-stl10}.
All experiments are run for 400k iterations. Batch size is 64. The optimizer is Adam.
We use the five hyper-parameter settings listed in \cref{tbl:cifar-stl10-hyper}.
(In \cref{tbl:all} we only show the results from the first hyper-parameter setting.)
We use hinge loss with the popular SN implementation \cite{miyato2018spectral}. 

For \namessnshort{} and \namebssnshort{} in \cref{tbl:all}, we ran following scales: [0.7, 0.8, 0.9, 1.0, 1.2, 1.4, 1.6], and present the results from best one for each metric.

\subsection{FID Plot}
\cref{fig:celeba-bar-fid} shows the FID score in \celeba{} dataset. We can see that \nameshort{} outperforms the standard SN in all 5 hyper-parameter settings.

\begin{figure}
		\centering
		\includegraphics[width=0.45\linewidth]{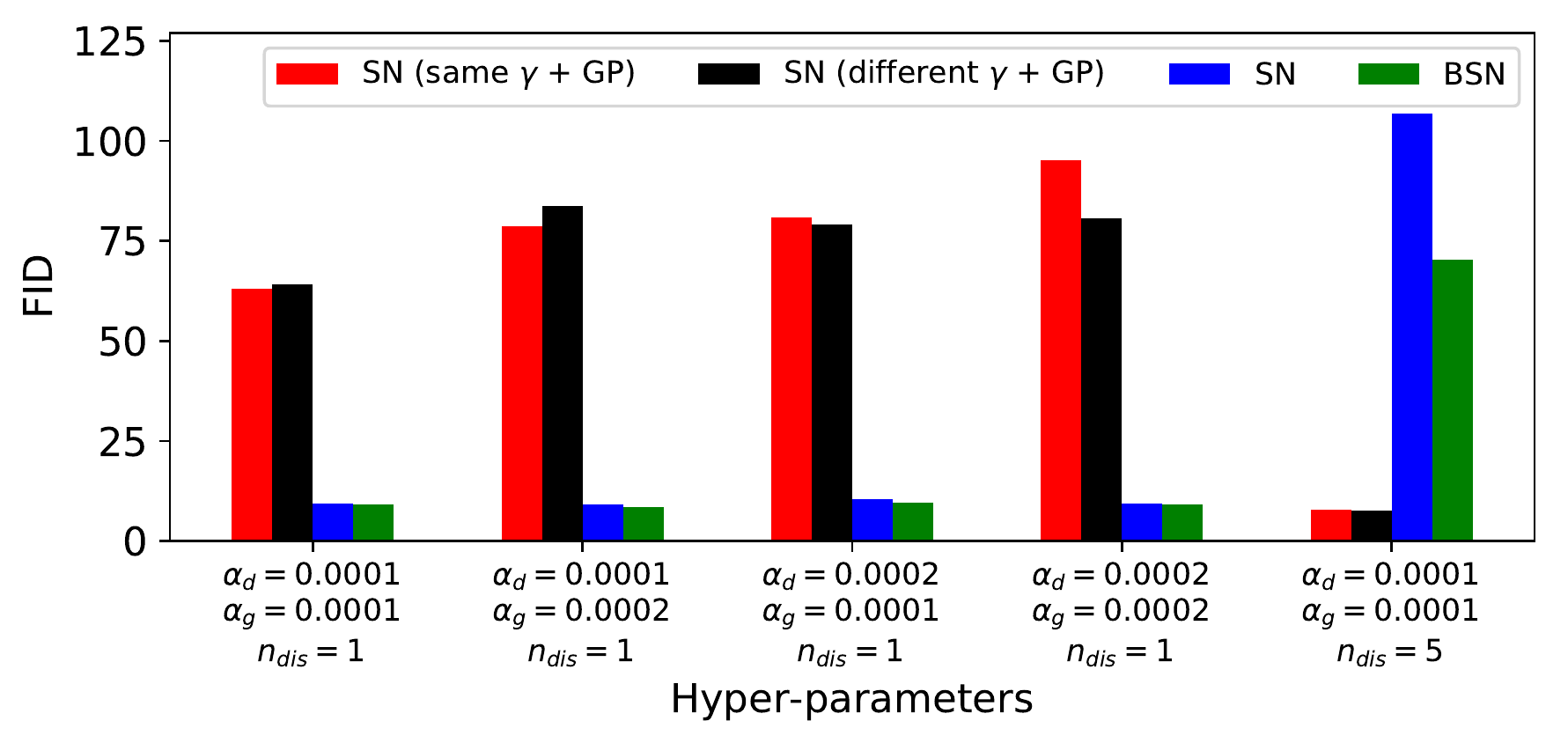}
		\caption{FID in \celeba{}. The results are averaged over 5 random seeds.}
		\label{fig:celeba-bar-fid}
\end{figure}

\subsection{Training Curves}
From \cref{sec:cifar_stl} we can see that SN (no $\gamma$) and \nameshort{} generally have the best performance. Therefore, in this section, we focus on comparing these two algorithms with the training curves. \cref{fig:celeba-g0.0001-d0.0001-ndis1-fid,fig:celeba-g0.0001-d0.0002-ndis1-fid,fig:celeba-g0.0002-d0.0001-ndis1-fid,fig:celeba-g0.0002-d0.0002-ndis1-fid,fig:celeba-g0.0001-d0.0001-ndis5-fid} show the FID of these two algorithms during training. Generally, we see that \nameshort{} converges slower than SN \emph{at the beginning of training}. However, as training proceeds, \nameshort{} finally has better metrics in all cases. 
Note that unlike \cifar{}, SN seems to be more stable in \celeba{} as its sample quality does not drop in most hyper-parameters.  
But the key conclusion is the same: in most cases, \nameshort{} not only outperforms SN at the end of training, but also outperforms the peak sample quality of SN during training (e.g. \cref{fig:celeba-g0.0001-d0.0001-ndis1-fid,fig:celeba-g0.0001-d0.0002-ndis1-fid,fig:celeba-g0.0002-d0.0001-ndis1-fid,fig:celeba-g0.0002-d0.0002-ndis1-fid}). 
The only exception is the $n_{dis}=5$ setting, where both SN and \nameshort{} has instability issue: the sample quality first improves and then significantly drops. But even in this case, \nameshort{} has better final performance than the standard SN.

\trainingcurvefid{\celeba{}}{celeba/314}{0.0001}{0.0001}{1}{celeba}
\trainingcurvefid{\celeba{}}{celeba/314}{0.0001}{0.0002}{1}{celeba}
\trainingcurvefid{\celeba{}}{celeba/314}{0.0002}{0.0001}{1}{celeba}
\trainingcurvefid{\celeba{}}{celeba/314}{0.0002}{0.0002}{1}{celeba}
\trainingcurvefid{\celeba{}}{celeba/314}{0.0001}{0.0001}{5}{celeba}

\subsection{Generated Images}
\cref{fig:celeba-SN_same_gamma-generated-samples,fig:celeba-SN_diff_gamma-generated-samples,fig:celeba-SN_no_gamma-generated-samples,fig:celeba-BSN-generated-samples} show the generated images from the run with the best FID for each algorithm.

    \begin{figure}
        \centering
        \includegraphics[width=0.6\linewidth]{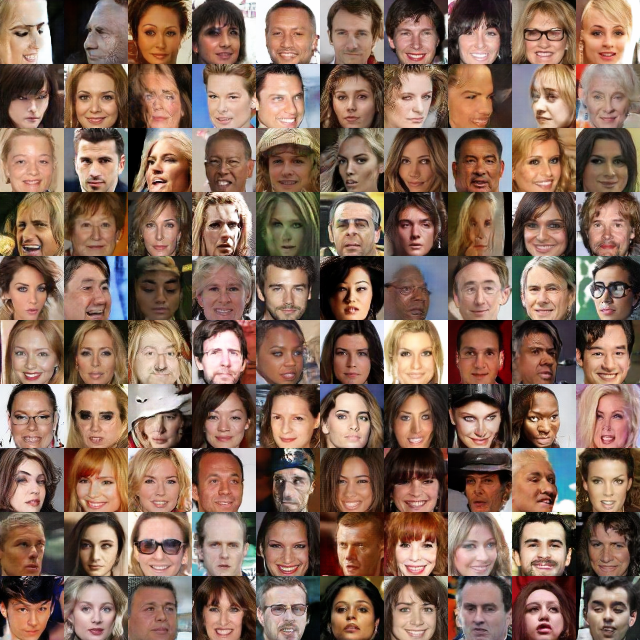}
        \caption{Generated samples from the best run of SN (same $\gamma$) in \celeba{}. 
            The hyper-parameters are: $\alpha_g=\protect\input{figure/sample_quality/celeba/314/0.0,sn-double-gamma_best_sample/SN_same_gamma/g_lr.txt}$,
            $\alpha_d=\protect\input{figure/sample_quality/celeba/314/0.0,sn-double-gamma_best_sample/SN_same_gamma/d_lr.txt}$,
            $n_{dis}=\protect\input{figure/sample_quality/celeba/314/0.0,sn-double-gamma_best_sample/SN_same_gamma/n_dis.txt}$.
            FID is $\protect\input{figure/sample_quality/celeba/314/0.0,sn-double-gamma_best_sample/SN_same_gamma/fid.txt}$.
        }
        \label{fig:celeba-SN_same_gamma-generated-samples}
    \end{figure}

    \begin{figure}
        \centering
        \includegraphics[width=0.6\linewidth]{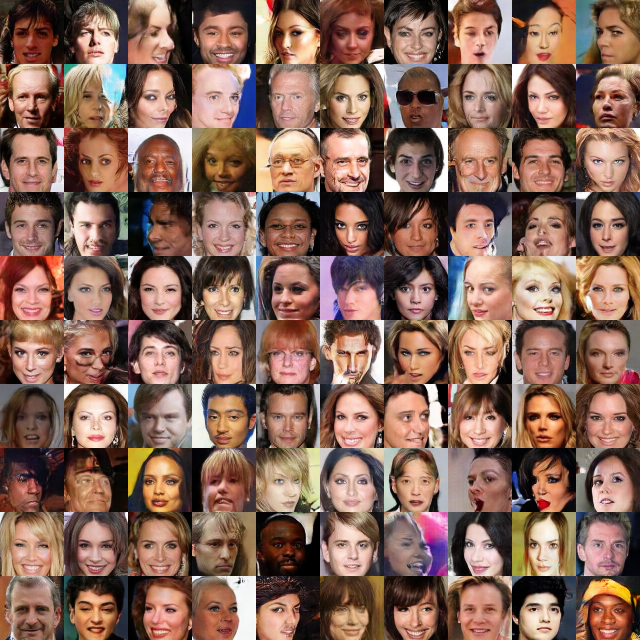}
        \caption{Generated samples from the best run of SN (diff. $\gamma$) in \celeba{}. 
            The hyper-parameters are: $\alpha_g=\protect\input{figure/sample_quality/celeba/314/0.0,sn-double-gamma_best_sample/SN_diff_gamma/g_lr.txt}$,
            $\alpha_d=\protect\input{figure/sample_quality/celeba/314/0.0,sn-double-gamma_best_sample/SN_diff_gamma/d_lr.txt}$,
            $n_{dis}=\protect\input{figure/sample_quality/celeba/314/0.0,sn-double-gamma_best_sample/SN_diff_gamma/n_dis.txt}$.
            FID is $\protect\input{figure/sample_quality/celeba/314/0.0,sn-double-gamma_best_sample/SN_diff_gamma/fid.txt}$.
        }
        \label{fig:celeba-SN_diff_gamma-generated-samples}
    \end{figure}

    \begin{figure}
        \centering
        \includegraphics[width=0.6\linewidth]{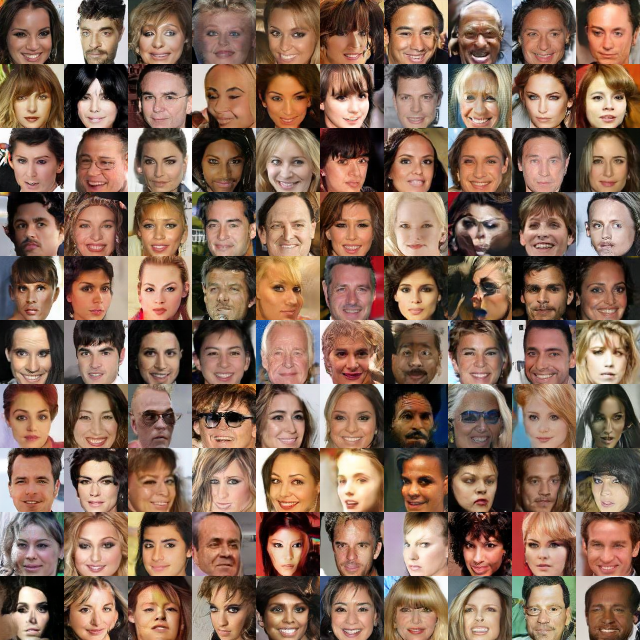}
        \caption{Generated samples from the best run of SN in \celeba{}. 
            The hyper-parameters are: $\alpha_g=\protect\input{figure/sample_quality/celeba/314/0.0,sn-double-gamma_best_sample/SN_no_gamma/g_lr.txt}$,
            $\alpha_d=\protect\input{figure/sample_quality/celeba/314/0.0,sn-double-gamma_best_sample/SN_no_gamma/d_lr.txt}$,
            $n_{dis}=\protect\input{figure/sample_quality/celeba/314/0.0,sn-double-gamma_best_sample/SN_no_gamma/n_dis.txt}$.
            FID is $\protect\input{figure/sample_quality/celeba/314/0.0,sn-double-gamma_best_sample/SN_no_gamma/fid.txt}$.
        }
        \label{fig:celeba-SN_no_gamma-generated-samples}
    \end{figure}

    \begin{figure}
        \centering
        \includegraphics[width=0.6\linewidth]{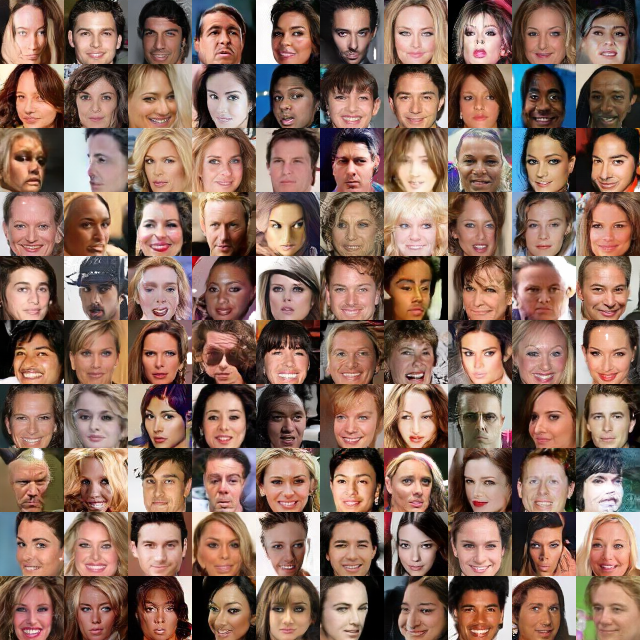}
        \caption{Generated samples from the best run of \nameshort{} in \celeba{}. 
            The hyper-parameters are: $\alpha_g=\protect\input{figure/sample_quality/celeba/314/0.0,sn-double-gamma_best_sample/BSN/g_lr.txt}$,
            $\alpha_d=\protect\input{figure/sample_quality/celeba/314/0.0,sn-double-gamma_best_sample/BSN/d_lr.txt}$,
            $n_{dis}=\protect\input{figure/sample_quality/celeba/314/0.0,sn-double-gamma_best_sample/BSN/n_dis.txt}$.
            FID is $\protect\input{figure/sample_quality/celeba/314/0.0,sn-double-gamma_best_sample/BSN/fid.txt}$.
        }
        \label{fig:celeba-BSN-generated-samples}
    \end{figure}

\clearpage

\section{Experimental Details and Additional Results on \ilsvrc{}}
\label{app:result_ilsvrc}

\subsection{Experimental Details}
The network architectures are shown in \cref{tbl:gen-network-imagenet,tbl:gen-resnet-network-imagenet,tbl:dis-network-imagenet,tbl:dis-resnet-down-network-imagenet,tbl:dis-resnet-first-network-imagenet,tbl:dis-resnet-network-imagenet}.
All experiments are run for 500k iterations. Discriminator batch size is 16. Generator batch size is 32. The optimizer is Adam. $\alpha_g=0.002, \alpha_d=0.002, \beta_1=0.0, \beta_2=0.9, n_{dis}=5$
We use hinge loss with the popular SN implementation \cite{miyato2018spectral}. 

\begin{table}
	\centering
	\begin{tabular}{c}
		\toprule
		$z\in \Rb^{128}\sim \Nc(0, I)$\\\hline
		Fully connected ($4 \times 4\times 1024$).\\\hline
		ResNet-up ($c=1024$).\\\hline
		ResNet-up ($c=512$).\\\hline
		ResNet-up ($c=256$).\\\hline
		ResNet-up ($c=128$).\\\hline
		ResNet-up ($c=64$).\\\hline
		BN. ReLU. Convolution ($c=3, k=3, s=1$). Tanh\\
		\bottomrule
	\end{tabular}
	\caption{Generator network architectures for \ilsvrc{} experiments (from \cite{miyato2018spectral}). BN stands for batch normalization. $c$ stands for number of channels. $k$ stands for kernel size. $s$ stands for stride.}
	\label{tbl:gen-network-imagenet}
\end{table}

\begin{table}
	\centering
	\begin{tabular}{c}
		\toprule
		\textbf{Direct connection}\\\hline
		BN. ReLU. Unpooling(2). Convolution ($k=3, s=1$). \\\hline
		BN. ReLU. Convolution ($k=3, s=1$). \\\hline\hline
		\textbf{Shortcut connection}\\\hline
		Unpooling(2). Convolution ($k=1, s=1$). \\
		\bottomrule
	\end{tabular}
	\caption{ResNet-up network architectures for \ilsvrc{} experiments (from \cite{miyato2018spectral}). BN stands for batch normalization. $k$ stands for kernel size. $s$ stands for stride.}
	\label{tbl:gen-resnet-network-imagenet}
\end{table}

\begin{table}
	\centering
	\begin{tabular}{c}
		\toprule
		$x\in \Rb^{128\times 128\times 3}$\\\hline
		ResNet-first ($c=64$).\\\hline
		ResNet-down ($c=128$).\\\hline
		ResNet-down ($c=256$).\\\hline
		ResNet-down ($c=512$).\\\hline
		ResNet-down ($c=1024$).\\\hline
		ResNet ($c=1024$).\\\hline
		ReLU. Global pooling. Fully connected (1).\\
		\bottomrule
	\end{tabular}
	\caption{Discriminator network architectures for \ilsvrc{} experiments (from \cite{miyato2018spectral}). BN stands for batch normalization. $c$ stands for number of channels. $k$ stands for kernel size. $s$ stands for stride.}
	\label{tbl:dis-network-imagenet}
\end{table}

\begin{table}
	\centering
	\begin{tabular}{c}
		\toprule
		\textbf{Direct connection}\\\hline
		ReLU. Convolution ($k=3, s=1$). \\\hline
		ReLU. Convolution ($k=3, s=1$). Average pooling(2).\\\hline\hline
		\textbf{Shortcut connection}\\\hline
		Convolution ($k=1, s=1$). Average pooling(2).\\
		\bottomrule
	\end{tabular}
	\caption{ResNet-down network architectures for \ilsvrc{} experiments (from \cite{miyato2018spectral}). $k$ stands for kernel size. $s$ stands for stride.}
	\label{tbl:dis-resnet-down-network-imagenet}
\end{table}

\begin{table}
	\centering
	\begin{tabular}{c}
		\toprule
		\textbf{Direct connection}\\\hline
		Convolution ($k=3, s=1$). \\\hline
		ReLU. Convolution ($k=3, s=1$). Average pooling(2).\\\hline\hline
		\textbf{Shortcut connection}\\\hline
		Average pooling(2). Convolution ($k=1, s=1$).\\
		\bottomrule
	\end{tabular}
	\caption{ResNet-first network architectures for \ilsvrc{} experiments (from \cite{miyato2018spectral}). $k$ stands for kernel size. $s$ stands for stride.}
	\label{tbl:dis-resnet-first-network-imagenet}
\end{table}

\begin{table}
	\centering
	\begin{tabular}{c}
		\toprule
		\textbf{Direct connection}\\\hline
		ReLU. Convolution ($k=3, s=1$). \\\hline
		ReLU. Convolution ($k=3, s=1$). \\\hline\hline
		\textbf{Shortcut connection}\\\hline
		Convolution ($k=1, s=1$). \\
		\bottomrule
	\end{tabular}
	\caption{ResNet network architectures for \ilsvrc{} experiments (from \cite{miyato2018spectral}). $k$ stands for kernel size. $s$ stands for stride.}
	\label{tbl:dis-resnet-network-imagenet}
\end{table}

\subsection{Training Curves}
\cref{fig:imagenet-curve-inception,fig:imagenet-curve-fid} show the inception score and FID of SN and \nameshort{} during training. 

For SN, we can see that the runs with scale=1.0/1.2/1.4 have similar performance throughout training. When scale=1.6, the performance is much worse.

For \nameshort{}, the runs with scale=1.2/1.4 perform better than SN runs throughout the training. When scale=1.6, \nameshort{} has similar performance as SN at the early stage of training, and is slightly better at the end. When scale=1.0, the performance is very bad as there is gradient vanishing problem.
\begin{figure}
	\centering
	\includegraphics[width=0.45\linewidth]{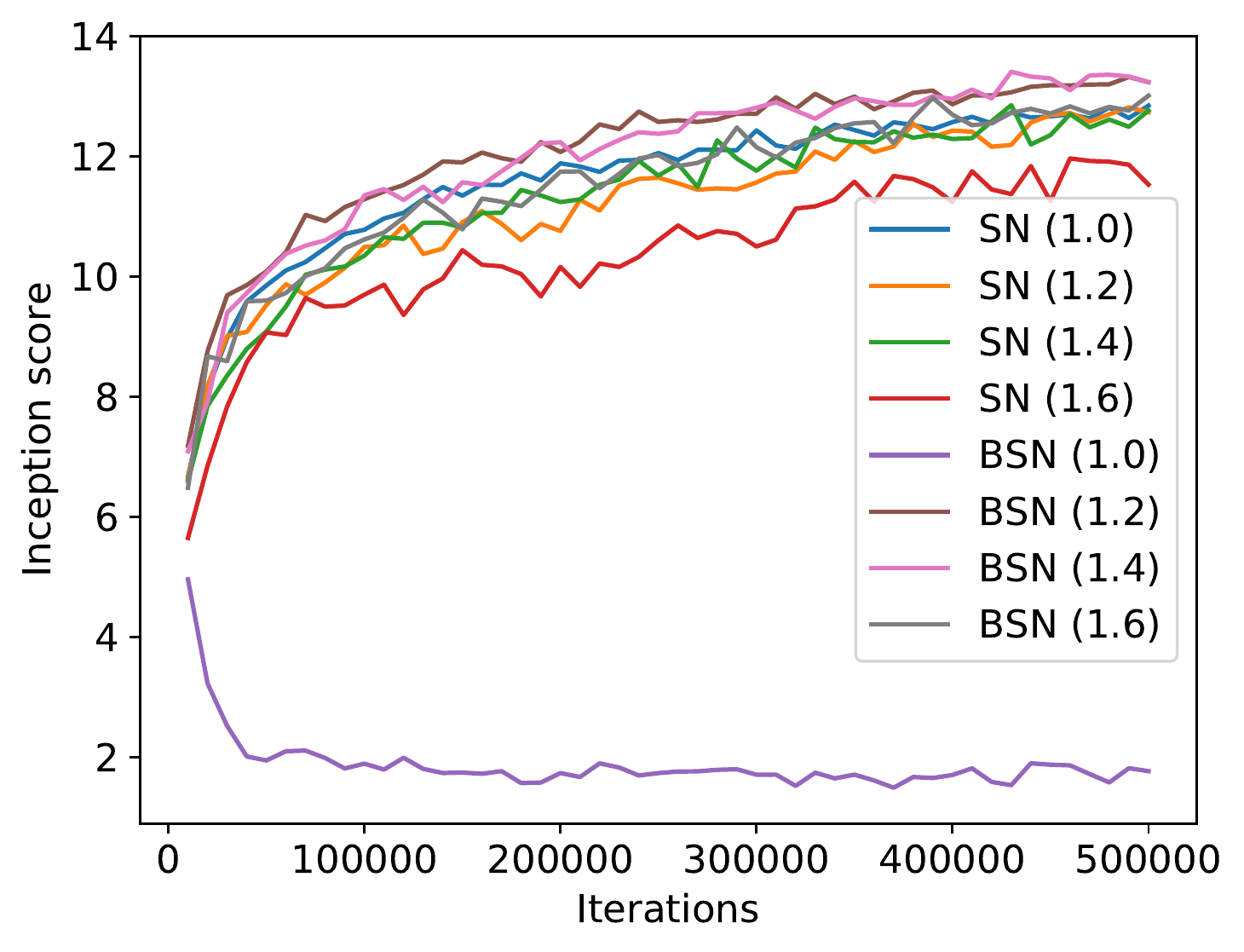}
	\caption{Inception score in \ilsvrc{}. The results are averaged over 5 random seeds.}
	\label{fig:imagenet-curve-inception}
\end{figure}
\begin{figure}
	\centering
	\includegraphics[width=0.45\linewidth]{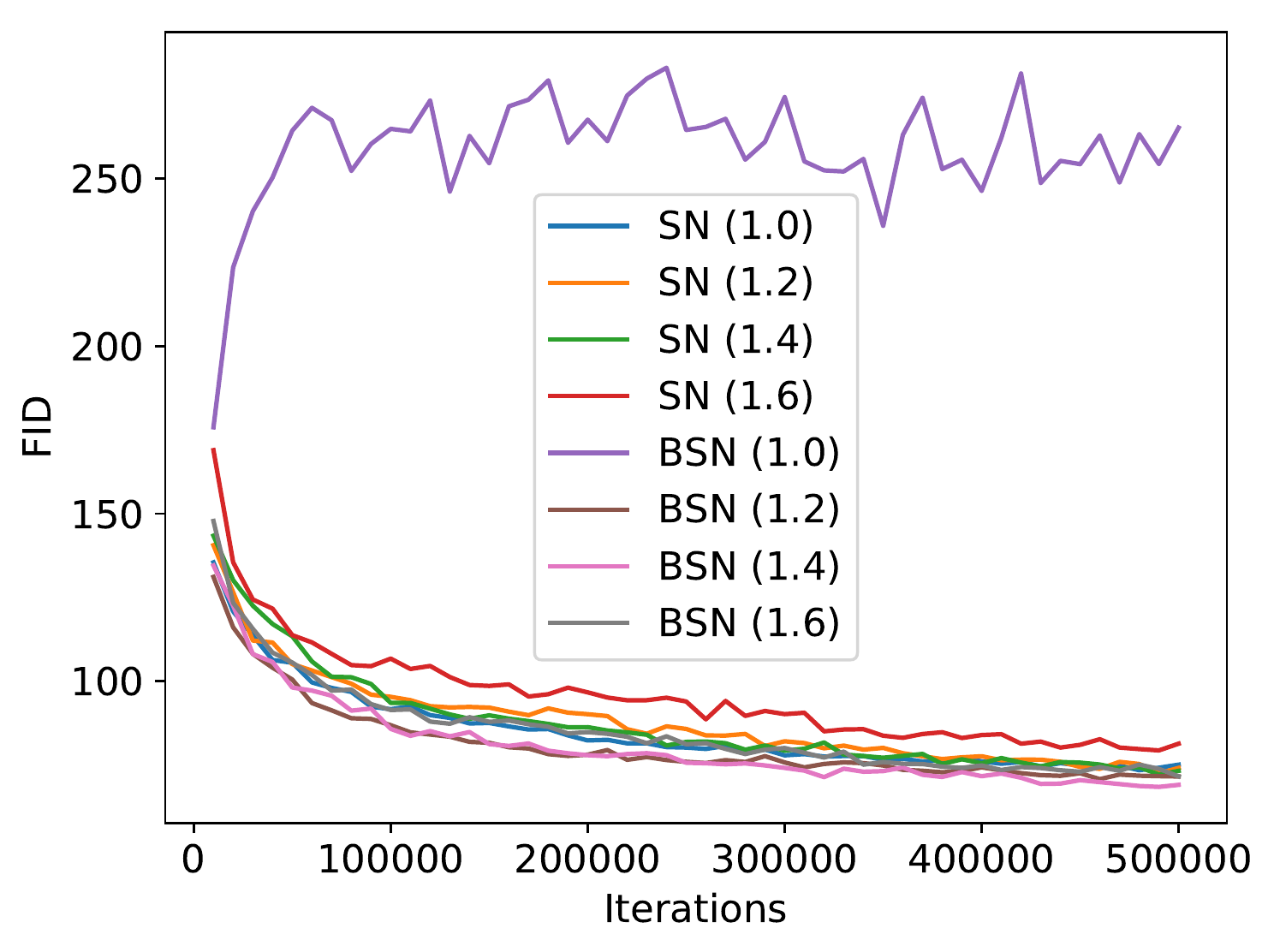}
	\caption{FID in \ilsvrc{}. The results are averaged over 5 random seeds.}
	\label{fig:imagenet-curve-fid}
\end{figure}

\subsection{Generated Images}

\cref{fig:imagenet-SN_1.0-generated-samples,fig:imagenet-SN_1.2-generated-samples,fig:imagenet-SN_1.4-generated-samples,fig:imagenet-SN_1.6-generated-samples,fig:imagenet-BSN_1.0-generated-samples,fig:imagenet-BSN_1.2-generated-samples,fig:imagenet-BSN_1.4-generated-samples,fig:imagenet-BSN_1.6-generated-samples} show the generated images from the run with the best inception score for SN and \nameshort{} with different scale parameters.

    \begin{figure}
        \centering
        \includegraphics[width=0.6\linewidth]{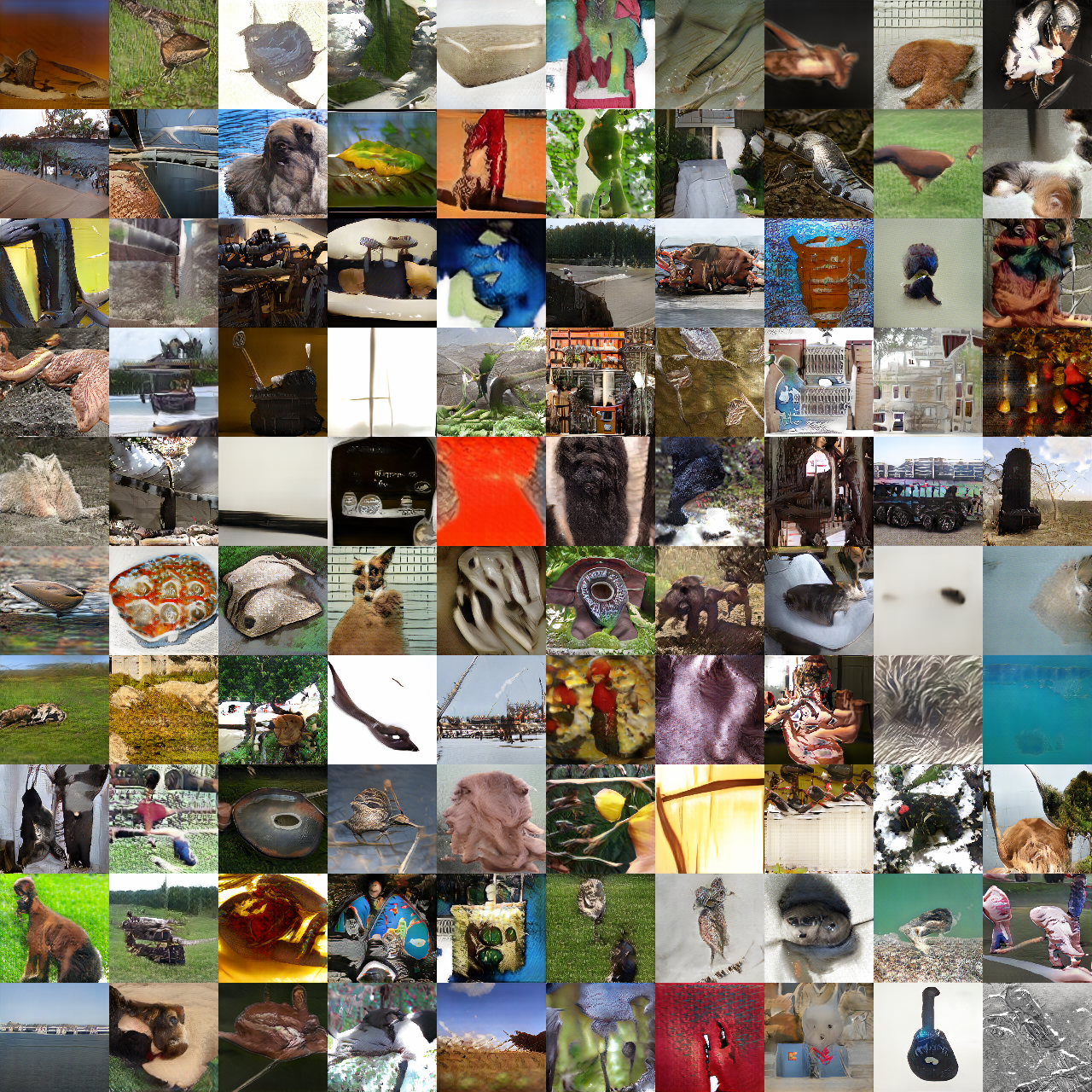}
        \caption{Generated samples from the best run of SN (scale=1.0) in \ilsvrc{}. 
            Inception score is $\protect\input{figure/sample_quality/imagenet/303/0.0,sn-double_best_sample/SN_1.0/inception_score_mean.txt}$.
            FID is $\protect\input{figure/sample_quality/imagenet/303/0.0,sn-double_best_sample/SN_1.0/fid.txt}$.
        }
        \label{fig:imagenet-SN_1.0-generated-samples}
    \end{figure}

    \begin{figure}
        \centering
        \includegraphics[width=0.6\linewidth]{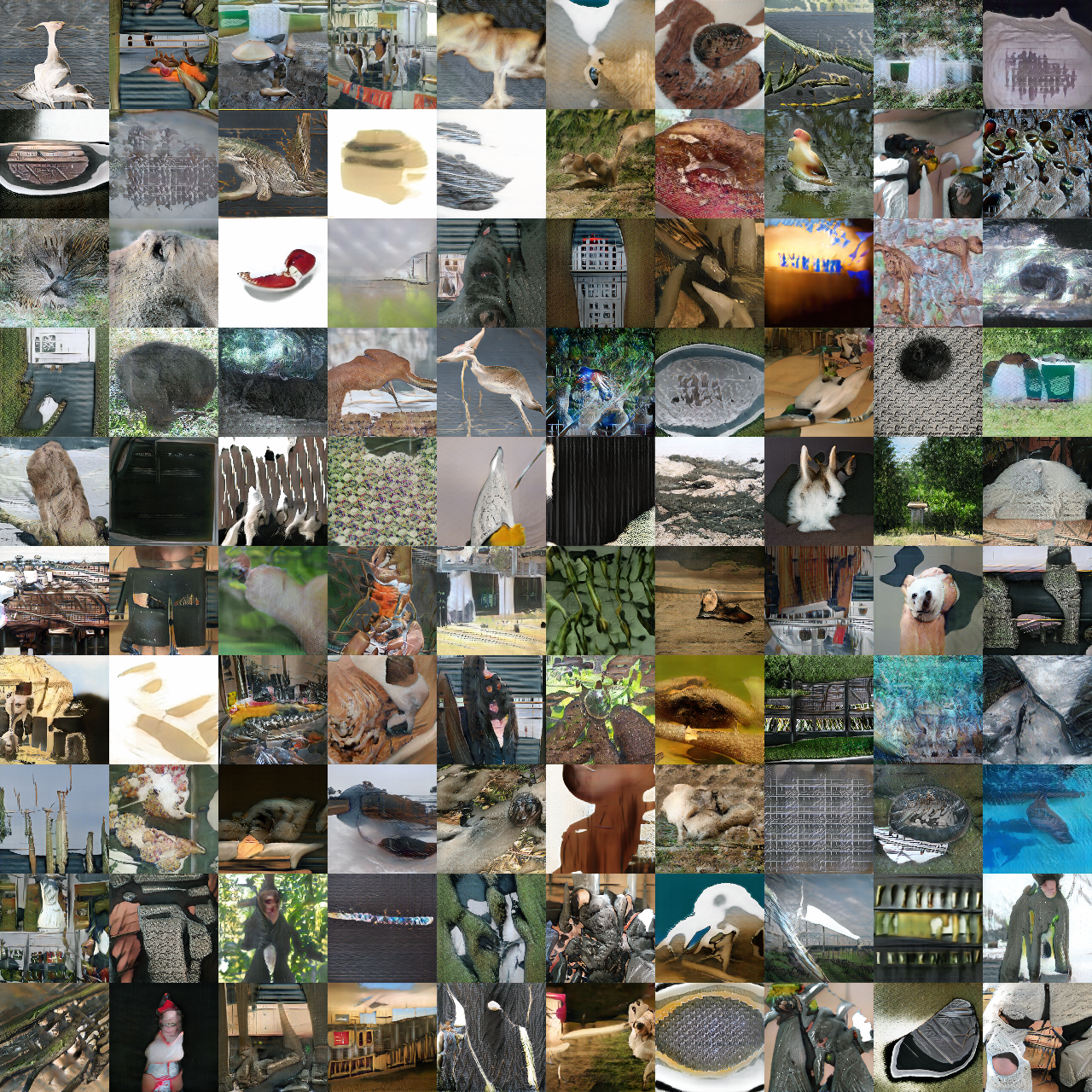}
        \caption{Generated samples from the best run of SN (scale=1.2) in \ilsvrc{}. 
            Inception score is $\protect\input{figure/sample_quality/imagenet/303/0.0,sn-double_best_sample/SN_1.2/inception_score_mean.txt}$.
            FID is $\protect\input{figure/sample_quality/imagenet/303/0.0,sn-double_best_sample/SN_1.2/fid.txt}$.
        }
        \label{fig:imagenet-SN_1.2-generated-samples}
    \end{figure}

    \begin{figure}
        \centering
        \includegraphics[width=0.6\linewidth]{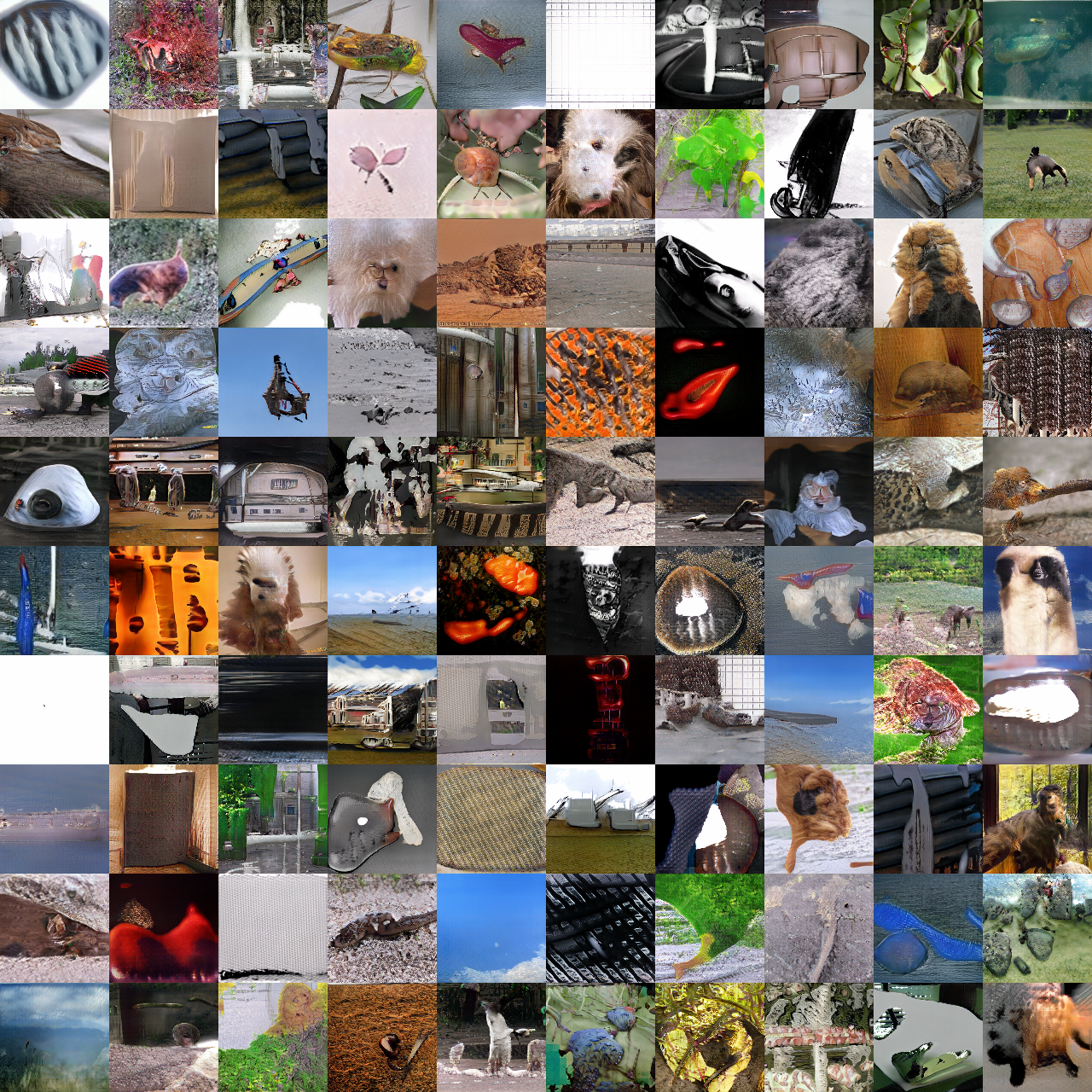}
        \caption{Generated samples from the best run of SN (scale=1.4) in \ilsvrc{}. 
            Inception score is $\protect\input{figure/sample_quality/imagenet/303/0.0,sn-double_best_sample/SN_1.4/inception_score_mean.txt}$.
            FID is $\protect\input{figure/sample_quality/imagenet/303/0.0,sn-double_best_sample/SN_1.4/fid.txt}$.
        }
        \label{fig:imagenet-SN_1.4-generated-samples}
    \end{figure}

    \begin{figure}
        \centering
        \includegraphics[width=0.6\linewidth]{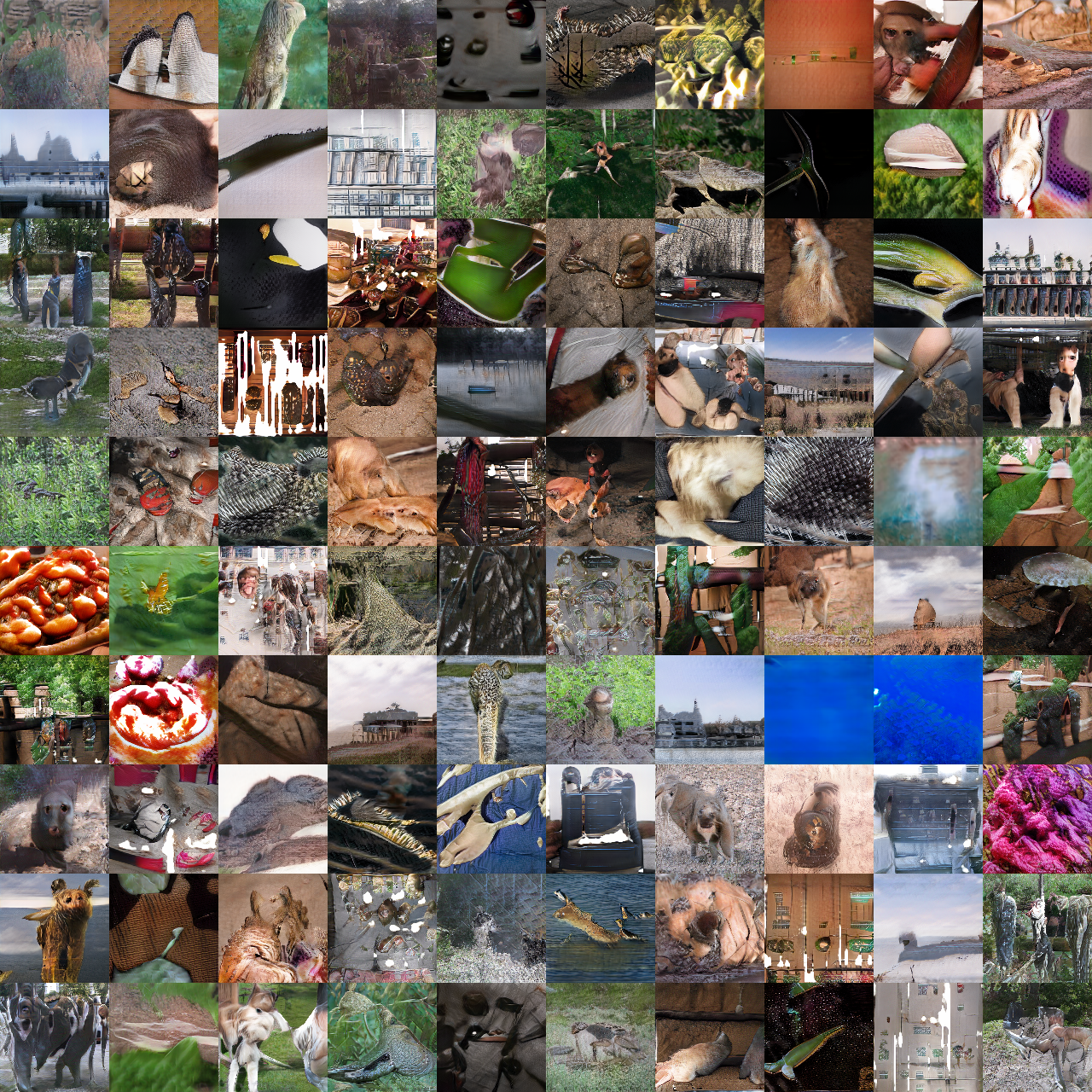}
        \caption{Generated samples from the best run of SN (scale=1.6) in \ilsvrc{}. 
            Inception score is $\protect\input{figure/sample_quality/imagenet/303/0.0,sn-double_best_sample/SN_1.6/inception_score_mean.txt}$.
            FID is $\protect\input{figure/sample_quality/imagenet/303/0.0,sn-double_best_sample/SN_1.6/fid.txt}$.
        }
        \label{fig:imagenet-SN_1.6-generated-samples}
    \end{figure}

    \begin{figure}
        \centering
        \includegraphics[width=0.6\linewidth]{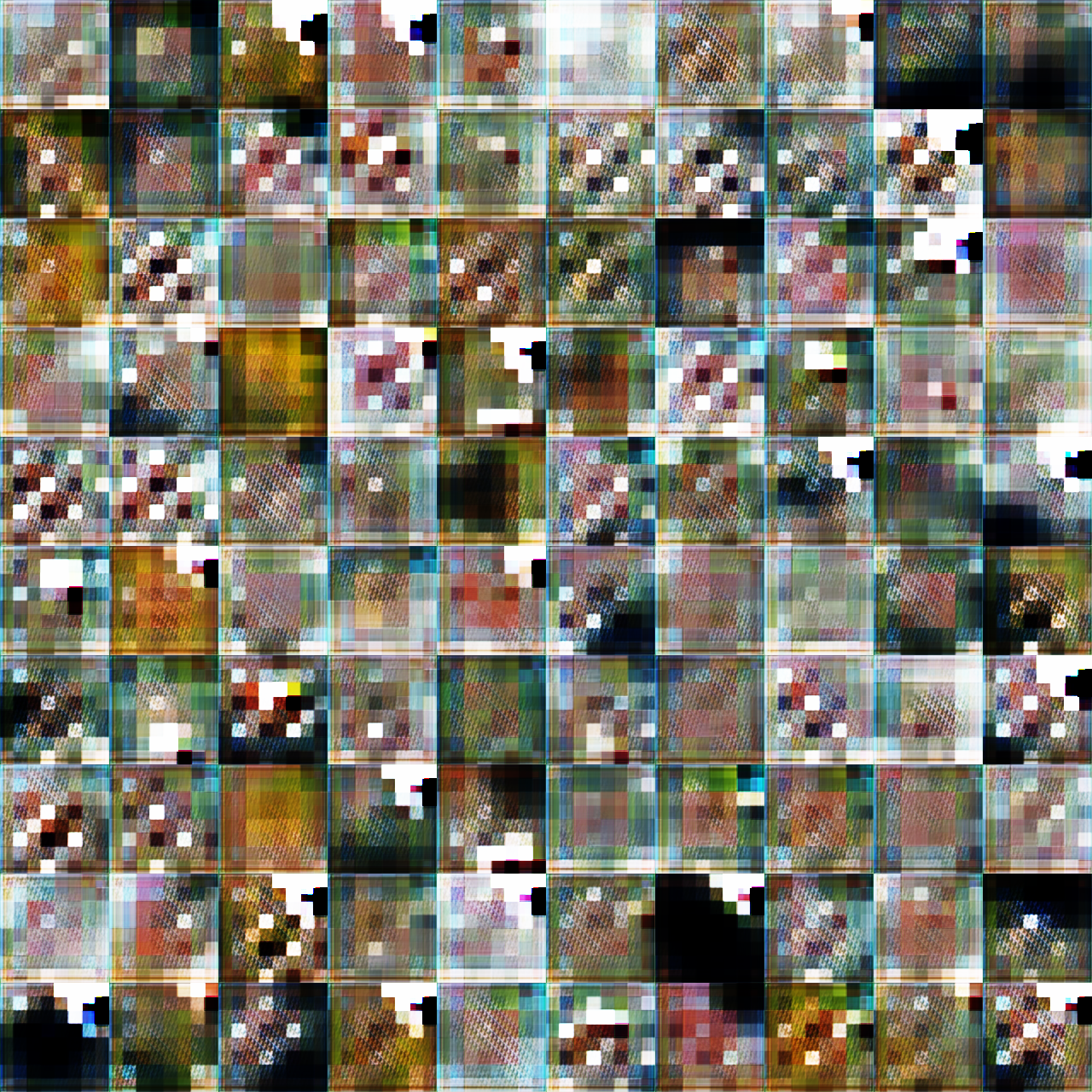}
        \caption{Generated samples from the best run of \nameshort{} (scale=1.0) in \ilsvrc{}. 
            Inception score is $\protect\input{figure/sample_quality/imagenet/303/0.0,sn-double_best_sample/BSN_1.0/inception_score_mean.txt}$.
            FID is $\protect\input{figure/sample_quality/imagenet/303/0.0,sn-double_best_sample/BSN_1.0/fid.txt}$.
        }
        \label{fig:imagenet-BSN_1.0-generated-samples}
    \end{figure}

    \begin{figure}
        \centering
        \includegraphics[width=0.6\linewidth]{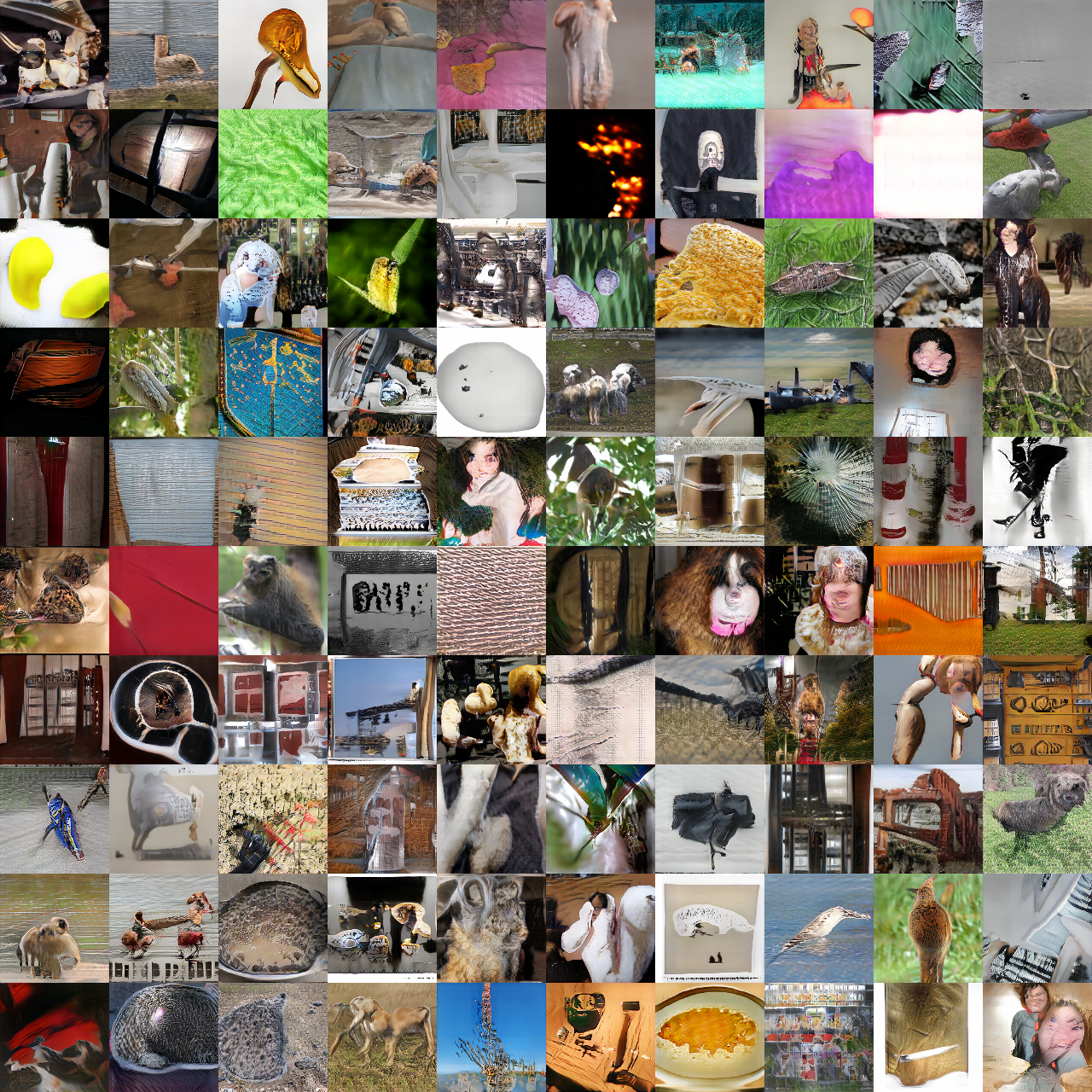}
        \caption{Generated samples from the best run of \nameshort{} (scale=1.2) in \ilsvrc{}. 
            Inception score is $\protect\input{figure/sample_quality/imagenet/303/0.0,sn-double_best_sample/BSN_1.2/inception_score_mean.txt}$.
            FID is $\protect\input{figure/sample_quality/imagenet/303/0.0,sn-double_best_sample/BSN_1.2/fid.txt}$.
        }
        \label{fig:imagenet-BSN_1.2-generated-samples}
    \end{figure}

    \begin{figure}
        \centering
        \includegraphics[width=0.6\linewidth]{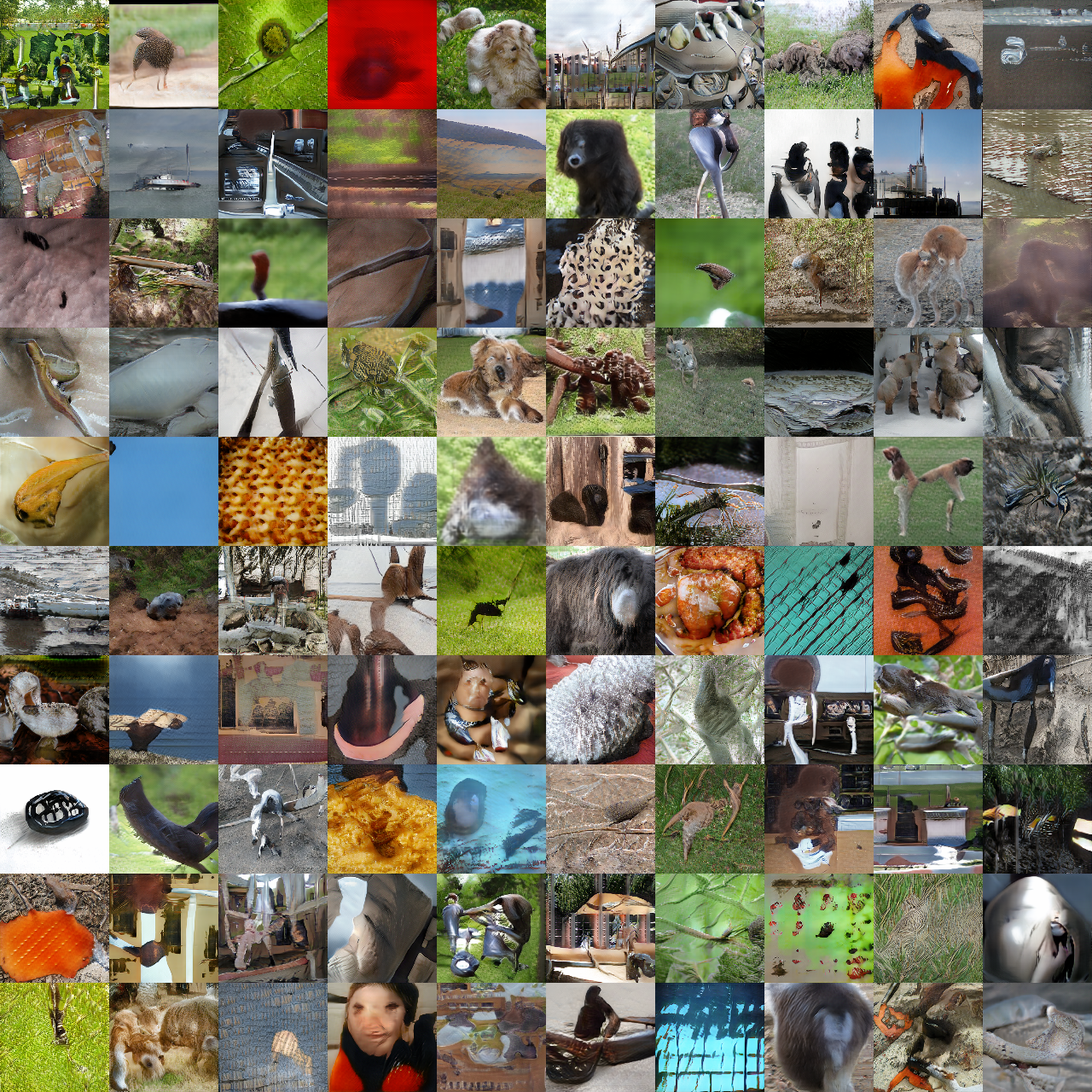}
        \caption{Generated samples from the best run of \nameshort{} (scale=1.4) in \ilsvrc{}. 
            Inception score is $\protect\input{figure/sample_quality/imagenet/303/0.0,sn-double_best_sample/BSN_1.4/inception_score_mean.txt}$.
            FID is $\protect\input{figure/sample_quality/imagenet/303/0.0,sn-double_best_sample/BSN_1.4/fid.txt}$.
        }
        \label{fig:imagenet-BSN_1.4-generated-samples}
    \end{figure}

    \begin{figure}
        \centering
        \includegraphics[width=0.6\linewidth]{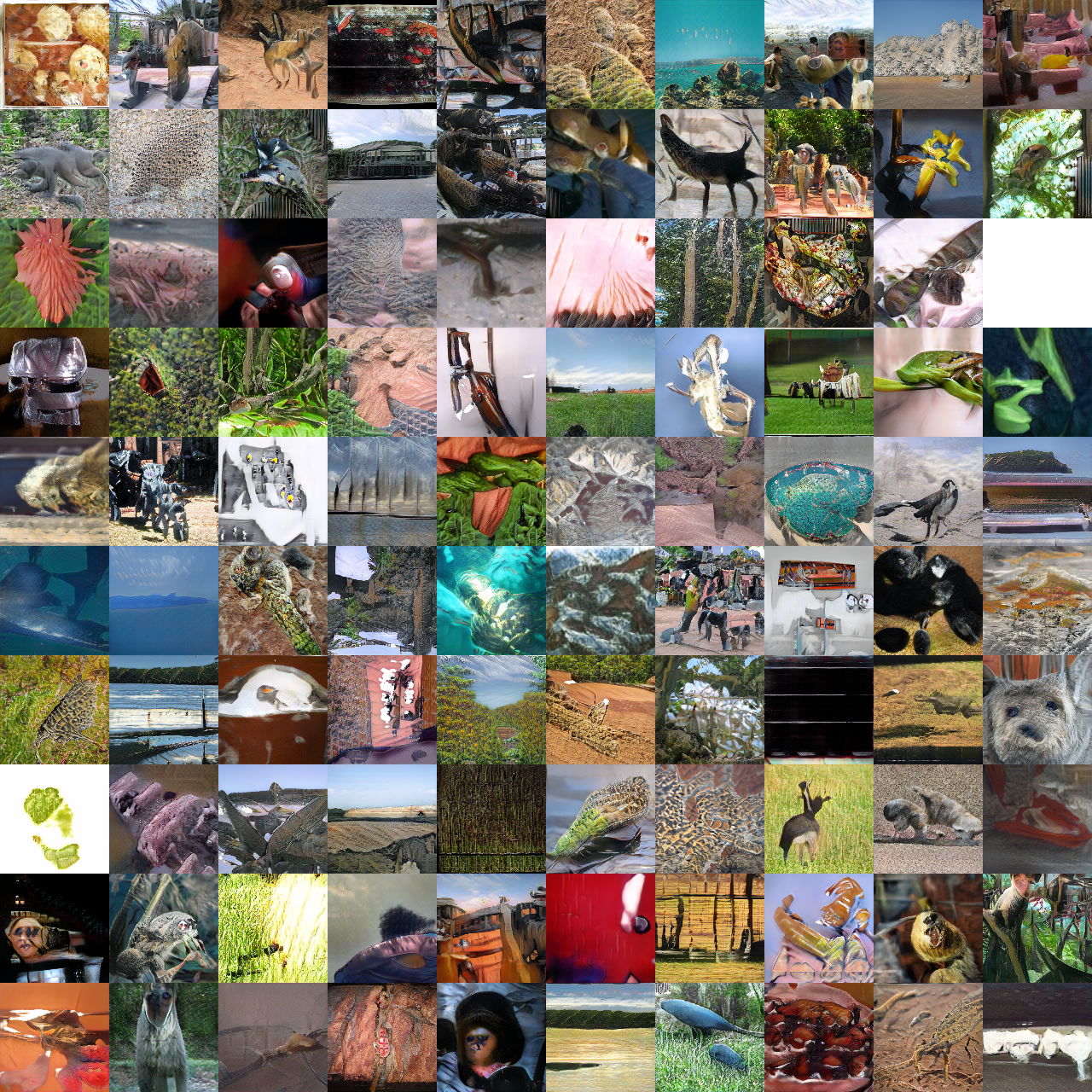}
        \caption{Generated samples from the best run of \nameshort{} (scale=1.6) in \ilsvrc{}. 
            Inception score is $\protect\input{figure/sample_quality/imagenet/303/0.0,sn-double_best_sample/BSN_1.6/inception_score_mean.txt}$.
            FID is $\protect\input{figure/sample_quality/imagenet/303/0.0,sn-double_best_sample/BSN_1.6/fid.txt}$.
        }
        \label{fig:imagenet-BSN_1.6-generated-samples}
    \end{figure}

\clearpage

\end{document}